\theoremstyle{plain}
\newtheorem{theorem}{Theorem}[section]
\newtheorem{proposition}[theorem]{Proposition}
\newtheorem{lemma}[theorem]{Lemma}
\theoremstyle{definition}
\newtheorem{definition}[theorem]{Definition}
\theoremstyle{remark}
\newtheorem{remark}[theorem]{Remark}
\def\eqref#1{equation~\ref{#1}}
\def\1{\bm{1}}
\def\vzero{{\bm{0}}}
\def\vmu{{\bm{\mu}}}
\def\vrho{{\bm{\rho}}}
\def\vm{{\bm{m}}}
\def\vx{{\bm{x}}}
\def\vy{{\bm{y}}}
\def\vz{{\bm{z}}}
\def\evm{{m}}
\def\evy{{y}}
\def\mC{{\bm{C}}}
\def\mG{{\bm{G}}}
\def\mm{{\bm{m}}}
\def\mZ{{\bm{Z}}}
\def\eye{{\bm{I}}}
\def\mmu{{\bm{\mu}}}
\DeclareMathAlphabet{\mathsfit}{\encodingdefault}{\sfdefault}{m}{sl}
\SetMathAlphabet{\mathsfit}{bold}{\encodingdefault}{\sfdefault}{bx}{n}
\def\sN{{\mathbb{N}}}
\def\sR{{\mathbb{R}}}
\def\emm{{m}}
\renewcommand{\epsilon}{\varepsilon}
\newcommand{\dd}{\mathrm{d}}
\title{Sampling-Free Privacy Accounting\\for Matrix Mechanisms under Random Allocation}
\author{%
  Jan Schuchardt\\
  Machine Learning Research, Morgan Stanley\\
  \texttt{jan.a.schuchardt@morganstanley.com}\\
  \And
  Nikita Kalinin \\
  Institute of Science and Technology Austria \\
  \texttt{nikita.kalinin@ist.ac.at} \\
  % \AND
  % Coauthor \\
  % Affiliation \\
  % Address \\
  % \texttt{email} \\
  % \And
  % Coauthor \\
  % Affiliation \\
  % Address \\
  % \texttt{email} \\
  % \And
  % Coauthor \\
  % Affiliation \\
  % Address \\
  % \texttt{email} \\
}
\begin{document}

\maketitle

\begin{abstract}
 We study privacy amplification for differentially private model training with matrix factorization under random allocation (also known as the balls-in-bins model). Recent work by Choquette-Choo et al. (2025) proposes a sampling-based Monte Carlo  approach to compute amplification parameters in this setting.
 However, their guarantees either only hold with some high probability or require random abstention by the mechanism.
 Furthermore, the required number of samples for ensuring $(\epsilon,\delta)$-DP is inversely proportional to $\delta$.
 In contrast, we develop sampling-free bounds based on Rényi divergence and conditional composition. 
 The former
 is facilitated by a dynamic programming formulation to efficiently compute the bounds. 
 The latter complements it by offering stronger privacy guarantees for small $\epsilon$, where R\'enyi divergence bounds inherently lead to an over-approximation.
 Our framework applies to arbitrary banded and non-banded matrices. Through numerical comparisons, we demonstrate the efficacy of our approach across a broad range of matrix mechanisms used in research and practice.
\end{abstract}

\section{Introduction}

Differential privacy (DP) \citep{dwork2006differential} offers one of the most well-established and mathematically rigorous definitions of privacy in statistical learning. In the context of deep learning, it is typically enforced via differentially private stochastic gradient descent (DP-SGD) \citep{abadi2016deep}. In DP-SGD, per-example gradients are clipped to bound the contribution of any individual data point, and Gaussian noise is added to guarantee privacy. While the injected noise inevitably degrades model utility, this loss can be mitigated through privacy \textit{amplification by subsampling}~\cite{li2012sampling}. Specifically, by randomly sampling a subset of the data at each iteration, we can achieve the same privacy guarantees with less additive noise. This improvement arises from the adversary’s uncertainty about which examples are included in each batch, introducing additional randomness that strengthens the overall privacy guarantees.

Several subsampling schemes have been studied in the context of differentially private training. One of the most commonly used is Poisson subsampling \citep{abadi2016deep, mironov2017renyi, koskela2020computing, zhu2022optimal}, in which each data point is independently included in a batch with a fixed probability chosen to achieve a desired expected batch size. However, this scheme poses practical challenges for implementation \citep{lebeda2025avoiding, beltran2024towards}. In particular, for large datasets and long training runs, two issues arise. First, drawing an independent random subset at every iteration is often computationally infeasible. Second, subsampling can lead to uneven data coverage, with a substantial fraction of examples (approximately $1/e$) not appearing within an epoch, and may increase the variance of gradients~\cite{feldman2025privacy}. As a result, a common but problematic practice is to train on shuffled data while reporting privacy guarantees derived for Poisson subsampling \citep{lebeda2025avoiding}.

A principled alternative that has emerged to address this issue is the  ``balls-in-bins'' scheme \citep{chua2024balls, choquette2024near}, also known as balanced iteration subsampling~\cite{dong2025leveraging} or as random allocation \citep{feldman2025privacy,feldman2026efficient}. This approach predefines batches for an entire epoch and reuses them throughout training, ensuring a fixed number of participations per data point. As a result, it mitigates both the per-iteration sampling overhead and the uneven within-epoch data coverage, while requiring a noise multiplier comparable to that of Poisson subsampling to attain the same privacy guarantees. 

Beyond better subsampling schemes, another principled approach to improve upon the utility DP-SGD is to introduce correlated noise. The key idea is to add more noise at each iteration while carefully correlating it across steps so that the aggregate noise has lower variance than in standard DP-SGD, thereby improving model utility. The most common way to implement such noise correlations is through the matrix factorization mechanism \citep{li2015matrix, denisov2022improved, choquette2023multi, choquette-choo2023amplified}, where Gaussian noise $Z$ is correlated across iterations via the linear transformation $C^{-1}$, i.e., sampling $C^{-1}Z$, and then adding it to the gradients.
Matrix factorization, however, can be impractical at scale, motivating memory-efficient structured variants. Prior work proposes banded matrix factorizations~\citep{kalinin2024banded, mckenna2024scaling}, where the strategy matrix $C$ is banded and Toeplitz; banded-inverse factorizations~\citep{kalinin2025back}, where $C^{-1}$ is banded; and hybrid constructions such as the Buffered Linear Toeplitz (BLT) matrix~\citep{dvijotham2024efficient, mcmahan2024hassle}. These designs enable correlated-noise mechanisms with substantially reduced memory requirements.

Privacy amplification by subsampling and noise correlation techniques can be combined. This has been demonstrated for banded matrix mechanisms under Poisson subsampling \citep{choquette-choo2023amplified, choquette2023privacy}, and more recently for general correlation matrices using balls-in-bins subsampling \citep{choquette2024near}. The latter, however, relies on Monte Carlo estimation, which can require an intractably large number of samples in the high-privacy regime and thus become computationally expensive. Moreover, it does not directly yield deterministic DP guarantees, but rather privacy bounds that only hold with high probability.

Motivated by the recent independent works of~\citet{feldman2025privacy,feldman2026efficient} and~\citet{dong2025leveraging}, who derived privacy bounds for DP-SGD under balls-in-bins batching without Monte Carlo sampling, we propose two procedures for computing amplification bounds for matrix mechanisms under this batching scheme. Our methods avoid Monte Carlo sampling and instead compute the bounds explicitly, leveraging R\'enyi divergence via dynamic programming
as well as conditional composition while accounting for the inter-step correlations unique to matrix mechanisms.

\textbf{Contribution}
\begin{itemize}[nosep]
    \item We introduce deterministic, Rényi divergence based and conditional composition based privacy analyses (accountants) for subsampled matrix mechanisms.
    %\item Our privacy accountants apply to arbitrary banded and non-banded matrices.
    \item For the special case of DP-SGD, we improve the computational complexity of R\'enyi accounting under the random allocation (balls-in-bins) batching scheme.
\end{itemize}

\section{Background}

\subsection{Definitions of Differential Privacy}\label{section:privacy_definitions}

\textit{Differential Privacy (DP)}  is a widely accepted framework for providing rigorous privacy guarantees. We recall its definition below:

\begin{definition}[Differential Privacy \citep{dwork2006differential}]
A randomized mechanism is said to provide $(\varepsilon, \delta)$-differential privacy under a neighboring relation $\simeq$ if, for all datasets $D \simeq D'$, and for all measurable subsets $S$ of the mechanism's output space, 
$
\Pr[M(D) \in S] \leq e^{\varepsilon} \cdot \Pr[M(D') \in S] + \delta.
$\end{definition}
In the following, we  write $M(\cdot \mid D)$ for the distribution of random variable $M(D)$.
We further use 
an equivalent definition of differential privacy that is more convenient for privacy accounting, i.e., tracking privacy over multiple applications of a mechanism.
\begin{lemma}[Proposition 2 from~\cite{barthe2013beyond}]
    A mechanism provides $(\epsilon, \delta)$-DP under $\simeq$ if and only if, for all $D \simeq D'$,
    $
        H_{e^\epsilon}\left(M(\cdot \mid D), M(\cdot \mid D')\right) \leq \delta
    $
    with hockey-stick divergence
    \begin{equation}\label{definition:hockey_stick_divergence}
        H_{\gamma}(P|| Q) = \mathbb{E}_{x \sim P} \left[ \max\left\{ 1 - \gamma \frac{\dd Q}{\dd P}(x),\; 0 \right\} \right].
    \end{equation}
\end{lemma}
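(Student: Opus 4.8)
The plan is to prove the two implications of the stated equivalence by reducing $(\epsilon,\delta)$-DP to a supremum over events and then identifying that supremum with the hockey-stick divergence. Fix a neighboring pair $D \simeq D'$ and write $P = M(\cdot \mid D)$, $Q = M(\cdot \mid D')$, and $\gamma = e^{\epsilon}$. By Definition of DP, $(\epsilon,\delta)$-DP for this ordered pair is precisely the statement that $P(S) - \gamma Q(S) \le \delta$ for every measurable event $S$, i.e.\ $\sup_S \{P(S) - \gamma Q(S)\} \le \delta$. Hence it suffices to show $\sup_S \{P(S) - \gamma Q(S)\} = H_{\gamma}(P \| Q)$, after which the ``for all $D \simeq D'$'' quantifier in the two formulations matches verbatim.

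First I would introduce a common dominating measure, e.g.\ $\mu = P + Q$, and let $p = \dd P/\dd\mu$ and $q = \dd Q/\dd\mu$ be the corresponding densities, which exist by Radon--Nikodym. For any measurable $S$ we then have $P(S) - \gamma Q(S) = \int_S (p - \gamma q)\, \dd\mu$. This is maximized set-wise by including exactly the points where the integrand is positive, so with the measurable set $S^\star = \{x : p(x) > \gamma q(x)\}$,
\begin{equation*}
    P(S) - \gamma Q(S) \;\le\; \int_{S^\star} (p - \gamma q)\, \dd\mu \;=\; \int \max\{p - \gamma q,\, 0\}\, \dd\mu ,
\end{equation*}
with equality at $S = S^\star$. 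Rewriting the right-hand side as $\int \max\{1 - \gamma q/p,\, 0\}\, p\, \dd\mu$ (the set $\{p=0\}$ has $P$-measure zero, so it contributes nothing) and recognizing $p\,\dd\mu$ as integration against $P$ with $\dd Q/\dd P = q/p$ holding $P$-a.e., this equals $\E_{x \sim P}\big[\max\{1 - \gamma \tfrac{\dd Q}{\dd P}(x),\, 0\}\big] = H_{\gamma}(P \| Q)$. This establishes $\sup_S\{P(S) - \gamma Q(S)\} = H_{\gamma}(P\|Q)$.

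Then the two directions follow immediately. If the mechanism is $(\epsilon,\delta)$-DP, plugging the witnessing event $S = S^\star$ into the DP inequality yields $H_{\gamma}(P\|Q) = P(S^\star) - \gamma Q(S^\star) \le \delta$. Conversely, if $H_{\gamma}(P\|Q) \le \delta$, then for every measurable $S$ we have $P(S) - \gamma Q(S) \le \sup_{S'}\{P(S') - \gamma Q(S')\} = H_{\gamma}(P\|Q) \le \delta$, which is exactly the DP inequality. Quantifying over all $D \simeq D'$ closes the argument.

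The only real subtlety — and the part I would be most careful with — is the measure-theoretic bookkeeping around $\dd Q/\dd P$: when $P \not\ll Q$, one must check that the singular part of $Q$ relative to $P$ is irrelevant (it contributes nothing to the $P$-expectation, and one may take the convention $\dd Q/\dd P = +\infty$ there, so the $\max$ clips it to $0$), that $S^\star$ is a genuine measurable event, and that the step bounding $\int_S(p-\gamma q)\,\dd\mu$ by $\int \max\{p-\gamma q,0\}\,\dd\mu$ is just the pointwise optimality of the indicator of $S^\star$, not an exchange-of-limits argument. None of this is difficult, but it is where a careless proof would slip.
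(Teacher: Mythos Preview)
Your proof is correct and is the standard argument for this well-known equivalence. The paper itself does not prove this lemma at all; it simply cites it as Proposition~2 of Barthe et~al.\ (2013) and moves on, so there is no ``paper's own proof'' to compare against. Your treatment of the measure-theoretic subtleties (dominating measure, the role of $\{p=0\}$, measurability of $S^\star$) is appropriately careful.
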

While $(\epsilon,\delta)$-DP corresponds to a bound on $H_\gamma$ for a single choice of 
$\gamma=e^\epsilon$, tight privacy accounting requires bounds for all $\gamma \geq 0$.
In the following, we recall multiple equivalent~\cite{zhu2022optimal} characterizations of this~\emph{privacy profile} that are immediately relevant to  our work, beginning with dominating pairs:
\begin{definition}
    We say that the pair of distributions $(\hat{P}, \hat{Q})$ dominates $(P,Q)$, denoted by $(P,Q) \prec (\hat{P},\hat{Q})$, 
    if $H_\gamma(P||Q) \leq H_\gamma(\hat{P},\hat{Q})$ for all $\gamma > 0$.
\end{definition}
\begin{definition}[\citet{zhu2022optimal}]
    We say that $(\hat{P}, \hat{Q})$ are a \emph{dominating pair} of mechanism $M$ under neighboring relation $\simeq$ if
    $(M(\cdot \mid D), M(\cdot \mid D')) \prec (\hat{P}, \hat{Q})$ for all $D \simeq D'$.
\end{definition}
Given a dominating pair $(P,Q)$, the problem of privacy accounting reduces to quantifying the hockey-stick divergence $H_\gamma(P||Q)$. Specifically, the divergence does not directly depend on the (potentially high-dimensional) structure of $(P, Q)$ but on their log-likelihood ratio, the ``privacy loss'':
\begin{definition}
    The privacy loss random variable $L_{P,Q}$ of $P,Q$ is $\log(\frac{\dd P}{\dd Q}(x))$ with $x \sim P$.
\end{definition}
From~\cref{definition:hockey_stick_divergence} and $\gamma \frac{\dd P}{\dd Q}(x) = e^{\log(\gamma) - L_{P,Q}}$, we see that the hockey-stick divergence quantifies  how much the privacy loss random variable exceeds $\epsilon=\log(\gamma)$ (in expectation).
While the density of $L_{P,Q}$ fully determines the privacy profile, it is often analytically more convenient to work with its moment-generating properties. Specifically, the privacy loss distribution 
is uniquely characterized by its cumulant generating function~\cite{zhu2022optimal}, which is proportional to the Rényi divergence:
\footnote{$R_\alpha(P||Q)$ of dominating pair $P,Q$ is not to be confused with R\'enyi DP~\cite{mironov2017renyi}, which is strictly less informative, see~\cite{zhu2022optimal}.}
\begin{definition}
    The R\'enyi divergence of $P$ and $Q$ is
    $
    %\begin{equation}
        R_\alpha(P||Q) = \frac{1}{\alpha - 1} \log \mathbb{E}\left[  \exp( (\alpha - 1) L_{P,Q}) \right].
    %\end{equation}
    $
\end{definition}

\subsection{(Subsampled) Matrix Mechanisms}\label{section:background_matrix_mechanisms}

Training a model with $d$ parameters using differentially private stochastic gradient descent (DP-SGD) for $k$ epochs with $b$ batches per epoch (and thus $N = k \cdot b$ iterations) can be viewed as computing the noisy prefix sums $\mathbf{E}(\mG + \mZ)$ row by row, where $\mathbf{E} \in \{0,1\}^{N \times N}$ is the all-ones lower-triangular matrix; the entries of $\mZ \in \sR^{N \times d}$ are drawn i.i.d.\ from a normal distribution with standard deviation $\sigma > 0$; and $\mG \in \sR^{N \times d}$ contains the aggregated gradients for each batch, clipped to a fixed $\ell_2$-norm.

%Other first-order optimizers (e.g., SGD with momentum) can similarly be expressed as $\mA \mG$ with a general \emph{workload matrix} $\mA \in \sR^{n \times n}$.
%Differentially private stochastic gradient descent (DP-SGD) approximates this prefix sum privately
%via $\mA \mG \approx \mA \overline{\mG} + \mZ$, where the gradients in $\mG$ are clipped before averaging to obtain $\overline{\mG}$ , and .

Over the past few years, various works have theoretically and empirically demonstrated that improved estimation can be achieved by correlating the noise across iterations. Specifically, by introducing a so-called correlation matrix $\mC^{-1} \in \sR^{N \times N}$, the matrix factorization mechanism \citep{li2015matrix} adds correlated noise via $\mathbf{E}\bigl(\mG + \mC^{-1}\mZ\bigr)$. This approach, however, requires adjustments to the privacy accounting. Without subsampling, it is equivalent (by the post-processing property) to applying the Gaussian mechanism to make $\mC\mG$ private by adding noise $\mZ$, and it therefore requires computing the corresponding sensitivity \citep{choquette-choo2023amplified}. The resulting scheme can offer superior utility: although the sensitivity may increase (and thus the noise scale must be adjusted accordingly), the total injected noise $\mathbf{E}\mC^{-1}\mZ$ can have a much smaller expected norm. By adding less noise to the intermediate models, training utility can improve; see the recent survey \citep{pillutla2025correlated} for further discussion on selecting a correlation matrix.

%are made possible by factorizing the workload matrix as $\mA = \mB \mC$ with invertible $\mB, \mC \in \sR^{n \times n}$.
%Concretely, the approximation
%$(\mB \mC) \mG
%\approx
%\mB (\mC \overline{\mG} + \mZ)
%=
%(\mB \mC) ( \overline{\mG} + \mC^{-1} \mZ)$ can result in much smaller expected errors. Put simply, this means that  correlated  noise with covariance $\mC^{-1} \mC^{-T}$ often offers superior utility at the same privacy level.

The utility of the matrix factorization mechanism can be further improved by combining it with amplification by subsampling. A recent work by \citet{choquette2024near} introduces an efficient balls-in-bins subsampling scheme, along with a corresponding privacy accountant that supports matrix factorization with an arbitrary positive lower-triangular \emph{strategy} matrix~$\mC$.

%Specifically, the privacy of these matrix mechanisms is determined by strategy matrix $\mC \in \sR^{n \times n}$, the ratio $\frac{\sigma}{\Delta}$ of standard deviation $\sigma$ and $\ell_2$ clipping norm $\Delta$, as well as the used~\emph{subsampling scheme}, i.e.,
%which training sample contributes to which of %the $n = k \cdot b$ batches.
%As previously discussed, our work focuses on the following subsampling scheme that is appealing from both an implementation and model utility perspective:
\begin{definition}
    In \emph{balls-in-bins} subsampling with $k \in \sN$ epochs and $b \in \sN $ batches per epoch,
    a record contributes to iterations $(i, b + i, 2b + i, \dots, (k-1)b + i)$ with $i \sim \mathrm{Uniform}(\{1,\dots, b\})$.
\end{definition}
%Recent work by~\citet{choquette2024near} made significant progress in deriving an (almost) tight dominating pair for matrix mechanism under balls-in-bins sampling and addition/removal of a single training sample:
They further derive  (almost) tight dominating pairs for matrix mechanism under this scheme.

\begin{restatable}{lemma}{fulldominatingpair}[Lemma 3.2 from \cite{choquette2024near}]
\label{lem:dominating_pair}
Assume lower-triangular, non-negative strategy matrix $\mC$, gradient clipping norm $\Delta=1$, and balls-in-bins sampling with $k \in \sN$ epochs and $b \in \sN$ batches per epoch. 
Define distributions $P = \frac{1}{b}\sum_{i=1}^{b} \mathcal{N}(\mm_i, \sigma^2 \eye_N)$
and $Q = \mathcal{N}(\vzero, \sigma^2 \eye_N)$ supported on $\sR^{N}$
with $N = k \cdot b$ and mixture means
\label{eq:dominating_pair}
$\mm_i = \sum_{j = 0}^{k - 1} |\mathbf{C}|_{:,\, i + jb}$,
Then the subsampled matrix mechanism is dominated by $(P,Q)$ under the ``remove'' \footnote{The original Lemma 3.2 swaps the ``add'' and ``remove'' direction, which appears to be a typing error, see~\cref{appendix:add_remove_flip}.} relation and by
$(Q,P)$ under the ``add'' relation.
\end{restatable}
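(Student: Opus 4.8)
My plan is to peel off the high-dimensional and subsampling structure by successive post-processing/conditioning reductions until the comparison becomes exactly that of the canonical pair $(P,Q)$ on $\sR^N$, and then to put the subsampling randomness back in via joint convexity of the hockey-stick divergence. First, the released statistic $\mathbf{E}(\mG+\mC^{-1}\mZ)$ equals $(\mathbf{E}\mC^{-1})(\mC\mG+\mZ)$, and $\mathbf{E}\mC^{-1}$ is a bijective linear map (both $\mathbf{E}$ and $\mC$ are invertible), so by post-processing in both directions the mechanism has the same privacy profile as $\mathcal{A}(D)=\mC\mG+\mZ$, where the $t$-th row of $\mG\in\sR^{N\times d}$ is the sum of the clipped gradients of the samples allocated to iteration $t$ and $\mZ$ has i.i.d.\ $\mathcal{N}(0,\sigma^2)$ entries. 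Next, I would fix neighbors $D\simeq D'$ differing in one sample $x$, condition on the balls-in-bins allocation $\omega$ of all \emph{other} samples (whose law is the same under $D$ and under $D'$), and temporarily also on $x$'s slot $i\in\{1,\dots,b\}$: then $\mG$ for $D$ and $D'$ differs by $\sum_{j=0}^{k-1}\mathbf{e}_{i+jb}\,g_j^{\top}$ with each $g_j$ a clipped gradient ($\lVert g_j\rVert\le\Delta=1$), so $\mC\mG$ differs by $\sum_{j=0}^{k-1}\mC_{:,i+jb}\,g_j^{\top}$; applying to both datasets the same affine bijection subtracting off the $x$-independent part of $\mC\mG$ and then marginalizing $i\sim\mathrm{Uniform}$, the conditional pair becomes, for the ``remove'' direction, $\bigl(\tfrac1b\sum_{i=1}^{b}\mathcal{N}(\sum_j\mC_{:,i+jb}g_j^{\top},\sigma^2\eye),\ \mathcal{N}(\vzero,\sigma^2\eye)\bigr)$ on $\sR^{N\times d}$ (and the reversed ordering for ``add'').

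It then remains to bound the hockey-stick divergence of this conditional mixture pair by $H_\gamma(P\|Q)$ for every $\gamma>0$ (resp.\ $H_\gamma(Q\|P)$ in the ``add'' direction), which I would do in three moves. (i) \emph{Worst-case gradients}: since $\frac1b\sum_i\mathcal{N}(S_i,\sigma^2\eye)$ vs.\ $\mathcal{N}(\vzero,\sigma^2\eye)$ has a likelihood ratio that is a function of $(\langle x,S_i\rangle)_i$, whose law under $Q$ is Gaussian with covariance $\sigma^2(\langle S_i,S_{i'}\rangle)_{i,i'}$, the divergence depends on the mean shifts $S_i$ only through their Gram matrix; a Cauchy--Schwarz computation using $\mC\ge\vzero$ shows the Gram matrix of the shifts $S_i=\sum_j\mC_{:,i+jb}g_j^{\top}$ is dominated entrywise by $(\langle\mm_i,\mm_{i'}\rangle)_{i,i'}$, with equality attained \emph{simultaneously in every entry} by taking all $g_j$ equal to a common unit vector $g$, and I would use this to conclude the common-direction choice is the worst case, so that it suffices to treat shifts $\mm_i g^{\top}$ (here $\mm_i=\sum_j|\mC|_{:,i+jb}=\sum_j\mC_{:,i+jb}$ since $\mC\ge\vzero$; a general signed $\mC$ additionally uses coordinatewise sign symmetry of the isotropic Gaussian). (ii) \emph{Collapse to $\sR^N$}: because $g$ is common to all mixture components, post-processing by $X\mapsto XR^{\top}$ for an orthogonal $R$ with $Rg=\lVert g\rVert\mathbf{e}_1$ — a bijection preserving $H_\gamma$ and the isotropic noise law — moves every shift into the first output column; the remaining $d-1$ columns are i.i.d.\ $\mathcal{N}(\vzero,\sigma^2\eye)$ under both datasets and independent of the first, so dropping them also preserves $H_\gamma$, leaving $\bigl(\tfrac1b\sum_i\mathcal{N}(\lVert g\rVert\mm_i,\sigma^2\eye_N),\ \mathcal{N}(\vzero,\sigma^2\eye_N)\bigr)$. (iii) \emph{Remove the $\lVert g\rVert\le1$ slack}: the contraction $\mathbf{x}\mapsto\lVert g\rVert\mathbf{x}$ exhibits this pair as a post-processing of the same pair with noise $\sigma/\lVert g\rVert$ and means $\mm_i$, which is in turn $(P,Q)$ convolved with a common Gaussian kernel, so two applications of the data-processing inequality give $H_\gamma\le H_\gamma(P\|Q)$.

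To finish, since $\mathcal{A}(D)=\mathbb{E}_\omega[\mathcal{A}(D\mid\omega)]$ and $\mathcal{A}(D')=\mathbb{E}_\omega[\mathcal{A}(D'\mid\omega)]$ under the common law of $\omega$, joint convexity of the hockey-stick divergence (it is the $f$-divergence with $f(t)=(t-\gamma)_+$) yields $H_\gamma(\mathcal{A}(D)\|\mathcal{A}(D'))\le\mathbb{E}_\omega[H_\gamma(\mathcal{A}(D\mid\omega)\|\mathcal{A}(D'\mid\omega))]\le H_\gamma(P\|Q)$ for all $\gamma>0$, and symmetrically $\le H_\gamma(Q\|P)$ in the ``add'' direction; as this holds for every neighboring pair, $(P,Q)$ dominates the subsampled matrix mechanism under ``remove'' and $(Q,P)$ under ``add''. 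The step I expect to be the main obstacle is move (i): the entrywise Gram-matrix domination is routine, but upgrading it to the statement that the common-direction trajectory actually \emph{maximizes the mixture's} divergence — i.e.\ that no choice of the per-participation gradients $g_j$ spreading the signal across directions makes the mixture harder to distinguish from $Q$ — is the substantive point, because $H_\gamma$ of a Gaussian mixture need not be monotone in the off-diagonal (cross-component) Gram entries in general; one has to exploit that the common-direction point maximizes \emph{all} Gram entries at once, together with the nonnegativity of $\mC$. Keeping the conditioning bookkeeping watertight (the same $\omega$ driving $D$ and $D'$, with $x$'s slot left free to form the mixture) is a secondary thing to be careful about.
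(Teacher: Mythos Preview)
The paper does not prove this lemma; it is imported verbatim from \citet{choquette2024near} (their Lemma~3.2), and the only material the paper contributes is Appendix~\ref{appendix:add_remove_flip}, which argues that the ``add'' and ``remove'' labels in the original statement are swapped. The short excerpt of the original argument quoted there---release the contributions of all non-differing users (post-processing), so that distinguishing $\mC\vx+\vz$ from $\mC\vx'+\vz$ reduces to distinguishing $\vz$ from $\mC(\vx'-\vx)+\vz$---is exactly your reduction: peel off $\mathbf{E}\mC^{-1}$ by bijective post-processing, condition on the allocation of the non-differing users, subtract the shared affine part, and marginalize over the differing user's slot $i$ to obtain the $b$-component mixture. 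So at the level of overall strategy you are aligned with the cited proof, and there is nothing finer-grained in the present paper to compare against.

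Your self-diagnosis of move~(i) is accurate and worth sharpening. The observation that $H_\gamma$ depends on the shifts $S_i$ only through their Gram matrix, together with the entrywise bound $G_{ii'}\le G^*_{ii'}=\langle\mm_i,\mm_{i'}\rangle$, is correct, but it does \emph{not} by itself yield domination: entrywise ordering of Gram matrices does not imply the Loewner order $G\preceq G^*$, which is what a single linear-contraction-plus-noise post-processing would require. Concretely, for $k\ge 2$ and $g_0\perp g_1$ unit vectors one has $v^\top G v=\sum_j\|{\textstyle\sum_i}v_i c_{ij}\|^2$ while $v^\top G^* v=\|{\textstyle\sum_j\sum_i}v_i c_{ij}\|^2$, and choosing $v$ so that $\sum_i v_i c_{i0}=-\sum_i v_i c_{i1}$ (possible whenever $\{c_{i0}+c_{i1}\}_i$ are linearly dependent) gives $v^\top G v>0=v^\top G^* v$. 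So the route through one global contraction is blocked. Your hedge that one must ``exploit that the common-direction point maximizes all Gram entries at once'' is the right instinct: what carries this through is an entrywise monotonicity of $H_\gamma$ in the Gram matrix---the likelihood ratio is a sum of exponentials of jointly Gaussian linear forms, $(L-\gamma)_+$ has non-negative mixed second partials in those forms, so a Gaussian supermodularity comparison handles the off-diagonals, and a per-component scaling plus added noise handles the diagonals. Since the paper defers the whole lemma to \citet{choquette2024near}, it does not supply this step either; just be aware that your Gram-matrix sketch as written is not yet a proof.
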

However, the main algorithmic challenge in privacy accounting is not to just specify a dominating pair $(P,Q)$, but to efficiently evaluate or bound hockey-stick divergence $H_\gamma(P || Q)$. Unlike with standard DP-SGD, where each iteration is dominated by a univariate mixture~\cite{zhu2022optimal},
the multi-dimensional mixture in~\cref{eq:dominating_pair} makes it hard to evaluate the privacy loss distribution and thus $H_\gamma(P || Q)$.
However, as discussed in~\cref{section:privacy_definitions}, the hockey-stick divergence 
is equivalent to the expectation $\mathbb{E}\left[\max\{1 - e^{\log(\gamma) - L_{P,Q}}, 0\}\right]$.
One can thus sample from the privacy loss distribution 
and compute a confidence interval to show that the mechanism is $(\epsilon,\delta)$-DP with some high probability~\cite{wang2023randomized}.
This is however a qualitatively weaker guarantee than a deterministic bound on $\delta$.\footnote{Note that, despite a common misinterpretation, privacy parameter $\delta$ is not the probability of violating $\epsilon$-DP.}
One can recover deterministic guarantees, but this requires modification of the mechanism with random abstentions.

%To overcome this limitation,~\cite{wang2023randomized} prove that one can augment any mechanism with an abstention option to deterministically guarantee $(\epsilon, \tau\delta)$-DP whenever the original mechanism is $(\epsilon,\delta)$-DP
%with high probability $1 - \tau \delta$.
%However, aside from the operational risk of needing to abort a model's training, the number of samples increases with $\delta$.

Furthermore, to provide high probability bounds for the privacy level, the instantiation for matrix mechanisms in~\cite{choquette2024near} requires a sample size inversely proportional to $\delta$. Thus, showing $(\epsilon,10^{-8})$-DP is $\SI{100000}{}$ times more expensive than showing $(\epsilon,10^{-3})$-DP.
So far, no sampling-free method is available to overcome these inherent limitations of Monte Carlo accounting for matrix mechanisms. In the following, we propose for the first time methods that enable efficient, deterministic privacy accounting for matrix mechanisms under random allocation.

\section{Rényi Accountant}\label{section:rdp_accountant}

\begin{figure}[t]
    \centering
    \begin{subfigure}[t]{0.23\linewidth}
        \centering
        \includegraphics[width=\linewidth]{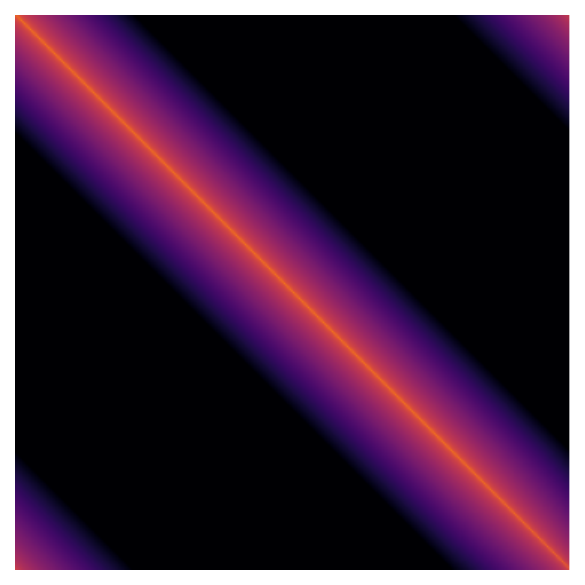}
        \subcaption{BandMF, $p=64$}
    \end{subfigure}\hfill
    \begin{subfigure}[t]{0.23\linewidth}
        \centering
        \includegraphics[width=\linewidth]{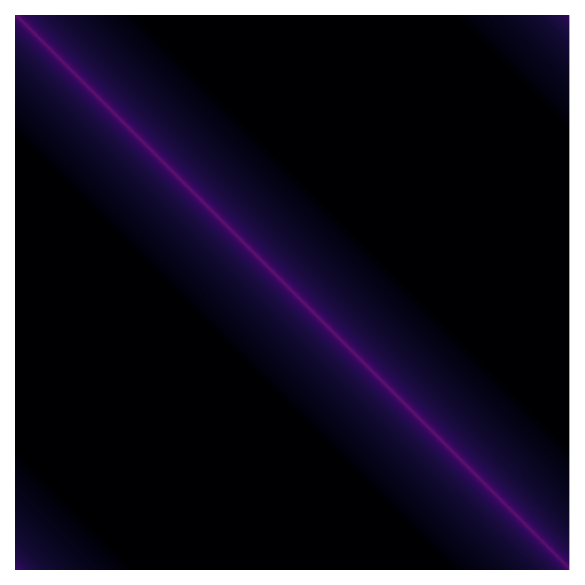}
        \subcaption{BSR, $p=64$}
    \end{subfigure}\hfill
    \begin{subfigure}[t]{0.23\linewidth}
        \centering
        \includegraphics[width=\linewidth]{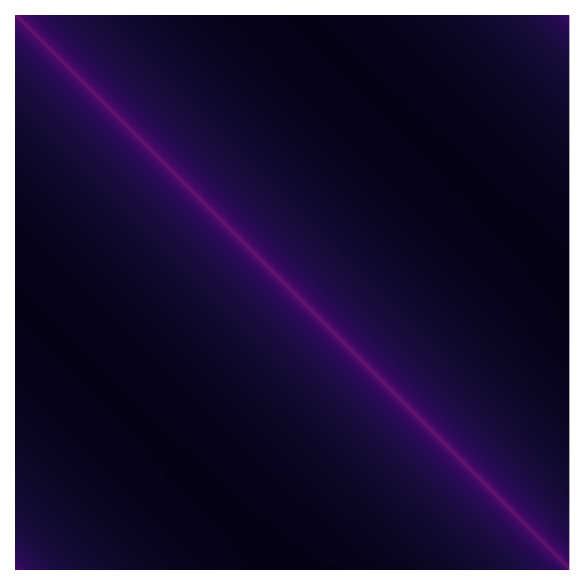}
        \subcaption{BISR, $p=64$}
    \end{subfigure}\hfill
    \begin{subfigure}[t]{0.23\linewidth}
        \centering
        \includegraphics[width=\linewidth]{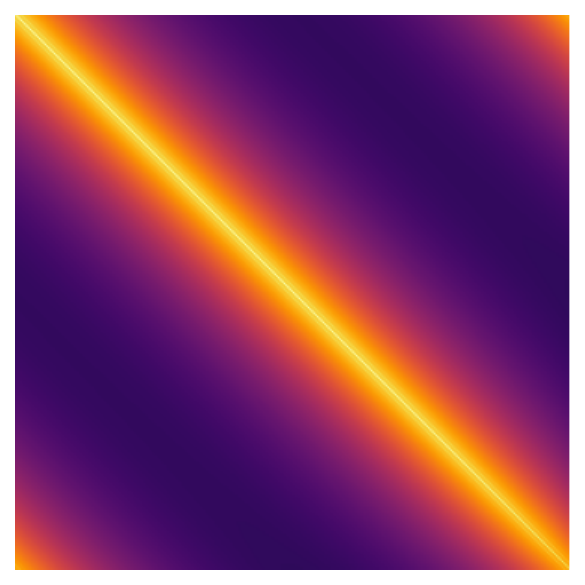}
        \subcaption{BandInvMF, $p=4$}
    \end{subfigure}

    \caption{The Gram matrix $\mathbf{G}$ for matrices of size $N=3000$ with $k=10$. For banded matrices $\mathbf{C}$, we consider Banded Matrix Factorization (BandMF) \citep{mckenna2024scaling} and Banded Square Root Factorization (BSR) \citep{kalinin2024banded} with bandwidth $p=64$. For banded inverse matrices, we consider Banded Inverse Matrix Factorization (BandInvMF) and Banded Inverse Square Root (BISR) \citep{kalinin2025back} with bandwidths $p = 4$ and $p=64$ accordingly.
  We note that although the matrix $\mathbf{C}$ is banded, the matrix $\mathbf{G}$ is not; rather, it is \emph{cyclically banded}, with the corner blocks filled by strictly positive entries. Moreover, we observe that banded inverse matrices show similar behavior: their Gram matrices have entries that decay rapidly outside the cyclic band.}
    \label{fig:gram_subplots}
\end{figure}

Bounds on the hockey-stick divergence imply $(\varepsilon,\delta)$-differential privacy. Since it is not feasible to compute the hockey-stick divergence explicitly, we instead compute an upper bound based on the Rényi divergence. We then translate the resulting Rényi divergence bounds at order $\alpha$ into $(\varepsilon,\delta)$-differential privacy using the following theorem.

\begin{theorem}{Prop. 12 in  \citet{canonne2020discrete}.}
\label{thm:renyi_to_hockey_stick_translation}
Given two distributions $P, Q$, if $R_{\alpha}(P \,\|\, Q) \leq \rho$, then
\begin{equation}
H_{e^\epsilon}(P \,\|\, Q) \leq \frac{1}{\alpha - 1} e^{(\alpha - 1)(\rho - \epsilon)} \left(1 - \frac{1}{\alpha} \right)^{\alpha}.
\end{equation}
\end{theorem}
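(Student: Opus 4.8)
The plan is to reduce the hockey-stick divergence to the expectation of a truncated likelihood ratio and then control that expectation by a single pointwise inequality calibrated against the R\'enyi moment. If $R_\alpha(P\|Q) = +\infty$ the claimed bound is trivially true, so assume it is finite; since $\alpha > 1$, this forces $P \ll Q$, and we may set $Y := \frac{\dd P}{\dd Q}$, a nonnegative random variable satisfying $\mathbb{E}_{x\sim Q}\big[Y(x)^\alpha\big] = e^{(\alpha-1)R_\alpha(P\|Q)} \le e^{(\alpha-1)\rho}$. Rewriting the hockey-stick divergence on the $Q$-side, $H_{e^\epsilon}(P\|Q) = \int \max\{\dd P - e^\epsilon\,\dd Q,\,0\} = \mathbb{E}_{x\sim Q}\big[(Y(x) - e^\epsilon)_+\big]$, where $(\cdot)_+ := \max\{\cdot,0\}$. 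So it suffices to upper bound $\mathbb{E}_Q[(Y - \gamma)_+]$ by a multiple of $\mathbb{E}_Q[Y^\alpha]$, with $\gamma := e^\epsilon$.

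The key lemma will be the pointwise inequality $(y - \gamma)_+ \le \lambda\, y^\alpha$, valid for all $y \ge 0$ as soon as $\lambda \ge \lambda^\star := (\alpha-1)^{\alpha-1}/(\alpha^\alpha \gamma^{\alpha-1})$. To prove it, observe that it is immediate for $y \in [0,\gamma]$ (the left-hand side vanishes), so only $y > \gamma$ remains, i.e. we must show $h(y) := \lambda y^\alpha - y + \gamma \ge 0$ there. The function $h$ is strictly convex on $(0,\infty)$, so its global minimizer is $y_0 = (\lambda\alpha)^{-1/(\alpha-1)}$; if $y_0 \le \gamma$ then $h$ is increasing on $(\gamma,\infty)$ and $h(\gamma) = \lambda\gamma^\alpha > 0$ closes the case, whereas if $y_0 > \gamma$ one substitutes the stationarity relation $\lambda\alpha y_0^{\alpha-1} = 1$ to get $h(y_0) = \gamma - (1 - \tfrac1\alpha)\,y_0$, which is nonnegative precisely when $y_0 \le \tfrac{\alpha}{\alpha-1}\gamma$, and unwinding this last inequality shows it holds exactly for $\lambda \ge \lambda^\star$. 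This one-variable optimization is the only genuine computation; I expect it to be the main (if modest) obstacle, and choosing $\lambda = \lambda^\star$ rather than anything larger is what makes the final constant match the statement — a cruder alternative that writes $(1 - e^{\epsilon - L_{P,Q}})_+$ as $\int_0^\infty e^{-u}\,\mathbbm{1}[L_{P,Q} \ge \epsilon + u]\,\dd u$ and applies a Chernoff bound to the privacy loss also works, but yields a strictly looser constant.

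Taking $\lambda = \lambda^\star$ and integrating the pointwise bound against $Q$ then gives
\[
\begin{aligned}
H_{e^\epsilon}(P\|Q) = \mathbb{E}_Q\big[(Y - e^\epsilon)_+\big] &\le \lambda^\star\,\mathbb{E}_Q[Y^\alpha] \\
&\le \frac{(\alpha-1)^{\alpha-1}}{\alpha^\alpha\, e^{\epsilon(\alpha-1)}}\, e^{(\alpha-1)\rho}
= \frac{(\alpha-1)^{\alpha-1}}{\alpha^\alpha}\, e^{(\alpha-1)(\rho-\epsilon)}.
\end{aligned}
\]
Finally I would rewrite the leading constant via the elementary identity $\frac{(\alpha-1)^{\alpha-1}}{\alpha^\alpha} = \frac{1}{\alpha-1}\Big(\frac{\alpha-1}{\alpha}\Big)^{\alpha} = \frac{1}{\alpha-1}\big(1 - \tfrac1\alpha\big)^{\alpha}$, which reproduces exactly the claimed bound. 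Beyond the calculus step above, the only care needed is the measure-theoretic bookkeeping in the first paragraph (dispatching $R_\alpha = \infty$ and noting that absolute continuity removes any singular part of $P$) and keeping track that $\alpha > 1$ is used both for the convexity of $h$ and for the moment identity $\mathbb{E}_Q[Y^\alpha] = e^{(\alpha-1)R_\alpha(P\|Q)}$.
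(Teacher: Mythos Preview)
Your proof is correct. The paper does not actually prove this statement: it is quoted verbatim as Proposition~12 of \citet{canonne2020discrete} and used as a black box, so there is no in-paper argument to compare against. That said, your approach --- rewriting $H_{e^\epsilon}(P\|Q)=\mathbb{E}_Q[(Y-e^\epsilon)_+]$ for $Y=\dd P/\dd Q$, proving the sharp pointwise envelope $(y-\gamma)_+\le \lambda^\star y^\alpha$ with $\lambda^\star=(\alpha-1)^{\alpha-1}/(\alpha^\alpha\gamma^{\alpha-1})$ via the convex minimization of $h(y)=\lambda y^\alpha-y+\gamma$, and then integrating --- is exactly the argument used in the original reference. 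Your remark that the Chernoff-on-privacy-loss route loses a constant is also accurate, and the measure-theoretic caveats (finite $R_\alpha$ forces $P\ll Q$ so the $Q$-side representation of $H_\gamma$ is valid) are handled correctly.
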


In the following lemma, we compute the Rényi divergence for the dominating pair from Lemma~\ref{lem:dominating_pair}, i.e., 
$
\hat{P} = \frac{1}{b}\sum_{i=1}^{b} \mathcal{N}(\mathbf{m}_i, \sigma^2 I)$
and
$
\hat{Q} = \mathcal{N}(0, \sigma^2 I),
$
in the ``remove'' direction.

\begin{restatable}{lemma}{RenyiDivergenceBound}
\label{lem:renyi-bound}
The Rényi divergence in the ``remove'' direction $R_\alpha(\hat{P}\,\|\,\hat{Q})$ is given by
\begin{equation}
\begin{aligned}
\label{eq:divergence_add}
\frac{1}{\alpha - 1}\,
\log\!\!\sum_{(r_1,\dots,r_\alpha)\in[1,b]^\alpha}
\!\!\!\!\!\exp\bigg(\!
  \sum_{\substack{j_1,j_2=1\\ j_1\ne j_2}}^\alpha
  \frac{\mathbf{G}_{r_{j_1}, r_{j_2}}}{2\sigma^2}
\!\bigg)-\frac{\alpha \log b}{1-\alpha},
\end{aligned}
\end{equation}
where $G$ is the Gram matrix with entries $\mathbf{G}_{i,j} = \langle \mathbf{m}_i, \mathbf{m}_j\rangle$.
\end{restatable}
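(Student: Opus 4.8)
The plan is to evaluate $R_\alpha(\hat P\,\|\,\hat Q)$ \emph{exactly} by exploiting that $\alpha$ is a positive integer, so that the $\alpha$-th power of the mixture density $\hat p$ expands into finitely many product terms, each of which is a Gaussian integral with a closed form. Write $p_i$ for the density of $\mathcal N(\mathbf m_i,\sigma^2 I)$ and $q$ for that of $\hat Q=\mathcal N(0,\sigma^2 I)$ on $\sR^N$; both are everywhere positive, so the divergence is finite. Starting from
\begin{equation}
R_\alpha(\hat P\,\|\,\hat Q)=\frac{1}{\alpha-1}\log\int \hat p(x)^\alpha\,q(x)^{1-\alpha}\,\dd x,\qquad \hat p=\tfrac1b\sum_{i=1}^b p_i,
\end{equation}
I would use multilinearity to write $\hat p^\alpha=b^{-\alpha}\sum_{r\in\{1,\dots,b\}^\alpha}\prod_{j=1}^\alpha p_{r_j}$, and then, since every term is nonnegative and the index set is finite, interchange the finite sum with the integral (Tonelli). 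It thus suffices to evaluate, for each fixed tuple $r=(r_1,\dots,r_\alpha)$, the single integral $\int\big(\prod_{j=1}^\alpha p_{r_j}(x)\big)\,q(x)^{1-\alpha}\,\dd x$.

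The core step is completing the square in that integral. Because the $p_{r_j}$ and $q$ all have covariance $\sigma^2 I$, the product $\prod_j p_{r_j}(x)\,q(x)^{1-\alpha}$ equals $(2\pi\sigma^2)^{-N/2}$ times $\exp\!\big(-\tfrac{1}{2\sigma^2}\big[\sum_j\|x-\mathbf m_{r_j}\|^2-(\alpha-1)\|x\|^2\big]\big)$, the powers of the Gaussian normalizer collapsing to exactly one. Expanding the squares, the $\alpha\|x\|^2$ generated by the $\alpha$ numerator factors cancels all but one copy of $\|x\|^2$; with $\mu_r:=\sum_{j=1}^\alpha\mathbf m_{r_j}$ this yields the identity $\sum_j\|x-\mathbf m_{r_j}\|^2-(\alpha-1)\|x\|^2=\|x-\mu_r\|^2+\sum_j\|\mathbf m_{r_j}\|^2-\|\mu_r\|^2$. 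Hence the integrand is the density of $\mathcal N(\mu_r,\sigma^2 I)$ times the constant $\exp\!\big(\tfrac{1}{2\sigma^2}(\|\mu_r\|^2-\sum_j\|\mathbf m_{r_j}\|^2)\big)$, so the integral equals that constant; and expanding $\|\mu_r\|^2$ gives $\|\mu_r\|^2-\sum_j\|\mathbf m_{r_j}\|^2=\sum_{j_1\ne j_2}\langle\mathbf m_{r_{j_1}},\mathbf m_{r_{j_2}}\rangle=\sum_{j_1\ne j_2}\mathbf G_{r_{j_1},r_{j_2}}$.

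Assembling the pieces, $\int\hat p^\alpha q^{1-\alpha}\,\dd x=b^{-\alpha}\sum_{r}\exp\!\big(\tfrac{1}{2\sigma^2}\sum_{j_1\ne j_2}\mathbf G_{r_{j_1},r_{j_2}}\big)$, and taking $\tfrac{1}{\alpha-1}\log(\cdot)$ while pulling the $b^{-\alpha}$ out of the logarithm produces exactly the claimed double sum together with the additive constant $-\tfrac{\alpha\log b}{\alpha-1}$ (equivalently $\tfrac{\alpha\log b}{1-\alpha}$). I do not anticipate a real obstacle: the whole argument is the square-completion bookkeeping, and the only structural point worth highlighting is that the cancellation of the $\|x\|^2$ terms — which is what turns each per-tuple integral into a pure Gaussian with no leftover quadratic dependence on $x$ — hinges on $\alpha$ being a positive integer, so this closed form does not extend to fractional Rényi orders.
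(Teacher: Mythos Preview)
Your proof is correct and takes essentially the same approach as the paper: expand $\hat p^\alpha$, evaluate each resulting Gaussian integral, and simplify via the Gram matrix. Your version is marginally more streamlined---you expand directly over tuples and complete the square in one step, whereas the paper first expands via the multinomial theorem into partitions, evaluates the integral using the Gaussian MGF, and only afterward converts partitions back to tuples---but the two computations are equivalent.
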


For a general Gram matrix $\mathbf{G}$, there is no efficient way to evaluate this sum in \eqref{eq:divergence_add}. More precisely, in Lemma~\ref{lem:sum-sharp-p-complete}, we prove by a reduction from clique counting that computing this sum is $\#P$-complete, where $\#P$ is the class of counting problems associated with $\mathit{NP}$ decision problems. However, if we impose structural restrictions on $\mathbf{C}$, for instance by requiring it to be $p$ banded, then $\mathbf{G}$ inherits additional structure and becomes \emph{cyclic banded} (see Figure~\ref{fig:gram_subplots}). In this case, the sum can be computed via dynamic programming in time $O\!\left(bp\,\alpha^{2p}\right)$. In particular, for $p=1$, which corresponds to the DP-SGD case, this yields a runtime of $O(b\alpha^2)$, improving upon the exponential $O(2^\alpha)$ complexity reported by \citet{feldman2025privacy}.
In~\cref{appendix:renyi_runtime_vs_feldman}, we verify this dramatic speedup
via wall-clock runtime measurements.

If $\mathbf{C}$ is not naturally $p$ banded, as is the case for banded inverse matrices in~\citet{kalinin2025back}, the procedure above is no longer efficient. Nevertheless, many such matrices are effectively close to banded in the sense that entries sufficiently far from the diagonal are small (see Figure~\ref{fig:gram_subplots}). This observation leads to the following bound. Let
\[
\tau \coloneqq \max_{\min(|i-j|, b - |i-j|)\ge p} \mathbf{G}_{i,j}.
\]
Then $\mathbf{G}$ admits the elementwise bound $\mathbf{G} \le \mathbf{G}^{p} + \tau \mathbf{E}$, where $\mathbf{G}^{p}$ is the $p$ cyclic banded truncation of $\mathbf{G}$ and $\mathbf{E}$ denotes the all ones matrix. Consequently, by adding an extra term $\frac{\alpha \tau}{2\sigma^2}$ to the Rényi divergence, we obtain a computable upper bound on the privacy loss (for an empirical demonstration of the effect of truncation on bound tightness, see~\cref{appendix:effect_of_truncation}).

Combining these observations, Algorithm~\ref{alg:renyi_dynamic_program}, which we present in~\cref{appendix:full_renyi_accountant} for space reasons, computes the Rényi divergence in the ``remove'' direction exactly when $\mathbf{C}$ is $p$ banded, and an upper bound otherwise. Formally:
%The guarantees of Algorithm~\ref{alg:renyi_dynamic_program} can be formally state as follows:

\begin{restatable}{lemma}{RenyiDynamicProgram}
\label{lem:renyi-dynamic_program}
Algorithm~\ref{alg:renyi_dynamic_program} computes the Rényi divergence based value in eq.~\ref{eq:divergence_add} exactly for a $p$-banded strategy matrix $\mathbf{C}$, and returns an upper bound otherwise.
The algorithm runs in time $O\!\left(bp \alpha^{2p} \right)$ and requires $O(\alpha^{p} + bp)$ memory.
\end{restatable}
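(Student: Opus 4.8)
I would start from the closed form in \cref{lem:renyi-bound} and first convert the sum over tuples into a sum over occupation-count vectors. The exponent $\sum_{j_1\ne j_2}\mathbf{G}_{r_{j_1},r_{j_2}}$ depends on $(r_1,\dots,r_\alpha)\in\{1,\dots,b\}^\alpha$ only through its multiset of values, so grouping tuples by their counts $c=(c_1,\dots,c_b)$ with $\sum_i c_i=\alpha$, using the multinomial count $\alpha!/\prod_i c_i!$ and the identity $\sum_{j_1\ne j_2}\mathbf{G}_{r_{j_1},r_{j_2}}=2\sum_{i<j}c_ic_j\mathbf{G}_{i,j}+\sum_i c_i(c_i-1)\mathbf{G}_{i,i}$, gives
\[
\sum_{(r_1,\dots,r_\alpha)}\exp\!\Big(\tfrac{1}{2\sigma^2}\sum_{j_1\ne j_2}\mathbf{G}_{r_{j_1},r_{j_2}}\Big)
=\alpha!\sum_{\sum_i c_i=\alpha}e^{E_{\mathbf{G}}(c)},
\quad
E_{\mathbf{G}}(c)=\sum_{i<j}\tfrac{\mathbf{G}_{i,j}c_ic_j}{\sigma^2}+\sum_i\tfrac{\mathbf{G}_{i,i}\binom{c_i}{2}}{\sigma^2}-\sum_i\log c_i!.
\]
Hence eq.~\ref{eq:divergence_add} equals $\tfrac{1}{\alpha-1}\bigl(\log\alpha!+\log\sum_c e^{E_{\mathbf{G}}(c)}\bigr)-\tfrac{\alpha\log b}{\alpha-1}$, and comparing with the output $\rho=\tfrac{\log S+\log\alpha!-\alpha\log b}{\alpha-1}+\tfrac{\tau\alpha}{2\sigma^2}$ of \cref{alg:renyi_dynamic_program} reduces the lemma to two statements: (a) the algorithm's $\log S$ equals $\log\sum_{\sum_i c_i=\alpha}e^{E_{\mathbf{G}^{(p)}}(c)}$ for the cyclically banded truncation $\mathbf{G}^{(p)}$ (the matrix with in-band entries $\max(\mathbf{G}_{i,j}-\tau,0)$ and zeros elsewhere), and (b) the additive $\tfrac{\tau\alpha}{2\sigma^2}$ makes $\rho$ an upper bound on eq.~\ref{eq:divergence_add}, with equality when $\mathbf{C}$ is $p$-banded.

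Statement (b) is the easy part. Since $\tau$ is the largest $\mathbf{G}_{i,j}$ outside the cyclic band, $\mathbf{G}_{i,j}\le\mathbf{G}^{(p)}_{i,j}+\tau$ entrywise. Since $\mathbf{G}\ge0$ entrywise (every $\mathbf{m}_i$ is nonnegative) and $\sum_{i<j}c_ic_j+\sum_i\binom{c_i}{2}=\binom{\alpha}{2}$, this yields $E_{\mathbf{G}}(c)\le E_{\mathbf{G}^{(p)}}(c)+\tfrac{\alpha(\alpha-1)\tau}{2\sigma^2}$ for every $c$, and dividing the log-partition by $\alpha-1$ turns the slack into exactly $\tfrac{\tau\alpha}{2\sigma^2}$. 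When $\mathbf{C}$ is lower triangular and $p$-banded I would also argue that $\mathbf{G}$ is cyclically $p$-banded: $\mathbf{m}_i$ is a sum of columns $\{i+jb\}_{0\le j<k}$ of $|\mathbf{C}|$, two such columns have disjoint support unless their index difference $|(i-i')+(j-j')b|$ is $<p$, and over $i,i'\in\{1,\dots,b\}$ (minimizing over the admissible shift $j-j'$) the smallest such value is $\min(|i-i'|,b-|i-i'|)$; hence $\mathbf{G}_{i,i'}=0$ once $\min(|i-i'|,b-|i-i'|)\ge p$, so $\tau=0$, $\mathbf{G}^{(p)}=\mathbf{G}$, and $\rho$ reproduces eq.~\ref{eq:divergence_add} exactly.

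The core of the argument, and the place I expect the most work, is statement (a): that \cref{alg:renyi_dynamic_program} evaluates $\log\sum_{\sum_i c_i=\alpha}e^{E_{\mathbf{G}^{(p)}}(c)}$. Using the cyclically banded structure of $\mathbf{G}^{(p)}$, I would split $E_{\mathbf{G}^{(p)}}(c)$ into (i) per-site terms $\tfrac{\mathbf{G}^{(p)}_{i,i}\binom{c_i}{2}}{\sigma^2}-\log c_i!$, (ii) short-range non-wrapping interactions $\tfrac{\mathbf{G}^{(p)}_{i,j}c_ic_j}{\sigma^2}$ with $0<j-i\le p-1$, and (iii) wrap interactions, which for $b$ not too small (say $b\ge 2p$) connect only the first $p-1$ sites to the last $p-1$ sites. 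The outer loop fixes the counts $l=(l_0,\dots,l_{p-2})$ of the first $p-1$ sites and absorbs their (i)+(ii) contribution into $\log w_l$; the inner loop over $k=p-1,\dots,b-1$ is a transfer-matrix pass whose state $(m,r)$ is the running total $m=\sum_{i\le k}c_i$ together with the sliding window $r=(c_{k-p+2},\dots,c_k)$, and whose transition $\Delta$ adds exactly site $k$'s term (i) and its interactions (ii) with the window (namely $\tfrac{\mathbf{G}^{(p)}_{k,k}t(t-1)+S_1 t}{2\sigma^2}-\log t!$), with $\mathrm{LSE}$ pooling histories that reach the same $(m,r)$. The crux is the loop invariant — after step $k$, $\texttt{states}(m,r)$ is the log of the sum over all ways to fill sites $0,\dots,k$ consistently with $l,m,r$, weighted by $\exp$ of all (i)+(ii) terms on $\{0,\dots,k\}$ — whose induction step works precisely because the band structure makes the remaining energy (sites $>k$ plus the wrap block) depend on the history only through $l,m,r$. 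Finalizing over states with $m=\alpha$ adds the wrap block via $\sum_{i,j=0}^{p-2}\tfrac{\mathbf{G}^{(p)}_{i,\,b-1-j}\,l_i\,r_{L-1-j}}{\sigma^2}$, and $\mathrm{LSE}$ over terminal states and over all $l$ gives $\log S$. I expect the fussy points to be the index bookkeeping at the wrap seam (verifying that off-band entries of $\mathbf{G}^{(p)}$ contribute $0$, so no interaction is double counted or omitted) and checking that the $l$-enumeration together with the DP paths realizes each count vector with $\sum_i c_i=\alpha$ exactly once; for $p=1$ both $l$ and $r$ are empty and the recursion is a one-dimensional pass in $m$.

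Finally, the complexity bounds follow by bookkeeping. The outer loop has $\binom{\alpha+p-1}{p-1}=O(\alpha^{p-1})$ iterations; each runs $O(b)$ transfer steps, each step sweeping the $O(\alpha^{p})$ states $(m,r)$ with $m\le\alpha$ and $r\in\mathbb{N}^{p-1}$, $|r|\le m$ (bounded by $(\alpha+1)\binom{\alpha+p-1}{p-1}$), spending $O(p)$ on $S_1$ and $O(\alpha)$ transitions of $O(p)$ cost; the finalization is $O(\alpha^{p}p^2)$ per outer iteration. In total this is $O(bp\alpha^{2p})$, collapsing to $O(b\alpha^2)$ when $p=1$, which improves on the $O(2^\alpha)$ scaling reported by \citet{feldman2025privacy}. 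For memory, only the scalar $\log S$ carries over between outer iterations, while each iteration holds two state tables of size $O(\alpha^{p})$ and the banded table $\mathbf{G}^{(p)}$ of size $O(bp)$, for $O(\alpha^{p}+bp)$ total.
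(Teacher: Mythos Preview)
Your proposal is correct and follows essentially the same approach as the paper: rewrite the tuple sum as a sum over count vectors via the multinomial identity, bound $\mathbf{G}$ entrywise by its cyclically banded truncation $\mathbf{G}^{(p)}$ plus $\tau$ (yielding the $\tfrac{\tau\alpha}{2\sigma^2}$ slack via $\sum_{i<j}c_ic_j+\sum_i\binom{c_i}{2}=\binom{\alpha}{2}$), and evaluate the truncated partition function by fixing the prefix $l$, running a sliding-window transfer-matrix DP with state $(m,r)$, and closing the cycle at the end. Your write-up is in places more explicit than the paper's---you state the loop invariant, and you actually supply the argument that $\mathbf{G}$ is cyclically $p$-banded when $\mathbf{C}$ is---but the structure, correctness argument, and complexity accounting are the same.
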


Algorithm~\ref{alg:renyi_dynamic_program} evaluates the sum in~\eqref{eq:divergence_add} exactly when the Gram matrix $\mathbf{G}$ is $p$ cyclically banded, and returns an upper bound otherwise by truncating $\mathbf{G}$ to its $p$ cyclically banded part $\mathbf{G}^{(p)}$ and adding the correction term $\frac{\tau\alpha}{2\sigma^2}$. The indices in~\eqref{eq:divergence_add} interact cyclically; we break this cycle by conditioning on the short prefix $\mathbf{l}=(l_0,\dots,l_{p-2})$ and then compute the remaining contribution via a forward dynamic program. We glue the cycle back at the end by adding the interactions between indices $b-p+2,\dots,b-1$ and $0,\dots,p-2$. For numerical stability, all aggregations are performed using the log-sum-exp (LSE) operation. See the proof of Lemma~\ref{lem:renyi-dynamic_program} for a detailed description.

However, this is not sufficient to prove $(\varepsilon,\delta)$-DP, since the notion of neighboring datasets also requires a bound in the ``add'' direction. We found it substantially harder to compute the divergence $\mathrm{R}_{\alpha}(\hat{Q}\,\|\,\hat{P})$ explicitly. Nevertheless, it admits a tight and easily computable upper bound, which we present in the following lemma and formalize in Algorithm~\ref{alg:renyi_dynamic_program_remove} in~\cref{appendix:full_renyi_accountant}.

\begin{restatable}{lemma}{RemoveRenyiBound}
\label{lem:remove-renyi-bound}
The Rényi divergence in the ``add'' direction is bounded as
\begin{equation}
    \mathrm{R}_{\alpha}(\hat{Q}\|\hat{P})
    \le
    \frac{1}{2b\sigma^2} \sum_{j = 1}^b \mathbf{G}_{j,j}
    \;+\;
    \frac{\alpha - 1}{2b^2\sigma^2} \sum_{i = 1}^b \sum_{j = 1}^b \mathbf{G}_{i,j},
\end{equation}
where $\mathbf{G}$ is the Gram matrix with entries $\mathbf{G}_{i,j} = \langle \mathbf{m}_i, \mathbf{m}_j\rangle$.
\end{restatable}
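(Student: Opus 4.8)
By definition, $\mathrm{R}_\alpha(\hat Q\,\|\,\hat P) = \frac{1}{\alpha-1}\log \mathbb{E}_{x\sim\hat Q}\!\left[(\hat q(x)/\hat p(x))^{\alpha-1}\right]$, where $\hat q$ is the density of $\mathcal{N}(0,\sigma^2 I)$ and $\hat p(x)=\frac1b\sum_{i=1}^b\phi_i(x)$ with $\phi_i$ the density of $\mathcal{N}(\mathbf m_i,\sigma^2 I)$. The obstruction is the mixture sitting in the denominator. The plan is to eliminate it by replacing the arithmetic mean of the $\phi_i$ by their geometric mean: since $\log$ is concave, Jensen's inequality gives the pointwise lower bound $\hat p(x)\ge \prod_{i=1}^b \phi_i(x)^{1/b}=:\tilde p(x)$, and since $t\mapsto t^{\alpha-1}$ is increasing on $[0,\infty)$ for $\alpha>1$, we obtain $\mathrm{R}_\alpha(\hat Q\,\|\,\hat P)\le \frac{1}{\alpha-1}\log\mathbb{E}_{x\sim\hat Q}\!\left[(\hat q(x)/\tilde p(x))^{\alpha-1}\right]$.

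Next I would identify $\tilde p$ as a sub-normalized Gaussian. Writing $\bar{\mathbf m}=\frac1b\sum_{i=1}^b\mathbf m_i$ and using the bias--variance identity $\frac1b\sum_i\|x-\mathbf m_i\|^2=\|x-\bar{\mathbf m}\|^2+\big(\frac1b\sum_i\|\mathbf m_i\|^2-\|\bar{\mathbf m}\|^2\big)$, a short computation yields $\tilde p(x)=e^{-c}\,\bar\phi(x)$, where $\bar\phi$ is the density of $\mathcal{N}(\bar{\mathbf m},\sigma^2 I)$ and $c=\frac{1}{2\sigma^2}\big(\frac1b\sum_i\|\mathbf m_i\|^2-\|\bar{\mathbf m}\|^2\big)\ge 0$ (nonnegativity is just convexity of $\|\cdot\|^2$, and is consistent with $\tilde p$ being sub-normalized by AM--GM). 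Substituting, $\mathbb{E}_{x\sim\hat Q}[(\hat q/\tilde p)^{\alpha-1}]=e^{c(\alpha-1)}\,\mathbb{E}_{x\sim\hat Q}[(\hat q/\bar\phi)^{\alpha-1}]=e^{c(\alpha-1)}\,e^{(\alpha-1)\mathrm{R}_\alpha(\mathcal{N}(0,\sigma^2 I)\,\|\,\mathcal{N}(\bar{\mathbf m},\sigma^2 I))}$, hence $\mathrm{R}_\alpha(\hat Q\,\|\,\hat P)\le c+\mathrm{R}_\alpha(\mathcal{N}(0,\sigma^2 I)\,\|\,\mathcal{N}(\bar{\mathbf m},\sigma^2 I))$.

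Finally, plug in the closed form for the Rényi divergence of two Gaussians with common covariance, $\mathrm{R}_\alpha(\mathcal{N}(0,\sigma^2 I)\,\|\,\mathcal{N}(\bar{\mathbf m},\sigma^2 I))=\frac{\alpha\|\bar{\mathbf m}\|^2}{2\sigma^2}$, and combine with $c$. The $-\|\bar{\mathbf m}\|^2/(2\sigma^2)$ term inside $c$ cancels one unit of $\alpha\|\bar{\mathbf m}\|^2/(2\sigma^2)$, leaving $\mathrm{R}_\alpha(\hat Q\,\|\,\hat P)\le \frac{1}{2b\sigma^2}\sum_{i=1}^b\|\mathbf m_i\|^2+\frac{\alpha-1}{2\sigma^2}\|\bar{\mathbf m}\|^2$. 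Substituting $\|\mathbf m_i\|^2=\mathbf G_{i,i}$ and $\|\bar{\mathbf m}\|^2=\frac{1}{b^2}\sum_{i,j}\langle\mathbf m_i,\mathbf m_j\rangle=\frac{1}{b^2}\sum_{i,j}\mathbf G_{i,j}$ produces exactly the claimed bound.

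I expect the only delicate step to be the first one: verifying that the geometric-mean replacement is simultaneously (i) a valid pointwise lower bound on the mixture density and (ii) monotone under $(\cdot)^{\alpha-1}$, and tracking that the resulting $\tilde p$ integrates to less than one without this causing any trouble (it does not, since $c\ge 0$ only enlarges the bound). Everything downstream is a direct Gaussian moment computation; one can sanity-check tightness in the degenerate case where all $\mathbf m_i$ coincide, in which $\hat P$ is a single Gaussian and the bound is exact.
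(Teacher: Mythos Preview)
Your proof is correct and follows essentially the same approach as the paper: both lower-bound the mixture density $\hat p$ by the geometric mean of its components (AM--GM / Jensen), then reduce to a Gaussian integral. The only cosmetic difference is that you package the geometric mean as $e^{-c}\bar\phi$ and invoke the closed-form R\'enyi divergence between two Gaussians, whereas the paper expands the exponent directly and computes the expectation via the Gaussian moment generating function; the two calculations are line-for-line equivalent.
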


Taking the maximum over the ``add'' and ``remove'' directions, $
\max\!\bigl(\mathrm{R}_{\alpha}(\hat{Q}\,\|\,\hat{P}),\,\mathrm{R}_{\alpha}(\hat{P}\,\|\,\hat{Q})\bigr)$, and converting it to a  hockey stick divergence via Theorem~\ref{thm:renyi_to_hockey_stick_translation} then completes the computation; see Algorithm~\ref{alg:renyi_dynamic_program_full}. In practice, we found the ``remove'' direction bound to consistently dominate, which 
aligns with observations from prior work (see details in~\cref{appendix:renyi_add_remove_slit_experiments}). To run the algorithm, we must choose the R\'enyi order $\alpha$ that yields the tightest privacy guarantee. For practical values of $\epsilon$ and $\delta$, the optimal $\alpha$ typically lies between $2$ and $40$. Theoretically, Theorem~\ref{thm:renyi_to_hockey_stick_translation} suggests that for small $\delta$, the optimal $\alpha$ grows logarithmically with $1/\delta$.

\begin{algorithm}[t!]
\caption{Rényi Accountant Full Algorithm}
\label{alg:renyi_dynamic_program_full}
\begin{algorithmic}[1]

\REQUIRE Correlation matrix $\mathbf{C}$, bandwidth $p$, separation $b$, parameter $\alpha \in \mathbb{N}_{+}$.

\STATE $\rho_{\mathrm{remove}} \leftarrow \mathrm{ReAcc_{remove}}(\mathbf{C}, p, b, \alpha)$ \qquad \quad \, \text{\# Algorithm \ref{alg:renyi_dynamic_program}}

\STATE $\rho_{\mathrm{add}} \leftarrow \mathrm{ReAcc_{add}}(\mathbf{C}, p, b, \alpha)$ \quad \text{\# Algorithm \ref{alg:renyi_dynamic_program_remove}}

\STATE $\rho \leftarrow \max(\rho_{\mathrm{remove}}, \rho_{\mathrm{add}})$
\STATE
$\delta(\alpha,\varepsilon) \leftarrow
\exp\!\big((\rho-\varepsilon)(\alpha-1)\big)
(\alpha-1)^{-1}(1-1/\alpha)^\alpha$

\STATE \textbf{return} $\delta(\alpha,\varepsilon)$
\end{algorithmic}
\end{algorithm}

\section{Conditional Composition Accountant}
\iffalse
The proposed dynamic program allows for an exact evaluation of the R\'enyi divergence and thus cumulant generating function for any $\alpha_+ \in \sN_+$.\footnote{Real-valued orders $\alpha$ can be evaluated via interpolation, see Corollary 10 in~\cite{wang2019subsampled}.}.
However, this approach has two limitations when considering very strict privacy budgets, i.e., ``small $\epsilon$''.
Firstly, a tight conversion to privacy profiles $\delta(\epsilon)$ via the inverse Laplace transform~\cite{zhu2022optimal} is generally intractable.
For this reason, we need to use pessimistic conversion formulae such as~\cref{thm:renyi_to_hockey_stick_translation}. These primarily capture the large deviation behavior of privacy loss $L_{P,Q}$~\cite{alghamdi2023saddle}, i.e., are better suited for ``large $\epsilon$''.
Secondly, while any choice of R\'enyi order $\alpha$ yields a valid bound on $\delta(\epsilon)$, better results for small $\epsilon$ are generally obtained with large $\alpha$.
The runtime of our algorithm is polynomial in $\alpha$, but the degree depends on truncated bandwidth $p$ (see~\cref{lem:renyi-dynamic_program}).
With a fixed compute budget, 
the Gram matrix thus needs to be truncated more aggressively, 
which introduces additional slack for ``small $\epsilon$''.
\fi

The proposed dynamic program allows for an exact evaluation of the R\'enyi divergence for any $\alpha_+ \in \sN_+$.\footnote{Real-valued orders $\alpha$ can be evaluated via interpolation, see Corollary 10 in~\cite{wang2019subsampled}}
These bounds can be tightly converted to privacy profiles $\delta(\epsilon)$ via the inverse Laplace transform~\cite{zhu2022optimal}, but this is generally intractable. Conversion formulae such as~\cref{thm:renyi_to_hockey_stick_translation} sacrifice some of this tightness, and are best suited for capturing the large deviation behavior of privacy loss $L_{P,Q}$~\cite{alghamdi2023saddle}, i.e., ``large $\epsilon$''.
We therefore derive a complementary method that---as our experiments demonstrate---yields tighter bounds under strict privacy budgets, i.e., ``small $\epsilon$''.

%In the following, we therefore derive a complementary method that
%side-steps these limitations of R\'enyi accounting and
%---as our experiments demonstrate---can yield tighter bounds for ``small $\epsilon$''.

What prevents us from using more recent privacy accountants that avoid the looseness of R\'enyi conversion formulae  (e.g., PLD convolution~\cite{sommer2019privacyloss,koskela2020computing})
is that they assume the dominating pair to factorize into per-step dominating pairs 
$P = \bigtimes_{n=1}^N P^{(n)}$ and $Q = \bigtimes_{n=1}^N Q^{(n)}$.
Matrix mechanisms and random allocation introduce shared randomness 
due to noise correlation and the fixed number of participations.
The dominating pair from~\cref{lem:dominating_pair}  thus violates this independence assumption.

Conditional composition\footnote{It is also referred to as posterior sampling in~\cite{feldman2025privacy}, following earlier work in~\cite{erlingsson2020encode}.} is a tool for overcoming this interdependence problem by taking an arbitrary dominating pair and bounding its privacy profile with a factorizing dominating pair.
For our purposes, we use the following formulation (for its relation to the general Theorem, see~\cref{appendix:proofs_conditional_composition}).
\begin{restatable}{lemma}{conditionalcomposition}[Special case of Theorem 3.1 in~\cite{choquette2023privacy}]
\label{lemma:conditional_composition}
    Let $P, Q$ be distributions on $\sR^{N}$.
    For every step $n \in {N}$, choose a pair of distributions $P^{(n)}, Q^{(n)}$.
    Let $A^{(n)} \subseteq \sR^N$ refer to all ``good'' outcomes $\tilde{y}$ such that
    the conditional distributions $P_{y_n | \tilde{y}_{1:n-1}}$
    and $Q_{y_n | \tilde{y}_{1:n-1}}$ are dominated by $P^{(n)}, Q^{(n)}$.
    Define ``bad'' outcomes $E = \bigcup\nolimits_{n=1}^N \overline{A^{(n)}}$ with probability $\delta_E = P(E)$. Then, 
    \begin{equation}\label{eq:conditional_composition_bound}
        H_\gamma(P || Q) \leq H_\gamma \left( \bigtimes\nolimits_{n=1}^N P^{(n)}  || \bigtimes\nolimits_{n=1}^N  Q^{(N)} \right) + \delta_E.
    \end{equation}
\end{restatable}
\cref{lemma:conditional_composition} can be summarized as follows: If we can bound the privacy of all steps independently of earlier outcomes---except for some low-probability cases $E$---then we can use standard privacy accounting at a small additional cost of $\delta_E = P(E)$.

While~\cref{lemma:conditional_composition} provides a valid composition rule, the challenge in applying it lies in 
(a) choosing appropriate per-step distributions $P^{(n)}, Q^{(n)}$,
(b) finding a criterion for whether they are a valid dominating pair for the conditional distribution at each step $n$, 
(c) analyzing the probability of this criterion being violated.
We address these challenges in~\cref{section:bad_outcome_from_dominating_pairs}.
After that, we use these results to define an algorithm that automatically yields valid per-step dominating pairs for privacy accounting 
given a user-specified probability $\delta_E$ in~\cref{section:dominating_pairs_from_bad_outcome}.
All proofs can be found in~\cref{appendix:proofs_conditional_composition}.

\subsection{Non-dominance probability given per-step distributions}\label{section:bad_outcome_from_dominating_pairs}

\textbf{Per-step distributions.}
From the definition of $P, Q$ in~\cref{lem:dominating_pair} and the law of total probability, we know that for any previous outcome $\vy_{1:n-1} \in \sR^{n-1}$ the conditional density of step $n$ is 
\begin{equation}
    p(y_n \mid \vy_{1:n-1}) = \sum_{i=1}^b \mathcal{N}(\evy_n \mid \emm_{i,n}, \sigma) p(z=i \mid \vy_{1:n-1}),
    \quad
    q(y_n \mid \vy_{1:n-1}) = \mathcal{N}(\evy_n \mid 0, \sigma),
    \label{eq:per_step_distributions_posterior}
\end{equation}
with non-negative means $\evm_{i,n} \in \sR_+$ and 
$p(z=i) = \frac{1}{b}$, where $b$ is the number of batches per epoch.
Thus, a natural choice for our per-step distributions $P^{(n)}, Q^{(n)}$ that should be dominating is via 
\begin{equation}
    p^{(n)}(y_n) = \sum_{i=1}^b \mathcal{N}(\evy_n \mid \emm_{i,n}, \sigma) p^{(n)}(z=i),
    \quad
    q^{(n)}(y_n) = \mathcal{N}(\evy_n \mid 0, \sigma),
    \label{eq:per_step_distributions_fixed}
\end{equation}
with some appropriate choice of categorical mixture weights $p^{(n)}(z=i)$.
Note that $P^{(n)}$ and $Q^{(n)}$ in~\cref{eq:per_step_distributions_fixed} are fixed distributions. In particular, the mixture weights $p^{(n)}(z=i)$ do not 
depend on earlier outcomes $\vy_{1:n-1}$,
unlike the posterior mixture weights in~\cref{eq:per_step_distributions_posterior}.

\textbf{Dominance criterion.}
For any \emph{fixed} earlier outcomes $\vy_{1:n-1}$, \cref{eq:per_step_distributions_posterior} and~\cref{eq:per_step_distributions_fixed} contain Gaussian mixtures with identical means but different weights.
Intuitively, one pair dominates the other if it assigns more weight to larger mixture components, since these correspond
to participation patterns with more privacy leakage.
This intuition has been formally proven by~\citet{choquette2023privacy} in terms of stochastic dominance between priors.
However, the following derived criterion in terms of~\emph{reverse hazard functions} is more convenient to work with for reasons we shall see shortly:
\begin{restatable}{lemma}{reversehazardtodominance}
    \label{lemma:reverse_hazard_dominance_criterion}
    Assume non-decreasing mixture means $0 \leq m_{1,n} \leq \dots \leq m_{b,n}$ in~\cref{eq:per_step_distributions_posterior,eq:per_step_distributions_fixed}.
    If the posterior reverse hazard function is pointwise less or equal, i.e.,
    \begin{equation}\label{eq:dominance_criterion_reverse_hazard}
        P(z=i \mid z \leq i, \vy_{1:n-1}) \leq P^{(n)}(z=i \mid z \leq i ) \quad \text{ for all $1 \leq i \leq b$},
    \end{equation}
    then the distributions from~\cref{eq:per_step_distributions_fixed} dominate those from~\cref{eq:per_step_distributions_posterior}, i.e., 
    $(P_{y_n | y_{1:n-1}}, Q_{y_n | y_{1:n-1}}) \prec (P^{(n)}_{y_n}, Q^{(n)}_{y_n})$
    and $(Q_{y_n | y_{1:n-1}}, P_{y_n | y_{1:n-1}}) \prec (Q^{(n)}_{y_n},P^{(n)}_{y_n})$.
\end{restatable}
The benefit of this formulation is that it reduces the problem of testing for dominance to
a standard machine learning problem:
For each $i \in [b]$, we simply need to 
evaluate the probability of component $i$ in a Gaussian mixture model with components $1,\dots,i$.
From a privacy perspective, Bayes law shows that~\cref{eq:dominance_criterion_reverse_hazard} is equivalent to testing a tail bound on a Gaussian mixture privacy loss:
\begin{restatable}{lemma}{reversehazardloss}\label{lemma:reverse_hazard_loss}
    Let $s(x) = (1 + e^{-x})^{-1}$ be the logistic sigmoid function.
    For any $i \in [b]$,
    the conditional reverse hazard function is 
     $P(z =i \mid z \leq i, \vy_{1:n-1}) = s(- \log(i-1) - L_i(\vy_{1:n-1}))$ with
    \begin{equation}\label{eq:gaussian_mixture_privacy_loss}
        L_i(\vy_{1:n-1}) =
        \log
        \left(
            \frac{ p(\vy_{1:n-1} \mid z < i)}{p(\vy_{1:n-1} \mid z = i)}
        \right)
        =
        \log\left(
        \frac{\sum_{j=1}^{i-1} \mathcal{N}(\vy_{1:n-1} \mid \vm_{j,:n-1}, \sigma^2 \eye)  \frac{1}{i-1}}{\mathcal{N}(\vy_{1:n-1} \mid \vm_{i,:n-1}, \sigma^2 \eye)}
        \right).
    \end{equation}
\end{restatable}

\textbf{Non-dominance probability.}
Finally, we need to calculate the probability $\delta_E$
that our dominance criterion is violated (``bad'' outcome $\bigcup_{n=1}^N \overline{A^{(n)}}$)
under the randomness of the overall dominating pair from~\cref{lem:dominating_pair},
i.e., $\vy \sim P$ (``remove'' relation) or $\vy \sim Q$ (``add'' relation).
It follows from~\cref{lemma:reverse_hazard_loss}
that this corresponds to tail probabilities on Gaussian mixture privacy loss $L_i$:
\begin{theorem}\label{corollary:prob_of_dominance}
    Let $\vy \sim R$ with 
    $R = P$ (``remove'') or $R = Q$ (``add')
    as defined in~\cref{lem:dominating_pair}.
    Define the shorthand
    $\lambda_i = P^{(n)}(z =i \mid z \leq i)$
    for the reverse hazard function.
    Choose thresholds $\tau_i$ with 
    $\lambda_i = s(-\log(i-1) - \tau_i)$.
    Then, $\Pr[\overline{A^{(n)}}] \leq \sum_{i=1}^b \Pr_{\vy \sim R}[L_i(\vy_{1:n-1}) \leq \tau_i]$.
\end{theorem}

\textbf{Analytical bounds.} \cref{corollary:prob_of_dominance}
reduces our non-dominance probabilities to tail bounds on a ternary form of privacy loss
$\smash{L_{\tilde{P},\tilde{Q},\tilde{R}}(\vx) = \log( \dd \tilde{P} \mathbin{/} \dd \tilde{Q})(\vx)}$
where $\tilde{P},\tilde{Q},\tilde{R}$ are (mixtures of) Gaussians.
However, if we could evaluate them exactly, we could also directly evaluate the privacy profile $\delta(\epsilon)$ of our overall mechanism and would not need privacy accounting in the first place (see, e.g., Theorem 5 in~\cite{balle2018improving}).
Luckily, bounding the extreme tails (small $\delta_E$) of privacy loss distributions is exactly what R\'enyi accounting excels at.
We could thus adapt the methods from~\cref{section:rdp_accountant} to evaluate the R\'enyi diverge / MGF of 
$\smash{L_{\tilde{P},\tilde{Q},\tilde{R}}}$ and compute a Chernoff bound (see Theorem 2 of~\citet{abadi2016deep}).
Instead, we choose to directly apply the AM-GM inequality underlying~\cref{lem:remove-renyi-bound} to bypass the lossy Chernoff bound. 
This results in the following analytical tail bound (for ``remove'', see~\cref{theorem:tail_amgm_remove}).
\begin{restatable}{theorem}{tailamgmadd}\label{theorem:tail_amgm_add}
    Let $\vy \sim R$ with $R = Q$ (``add')
    as defined in~\cref{lem:dominating_pair}.
    Define standard normal CDF $\Phi : \sR \rightarrow [0,1]$
    and $\psi = \mathrm{Uniform}([i-1])$
    Then $\Pr[L_i(\vy_{1:n-1}) \leq \tau_i] \leq \Phi\left(\frac{\tau_i - \nu}{\xi}\right)$ with 
    \begin{equation*}
        \nu = (||\vm_{i,:n-1}||_2^2 - \mathbb{E}_{j \sim  \psi} [||\vm_{j,:n-1}||_2^2])] \mathbin{/} (2 \sigma^2), 
        \qquad
        \xi =   \lvert\lvert \vm_{i:n-1} - \mathbb{E}_{j \sim \psi}[\vm_{j,:n-1}] \rvert\rvert_2 \mathbin{/} \sigma.
    \end{equation*}
\end{restatable}

\subsection{Per-step distributions given non-dominance probability}\label{section:dominating_pairs_from_bad_outcome}
Picking mixture weights for our per-step dominating pairs from~\cref{eq:per_step_distributions_fixed}
by hand is not practical. 
We thus propose~\cref{algorithm:conditional_composition},
which returns a dominating pair 
given iteration $n$ and significance $\beta = \Pr[\overline{A^{(n)}}]$.
Via union bound, $\beta = \delta_E \mathbin{/} N$ yields the desired non-dominance probability.
Using our analytical bounds~\cref{theorem:tail_amgm_add,theorem:tail_amgm_remove}, the tails $\tau_i$ can be found via bisection.
\begin{restatable}{theorem}{condcompbothcorrect}
    For the ``remove'' relation, the distributions returned by~\cref{algorithm:conditional_composition} dominate $P_{\evy_n \mid \tilde{\vy}_{1:n-1}}$ and $Q_{\evy_n \mid \tilde{\vy}_{1:n-1}}$
    with probability $1 - \beta$ under $P$.
    For the ``add'' relation, the returned distributions dominate $Q_{\evy_n \mid \tilde{\vy}_{1:n-1}}$ and $P_{\evy_n \mid \tilde{\vy}_{1:n-1}}$
    with probability $1 - \beta$ under $Q$.
\end{restatable}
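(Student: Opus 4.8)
Fix a step $n \in [N]$ and treat the ``remove'' relation first; the ``add'' case will be symmetric. The plan is to run the implication chain \cref{lemma:reverse_hazard_loss} $\Rightarrow$ \cref{corollary:reverse_hazard_to_dominance}, so that the statement reduces to a union bound over the $b$ mixture components, and all the quantitative work is pushed into the per-component tail bound $\tau_i$ (instantiated in \cref{section:variational_pld_bound}).

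First I would note that the sorting step of \cref{algorithm:conditional_composition} makes the hypothesis $0 \le \emm_{1,n} \le \dots \le \emm_{b,n}$ of \cref{corollary:reverse_hazard_to_dominance} hold without loss of generality: both $P$ and $Q$ are invariant under relabeling of the components of the uniform mixture, and the quantities $L_i$, the truncated means, and the reverse-hazard weights $p^{(n)}(z=i)$ are all defined relative to this sorted order. Under this ordering, \cref{corollary:reverse_hazard_to_dominance} states that whenever the conditional reverse hazard obeys $P(z=i \mid z \le i, \tilde{\vy}_{1:n-1}) \le \overline{\lambda}_i$ for \emph{all} $i \in [b]$, the output pair dominates $(P_{\evy_n \mid \tilde{\vy}_{1:n-1}}, Q_{\evy_n \mid \tilde{\vy}_{1:n-1}})$ in the ``remove'' direction and $(Q_{\evy_n \mid \tilde{\vy}_{1:n-1}}, P_{\evy_n \mid \tilde{\vy}_{1:n-1}})$ via its ``add'' conclusion. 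So it suffices to show this simultaneous pointwise inequality fails with probability at most $\beta$ under $P$ (resp.\ under $Q$).

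The next step is to convert the pointwise inequality into a tail event for the privacy loss $L_i$. By \cref{lemma:reverse_hazard_loss}, $P(z=i \mid z \le i, \tilde{\vy}_{1:n-1}) = s(-\log(i-1) - L_i(\tilde{\vy}_{1:n-1}))$ with $s$ the sigmoid, which is strictly increasing; since \cref{algorithm:conditional_composition} sets $\overline{\lambda}_i = s(-\log(i-1) - \tau_i)$, the event $P(z=i \mid z \le i, \tilde{\vy}_{1:n-1}) > \overline{\lambda}_i$ is exactly $\{L_i(\tilde{\vy}_{1:n-1}) < \tau_i\}$. The $i=1$ term is vacuous, since $\log 0 = -\infty$ forces $\overline{\lambda}_1 = 1$. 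Each $\tau_i$ is chosen by the subroutine of \cref{section:variational_pld_bound} as a lower tail bound with $\Pr_{\tilde{\vy} \sim P}[L_i(\tilde{\vy}_{1:n-1}) < \tau_i] \le \beta_i$, where the budgets satisfy $\sum_i \beta_i = \beta$. A union bound over $i \in [b]$ bounds the probability of the bad set $\bigcup_i \{L_i < \tau_i\}$ by $\beta$, so with probability at least $1-\beta$ under $P$ the hypotheses of \cref{corollary:reverse_hazard_to_dominance} hold for every $i$, and the ``remove'' conclusion follows. For the ``add'' relation the argument is verbatim, invoking the ``add'' conclusion of \cref{corollary:reverse_hazard_to_dominance} and tail bounds with $\Pr_{\tilde{\vy} \sim Q}[L_i(\tilde{\vy}_{1:n-1}) < \tau_i] \le \beta_i$, which yields probability $1-\beta$ under $Q$ as claimed.

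The real obstacle is not in this plan but in what \cref{section:variational_pld_bound} must supply: a rigorous and efficiently computable lower quantile $\tau_i$ of the privacy loss $L_i$ --- itself the log-likelihood ratio of an $(n-1)$-dimensional isotropic Gaussian against a uniform mixture of $i-1$ Gaussians --- valid under both $P$ and $Q$. The remainder is bookkeeping, but I would be careful about (i) the degenerate $i=1$ case, (ii) measurability of the events $\{L_i < \tau_i\}$ so the union bound is legitimate, and (iii) that the index conventions in \cref{lemma:reverse_hazard_loss} (the truncation to the first $n-1$ coordinates and the $n$-th coordinate $\emm_{i,n}$) are consistent with the post-sorting labeling used inside \cref{algorithm:conditional_composition}.
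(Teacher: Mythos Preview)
Your proposal is correct and follows essentially the same approach as the paper: use the sorting step to meet the hypothesis of \cref{corollary:reverse_hazard_to_dominance}, translate each reverse-hazard bound into the tail event $\{L_i < \tau_i\}$ via the sigmoid monotonicity in \cref{lemma:reverse_hazard_loss}, and then union-bound over $i$ using the per-component budgets $\beta_i$ summing to $\beta$. Your additional care about the vacuous $i=1$ case and the consistency of indexing after sorting is well-placed but not something the paper spells out.
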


\textbf{Complexity and amortization.}
When using~\cref{theorem:tail_amgm_add,theorem:tail_amgm_remove} with a constant number of bisection steps to obtain tail bounds $\tau_i$, the
overall complexity of evaluating~\cref{algorithm:conditional_composition} for every iteration $n \in [N]$
is $\mathcal{O}(N^2 b^2)$ for ``remove'' 
and $\mathcal{O}(N^2 b)$ for ``add''.
However, most of the computational cost in finding the $\tau_i$ arises from terms that  
are linear in noise multiplier $\sigma \in \sR_+$.
Thus, subsequent evaluations for different $\sigma$ 
can be performed in $\mathcal{O}(N b^2)$ (``remove) or $\mathcal{O}(N b \log b)$ (``add'').
This can be used to significantly speed up mechanism calibration, where $\sigma$ is adjusted to attain desired privacy parameters $(\epsilon,\delta)$ 
(see~\cref{appendix:condcomp_runtime_proof} for proofs and~\cref{appendix:empirical_runtime} for empirical runtimes).

\begin{wrapfigure}{r}{0.5\textwidth} % {r} for right side, 0.5 for half width
\vskip-0.75cm
\begin{minipage}{\linewidth}
\begin{algorithm}[H]
\caption{Conditional Composition Dominating Pairs}
\label{algorithm:conditional_composition}
\begin{algorithmic}
\REQUIRE Mixture means $\mm \in \sR_+^{b \times N}$, iteration $n$, batches per epoch $b$, significance $\beta$, relation $r$
    \STATE $\pi \gets \mathrm{argsort\_asc}(\mm_{:, n})$
    \STATE $\mmu_{1}, \dots, \mmu_{b} \gets \mm_{\pi(1), :n-1}, \dots, \mm_{\pi(b), :n-1}$
    \FOR{$i = b$ to $1$}
        \IF{$i = 1$}
            \STATE $\overline{\lambda}_i \gets 1$
        \ELSE
        \STATE $\tilde{P} \gets \sum_{j=1}^{i-1} \mathcal{N}(\mmu_j, \sigma^2 \eye) \cdot \frac{1}{i-1}$
        \STATE $\tilde{Q} \gets \mathcal{N}(\mmu_i, \sigma^2 \eye)$
        \IF{r = \texttt{REMOVE}}
            \STATE $\tilde{R} \gets \sum_{k=i}^{b} \mathcal{N}(\mmu_k, \sigma^2 \eye_{n-1}) \cdot \frac{1}{b}$
        \ELSE
            \STATE $\tilde{R} \gets \mathcal{N}(\vzero, \sigma^2 \eye_{n-1})$
        \ENDIF
        \STATE $L_{\tilde{P},\tilde{Q},\tilde{R}} \gets \log\left(\frac{\dd \tilde{P}}{\dd \tilde{Q}} (\vx)\right)$ with $\vx \sim \tilde{R}$
        \STATE{$\beta_i \gets \beta \mathbin{/} (b-1)$} \# Per-bound significance
        \STATE $\tau_i \gets \texttt{whp\_lower}(\mathrm{Law}(L_{\tilde{P},\tilde{Q},\tilde{R}}), \beta_i)$
        \STATE $\overline{\lambda}_i \gets \texttt{sigmoid}(- \log(i-1) - \tau_i)$
        \ENDIF
        \STATE $p_i \gets \lambda_i \cdot \prod_{j=i+1}^b (1 - \lambda_j)$
    \ENDFOR
    \IF{r = \texttt{REMOVE}}
        \STATE \textbf{return} $(\sum_{j=1}^b \mathcal{N}(\mu_j, \sigma) \cdot p_j), \mathcal{N}(0, \sigma)$
    \ELSE
        \STATE \textbf{return} $\mathcal{N}(0, \sigma), (\sum_{j=1}^b \mathcal{N}(\mu_j, \sigma) \cdot p_j)$
    \ENDIF
\end{algorithmic}
\end{algorithm}
  \end{minipage}
  \vskip-1.5cm
\end{wrapfigure}

\textbf{Variational tail bounds.}
A finding that may be of independent interest 
is that our privacy loss tail bounds from~\cref{theorem:tail_amgm_add,theorem:tail_amgm_remove}
are an instance of a more general family of variational bounds (see proofs in~\cref{appendix:proofs_variational}).
That is, every categorical distribution $\psi$ supported on $\{1,\dots,i\}$
yields a different bound on the CDF of ternary Gaussian mixture privacy loss random variable $L_{\tilde{P},\tilde{Q},\tilde{R}}$ where $R$ has $i$ components.
As demonstrated in~\cref{appendix:choice_of_variational_family}, choosing the best bound from a variational family $\Psi$ of distributions
can enable tighter privacy accounting.

\iffalse
Applying the previous results in sequence would let us
calculate $\delta_E$ and thus evaluate the overall conditional composition privacy guarantee from~\cref{eq:conditional_composition_bound} after hand-picking the mixture weights of per-step distributions $P^{(n)},Q^{(n)}$ for each $n \in [N]$.
Since this is not particularly practical,
we define~\cref{algorithm:conditional_composition}, which applies the previous results in reverse order to
automatically determine appropriate mixture weights given a single scalar $\delta_E \in (0,1]$.
It first determines $\tau_i$ such that 
the tail bound from~\cref{lemma:amgm_add} (or it's ``remove direction'' counterpart from~\todo{ref})
attains a desired significance,
then maps these $\tau_i$
to corresponding
bounds on the reverse hazard function
$\overline{\lambda_i} = P^{(n)}(z=i \mid z \leq i)$
from~\cref{lemma:reverse_hazard_dominance_criterion},
and finally maps these bounds
to mixture weights 
\fi

\iffalse
\textbf{Use for conditional composition.}
\cref{algorithm:conditional_composition} returns a dominating pair for a single step $n$. However, \cref{lemma:conditional_composition} requires $N$ such dominating pairs to hold simultaneously with probability $1 - \delta_E$. While one could use a union bound, we found a non-uniform allocation of probability across steps to be preferable (see details in~\cref{appendix:significance_allocation}).
\fi

\begin{figure}[b!]
    \centering
    \begin{subfigure}[t]{0.31\linewidth}
        \centering
        \includegraphics[width=\linewidth]{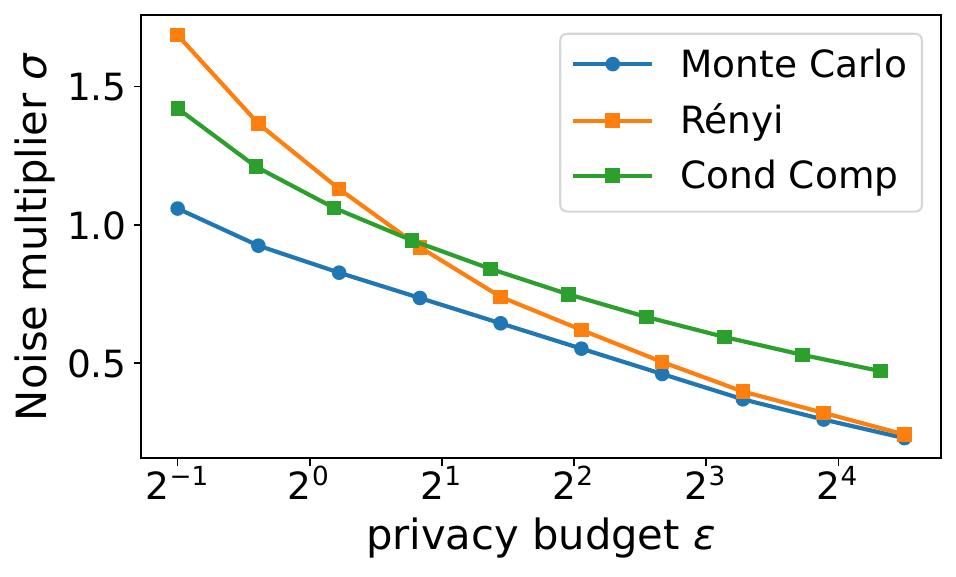}
        \subcaption{DP-SGD, $N\mathbin{=}100,\ k\mathbin{=}1$}
         \label{fig:rdp_mcmc_dpsgd_n_100_k_1}
    \end{subfigure}\hfill
    \begin{subfigure}[t]{0.31\linewidth}
        \centering
        \includegraphics[width=\linewidth]{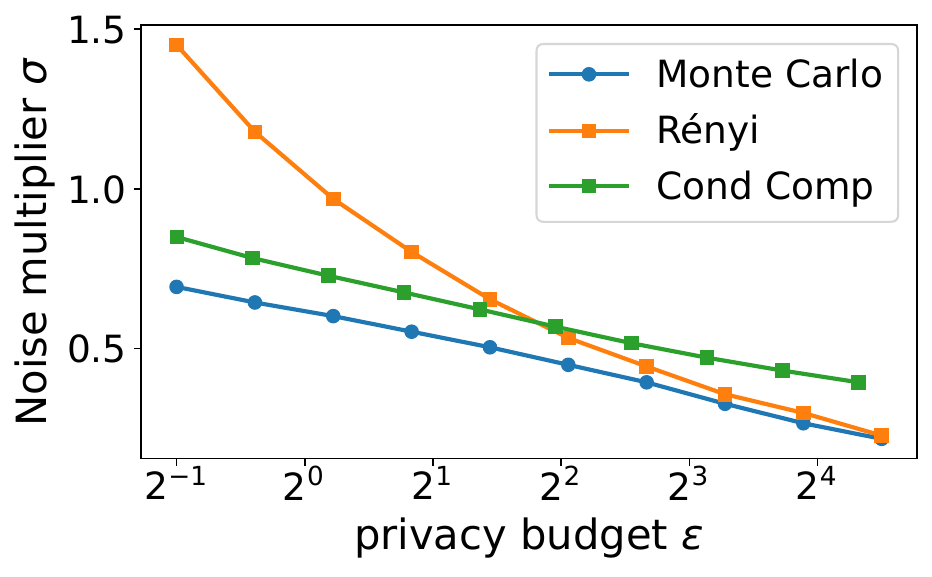}
        \subcaption{DP-SGD, $N\mathbin{=}1000,\ k\mathbin{=}1$}
        \label{fig:rdp_mcmc_dpsgd_n_1000_k_1}
    \end{subfigure}\hfill
    \begin{subfigure}[t]{0.31\linewidth}
        \centering
        \includegraphics[width=\linewidth]{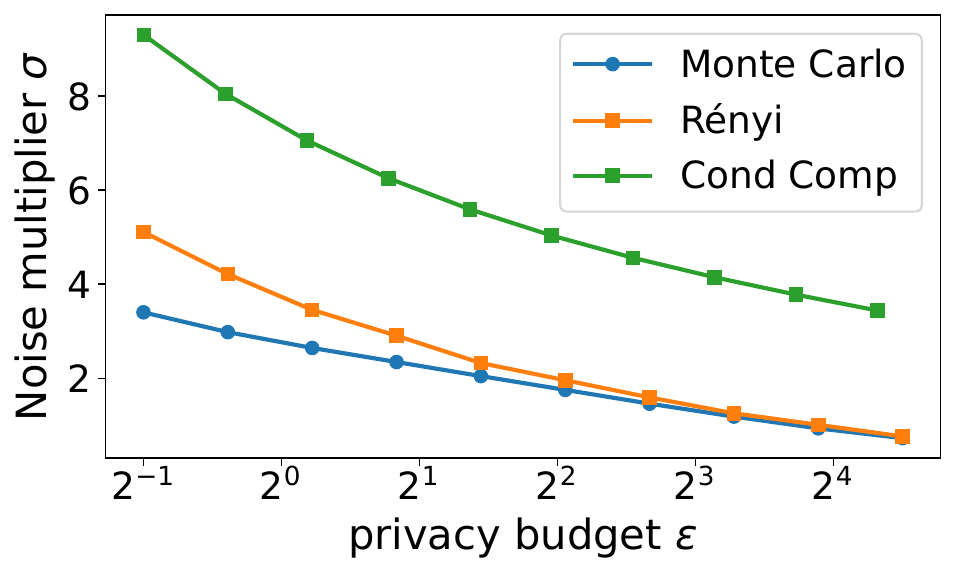}
        \subcaption{DP-SGD, $N\mathbin{=}1000,\ k\mathbin{=}10$}
        \label{fig:rdp_mcmc_dpsgd_n_1000_k_10}
    \end{subfigure}

    \begin{subfigure}[t]{0.31\linewidth}
        \centering
        \includegraphics[width=\linewidth]{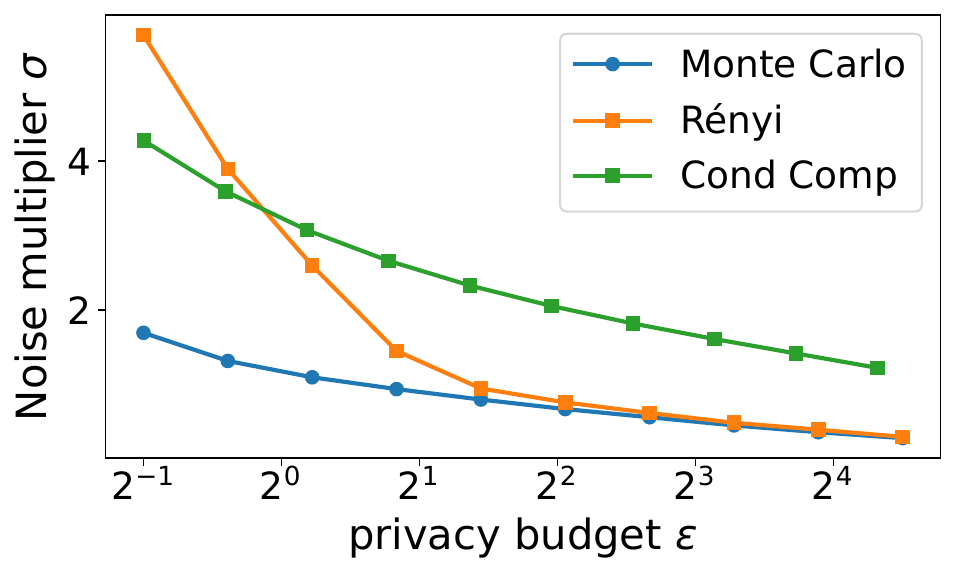}
        \subcaption{BSR, $N\mathbin{=}100,\ k\mathbin{=}1,\ p\mathbin{=}4$}
        \label{fig:rdp_mcmc_bsr_n_100_k_1_p_4}
    \end{subfigure}\hfill
    \begin{subfigure}[t]{0.31\linewidth}
        \centering
        \includegraphics[width=\linewidth]{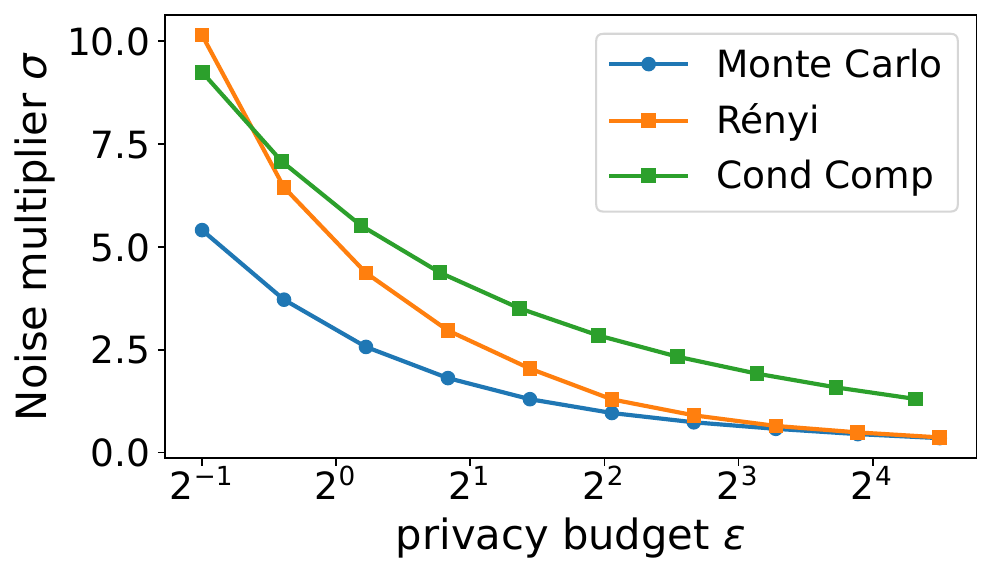}
        \subcaption{BSR, $N\mathbin{=}100,\ k\mathbin{=}1,\ p\mathbin{=}64$}
        \label{fig:rdp_mcmc_bsr_n_100_k_1_p_64}
    \end{subfigure}\hfill
    \begin{subfigure}[t]{0.31\linewidth}
        \centering
        \includegraphics[width=\linewidth]{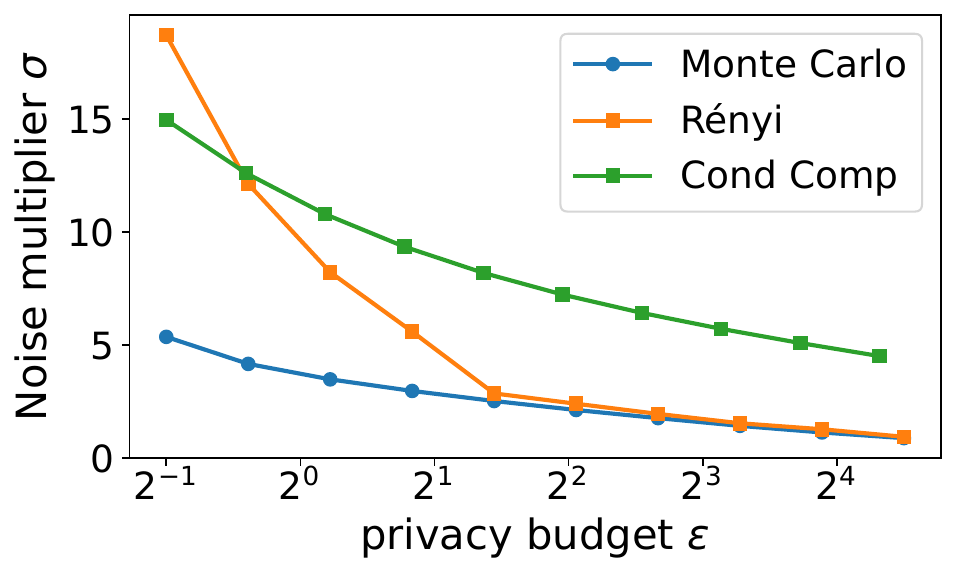}
        \subcaption{BSR, $N\mathbin{=}1000,\ k\mathbin{=}10,\ p\mathbin{=}4$}
        \label{fig:rdp_mcmc_bsr_n_1000_k_10_p_4}
    \end{subfigure}

    \caption{Comparison of the Rényi based and conditional composition based accountants with the Monte Carlo (MC) accountant for DP-SGD and Banded Square Root (BSR) at $\delta=10^{-5}$, with calibrated noise multipliers $\sigma$ shown as a function of the privacy budget $\varepsilon$. Each plot corresponds to a different choice of the number of iterations $n$, bandwidth $p$ and the number of epochs $k$.}
    \label{fig:rdp_mcmc_dpsgd}
    \vspace{-0.3cm}
\end{figure}

\section{Experiments}\label{section:experiments}

In this section, we evaluate the R\'enyi and conditional composition (Cond Comp) based accountants against the sampling based Monte Carlo accountant \citep{choquette2024near}.
%Code is provided with the supplementaries.
We emphasize that privacy guarantees derived via a Monte Carlo accountant are qualitatively different from the $(\epsilon,\delta)$-DP guarantees of our proposed accountants: they hold only with high probability, or require the mechanism to abstain (see~\cref{section:background_matrix_mechanisms}).
In favor of the baseline, we show raw Monte Carlo estimates (for probabilistic upper bounds, see~\cref{appendix:monte_carlo_whp_bounds}).
We perform privacy accounting across different modern matrix mechanisms, including  
DP-SGD ($\mC = \eye$) and BSR \citep{kalinin2024banded} in~\cref{fig:rdp_mcmc_dpsgd},
as well as BISR, BandInvMF \citep{kalinin2025back}, BLT \citep{dvijotham2024efficient} and BandMF \citep{mckenna2024scaling}  in~\cref{fig:rdp_mcmc_bisr} of~\cref{appendix:extra_experiments}. 
While DP-SGD, BSR, and BISR are defined analytically,
BandMF, BandInvMF, and BLT define the entries of $\mC$ through an optimization problem (see details in~\cref{appendix:experimental_setup}).
For consistency with the experiments in~\cite{choquette2024near}, we set $\delta=10^{-5}$
and plot the required noise multiplier $\sigma$ for different $\epsilon > 0$.
This matches the practical use of DP machine learning in which we 
first define a desired level $(\epsilon,\delta)$ of privacy and then calibrate the mechanism to attain it.

Across all experiments, our R\'enyi accountant closely matches the Monte Carlo reference in the low-privacy regime (roughly $\epsilon \ge 2$--$4$), consistent with its strength in capturing large deviations. In the high-privacy regime ($\epsilon \le 1$), our conditional composition accountant yields tighter bounds and enables smaller calibrated noise multipliers, especially under stronger subsampling (larger $b$, smaller batch size). In multi-epoch settings, the same qualitative pattern holds for non-trivial matrix mechanisms (BSR/BISR), whereas for multi-epoch DP-SGD the conditional composition bound can degrade and provides little benefit over R\'enyi accounting (see~\cref{appendix:reverse_hazard_jumps} for a detailed explanation). Finally, varying correlation bandwidth (e.g., $p=4$ vs.\ $p=64$) does not change these conclusions: the observed low- vs.\ high-privacy behavior is robust to the strength of noise correlation.

\textbf{Model utility.} The previously cited works already evaluate model utility for different factorizations across various noise multipliers $\sigma$. Nevertheless, the utility with our specific $\sigma$ may be of interest.
We thus conduct experiments on CIFAR-10 in~\cref{appendix:cifar_experiments}. In particular, we find that our guarantees are strong enough to capture the utility benefit of using correlated instead of uncorrelated noise.

\textbf{DP-SGD-specific bounds.} We derive bounds for correlated noise,
but they also apply to the special case of uncorrelated noise (DP-SGD).
For completeness, we compare to existing specialized accountants for DP-SGD and random allocation in~\cref{appendix:feldman_comparison}.
The method from~\citet{feldman2025privacy} is tighter but qualitatively similar, whereas  the PLD method from concurrent work~\citet{feldman2026efficient} matches the Monte Carlo accountant almost exactly.
This makes its generalization to correlated noise mechanisms a promising but non-trivial direction for future work.

\textbf{Empirical runtime.} One motivation for our methods was efficiency (specifically, to overcome the $\mathcal{O}(\delta^{-1})$ dependence of Monte Carlo sampling).
In~\cref{appendix:empirical_runtime,appendix:renyi_runtime_vs_feldman}, we complement our asymptotic analyses by measuring wall-clock runtime across parameter ranges.

\section{Conclusion}\label{section:conclusion}
The only available accountant for matrix mechanisms under balls-in-bins sampling relies on  Monte Carlo sampling \citep{choquette2024near}, which yields guarantees that hold only with high probability or require random abstention. In this work, we provide a way to perform accounting deterministically, that is, without relying on sampling. Our Rényi  accountant computes the guarantees efficiently via dynamic programming, improving the time complexity for DP-SGD under random allocation \citep{feldman2025privacy} from exponential to polynomial, and generalizing the analysis to subsampled matrix mechanisms. The resulting accountant offers tight privacy guarantees in the low privacy regime.
Our conditional composition accountant upper-bounds the privacy profile via
a sequence of independent Gaussian mixture mechanisms, which lets us apply tight accountants and offers better guarantees in the high privacy regime.

\textbf{Broader impact.} Our work is primarily targeted at mitigating negative societal impact by contributing towards improving model training with formal privacy guarantees. However, there are risks in misinterpretation or misrepresentation of these guarantees, which we discuss in  in~\cref{appendix:broader_impact}.

\textbf{Limitations and future work.}
There are inherent limitations to the Rényi accountant, as its complexity in the bandwidth is exponential, leading to a trade-off between runtime and tightness of the bound. With a more efficient implementation and allowing longer computation times, the accountant could be improved; however, the exponential dependence on bandwidth is provably unavoidable.
An limitation of conditional composition is that it uses a coarse partition of the co-domain into ``good'' and ``bad'' outcomes, which may result in overly pessimistic dominating pairs in the ``good case''. Future work should explore more fine-grained partitions. 
A limitation of our instantiation via analytical privacy loss bounds is that the underlying linear relaxation is inherently conservative.
An avenue for improvement is to expand upon the variational bounds from~\cref{appendix:proofs_variational} through computational variational inference.
Alternatively, one could use a tighter privacy accountant to improve the tail bounds
on the ternary privacy loss underlying the posterior.
Overall, this is the first work on sampling-free privacy accounting for mechanisms with correlated gradients and participations, 
and there naturally is room for further improvement through alternative approaches.

\clearpage

\section*{Acknowledgments}
We thank Arun Ganesh for insightful discussions and for sharing the code for the Monte Carlo accountant.
We are also grateful to Moshe Shenfeld for helping in evaluating their PLD accountant and providing feedback on our manuscript.
Nikita Kalinin’s research was funded in part by the Austrian Science Fund (FWF) [10.55776/COE12].

\bibliography{references}
\bibliographystyle{icml2026}

%%%%%%%%%%%%%%%%%%%%%%%%%%%%%%%%%%%%%%%%%%%%%%%%%%%%%%%%%%%%%%%%%%%%%%%%%%%%%%%
%%%%%%%%%%%%%%%%%%%%%%%%%%%%%%%%%%%%%%%%%%%%%%%%%%%%%%%%%%%%%%%%%%%%%%%%%%%%%%%
% APPENDIX
%%%%%%%%%%%%%%%%%%%%%%%%%%%%%%%%%%%%%%%%%%%%%%%%%%%%%%%%%%%%%%%%%%%%%%%%%%%%%%%
%%%%%%%%%%%%%%%%%%%%%%%%%%%%%%%%%%%%%%%%%%%%%%%%%%%%%%%%%%%%%%%%%%%%%%%%%%%%%%%
\newpage
\appendix
\onecolumn

\section{Additional Experimental Results}\label{appendix:extra_experiments}

\subsection{Additional matrix mechanisms}
\begin{figure}[h!]
    \centering

    \begin{subfigure}[t]{0.31\linewidth}
        \centering
        \includegraphics[width=\linewidth]{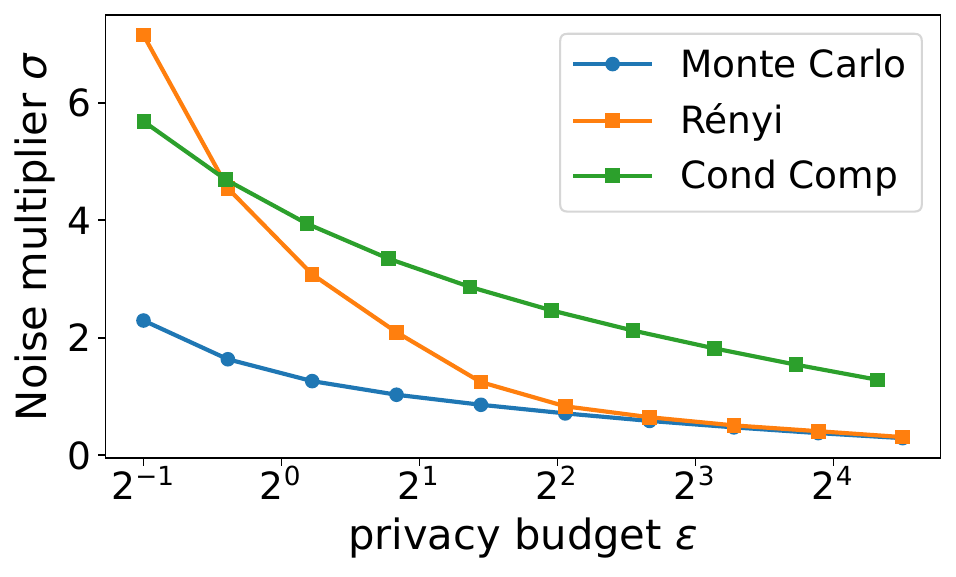}
        \subcaption{BISR, $N\mathbin{=}100,\ k\mathbin{=}1,\ p\mathbin{=}4$}
        \label{fig:rdp_mcmc_bisr_k_1_p_4}
    \end{subfigure}\hfill
    \begin{subfigure}[t]{0.31\linewidth}
        \centering
        \includegraphics[width=\linewidth]{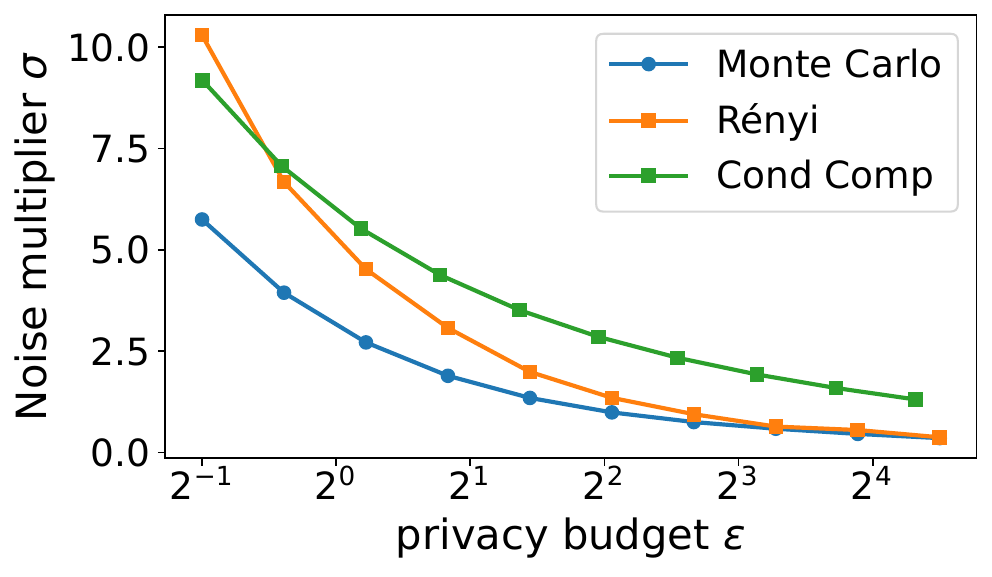}
        \subcaption{BISR, $N\mathbin{=}100,\ k\mathbin{=}1,\ p\mathbin{=}64$}
        \label{fig:rdp_mcmc_bisr_k_1_p_64}
    \end{subfigure}\hfill
    \begin{subfigure}[t]{0.31\linewidth}
        \centering
        \includegraphics[width=\linewidth]{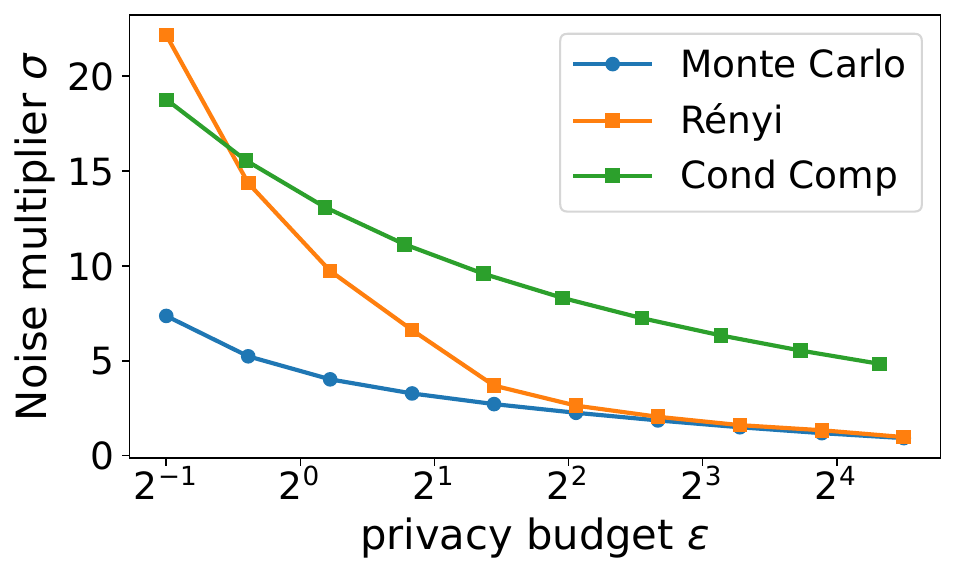}
        \subcaption{BISR, $N\mathbin{=}1000,\ k\mathbin{=}10,\ p\mathbin{=}4$}
        \label{fig:rdp_mcmc_bisr_k_10_p_4}
    \end{subfigure}
    
    \begin{subfigure}[t]{0.31\linewidth}
        \centering
        \includegraphics[width=\linewidth]{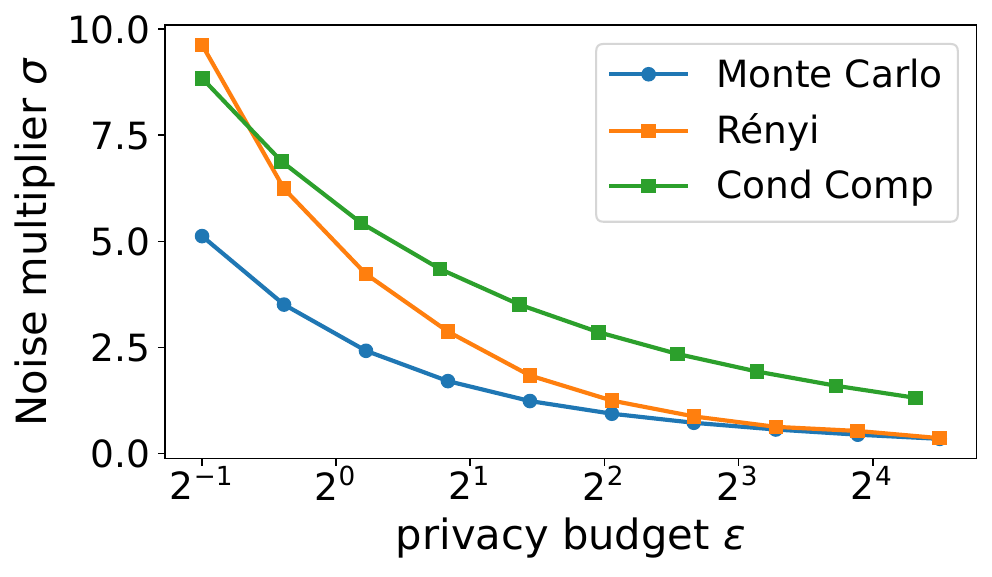}
        \subcaption{BLT, $N\mathbin{=}100,\ k\mathbin{=}1$}
    \end{subfigure}\hfill
    \begin{subfigure}[t]{0.31\linewidth}
        \centering
        \includegraphics[width=\linewidth]{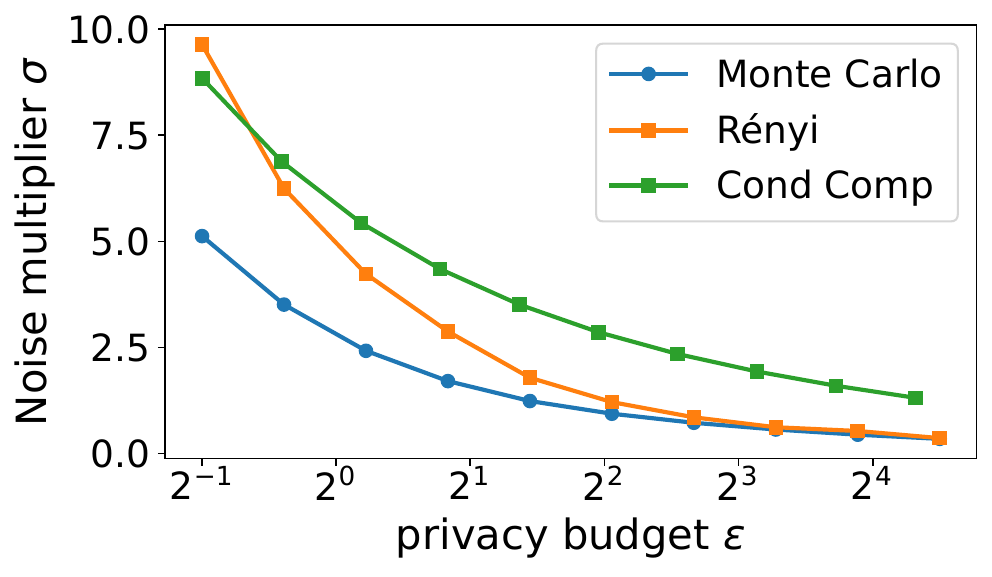}
        \subcaption{BandMF, $N\mathbin{=}100,\ k\mathbin{=}1,\ p\mathbin{=}64$}
    \end{subfigure}\hfill
    \begin{subfigure}[t]{0.31\linewidth}
        \centering
        \includegraphics[width=\linewidth]{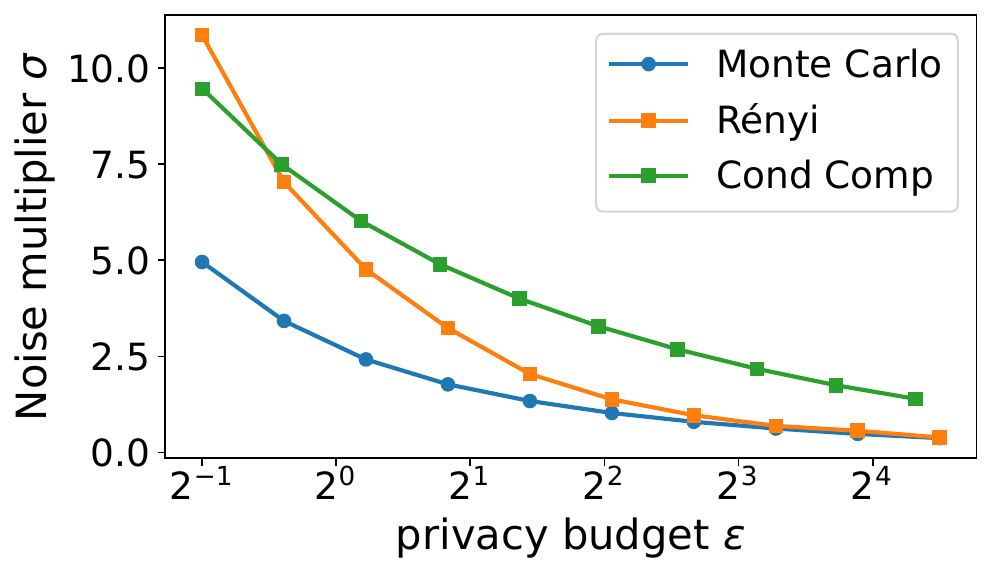}
        \subcaption{BandInvMF, $N\mathbin{=}100,\ k\mathbin{=}1,\ p\mathbin{=}4$}
    \end{subfigure}

    \caption{Comparison of the RDP based and conditional composition based accountants with the MC accountant for BISR, BandInvMF, BLT and BandMF at $\delta=10^{-5}$, with calibrated noise multipliers $\sigma$ shown as a function of the privacy budget $\varepsilon$. Each plot corresponds to a different choice of the number of iterations $n$, the number of epochs $k$, and parameter $p$.}
    \label{fig:rdp_mcmc_bisr}
\end{figure}

\subsection{Monte Carlo high-probability bounds.}\label{appendix:monte_carlo_whp_bounds}
\begin{figure*}[h!]
    \centering
    \begin{subfigure}[t]{0.31\linewidth}
        \centering
        \includegraphics[width=\linewidth]{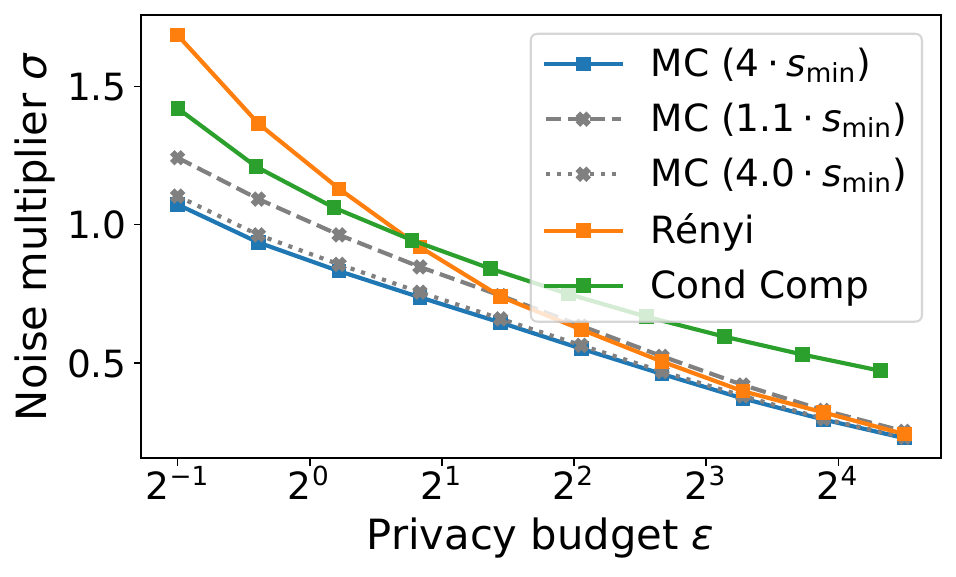}
        \subcaption{DP-SGD, $N\mathbin{=}100,\ k\mathbin{=}1$}
    \end{subfigure}\hfill
    \begin{subfigure}[t]{0.31\linewidth}
        \centering
        \includegraphics[width=\linewidth]{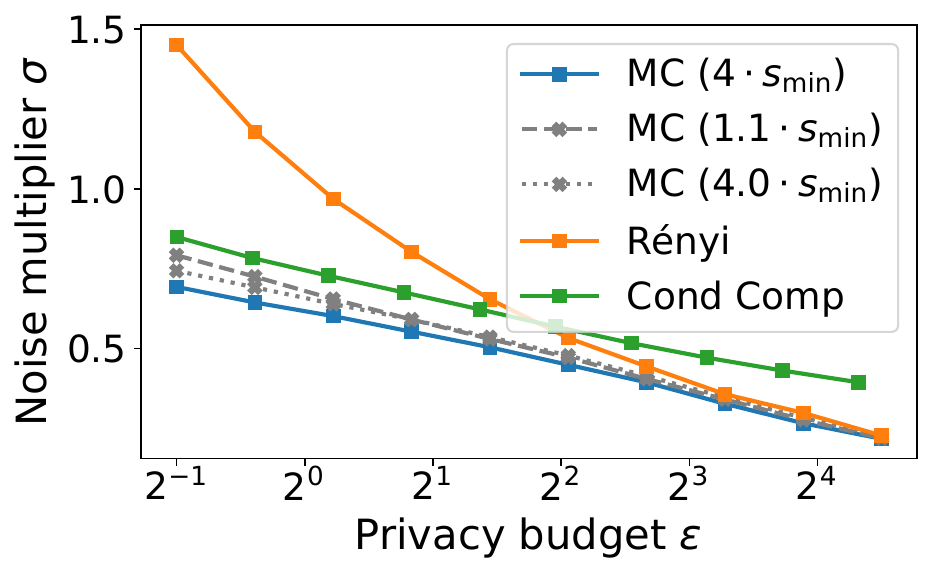}
        \subcaption{DP-SGD, $N\mathbin{=}1000,\ k\mathbin{=}1$}
    \end{subfigure}\hfill
    \begin{subfigure}[t]{0.31\linewidth}
        \centering
        \includegraphics[width=\linewidth]{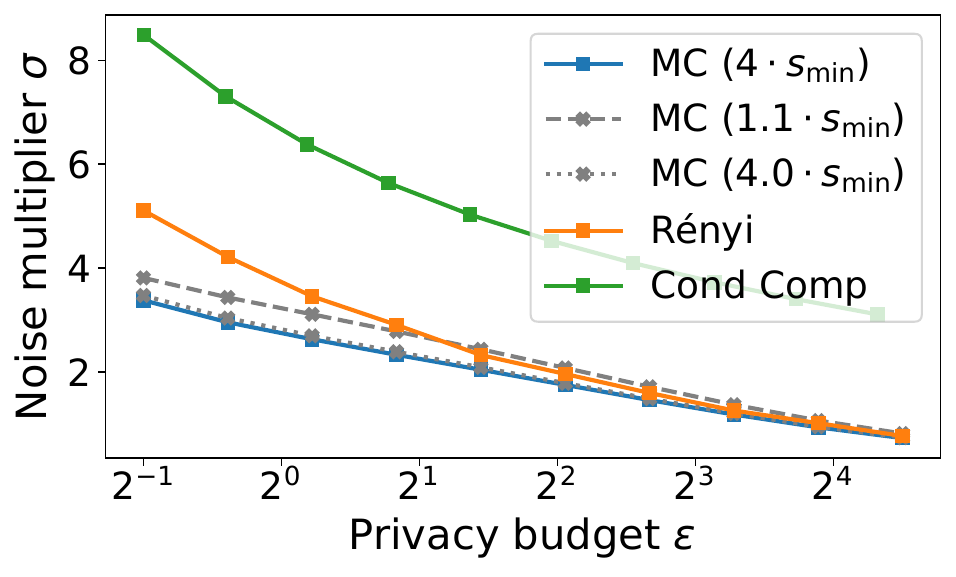}
        \subcaption{DP-SGD, $N\mathbin{=}1000,\ k\mathbin{=}10$}
    \end{subfigure}

    \begin{subfigure}[t]{0.31\linewidth}
        \centering
        \includegraphics[width=\linewidth]{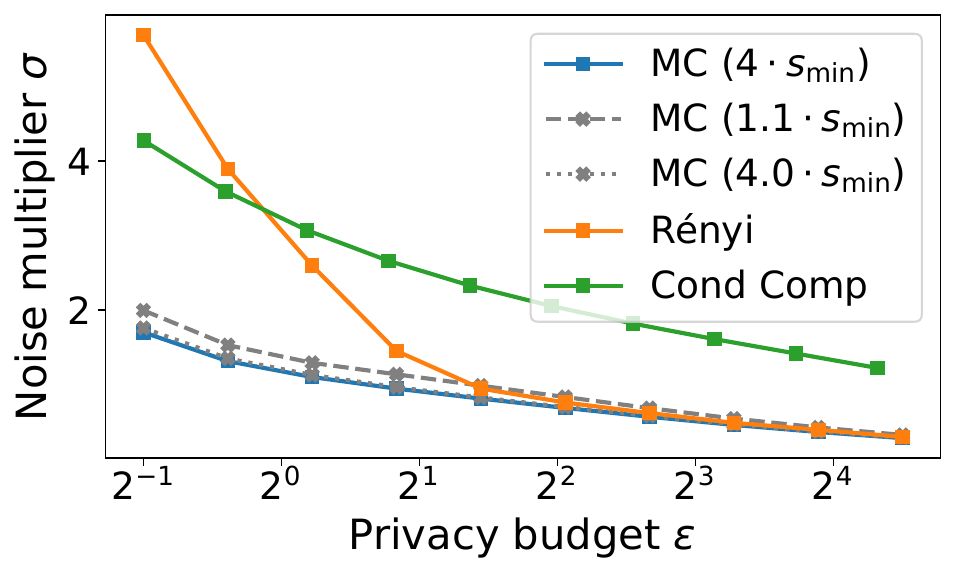}
        \subcaption{BSR, $N\mathbin{=}100,\ k\mathbin{=}1,\ p\mathbin{=}4$}
    \end{subfigure}\hfill
    \begin{subfigure}[t]{0.31\linewidth}
        \centering
        \includegraphics[width=\linewidth]{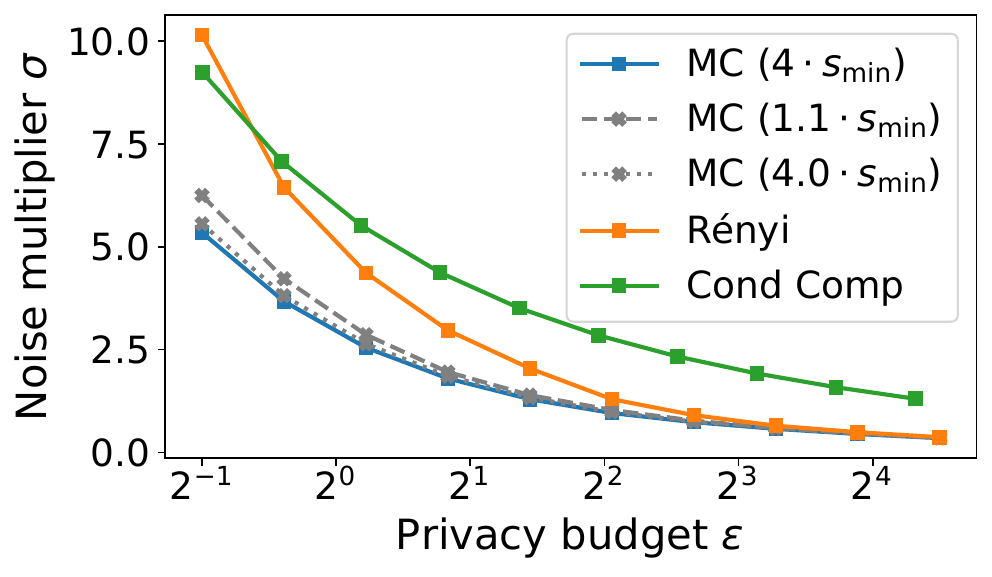}
        \subcaption{BSR, $N\mathbin{=}100,\ k\mathbin{=}1,\ p\mathbin{=}64$}
    \end{subfigure}\hfill
    \begin{subfigure}[t]{0.31\linewidth}
        \centering
        \includegraphics[width=\linewidth]{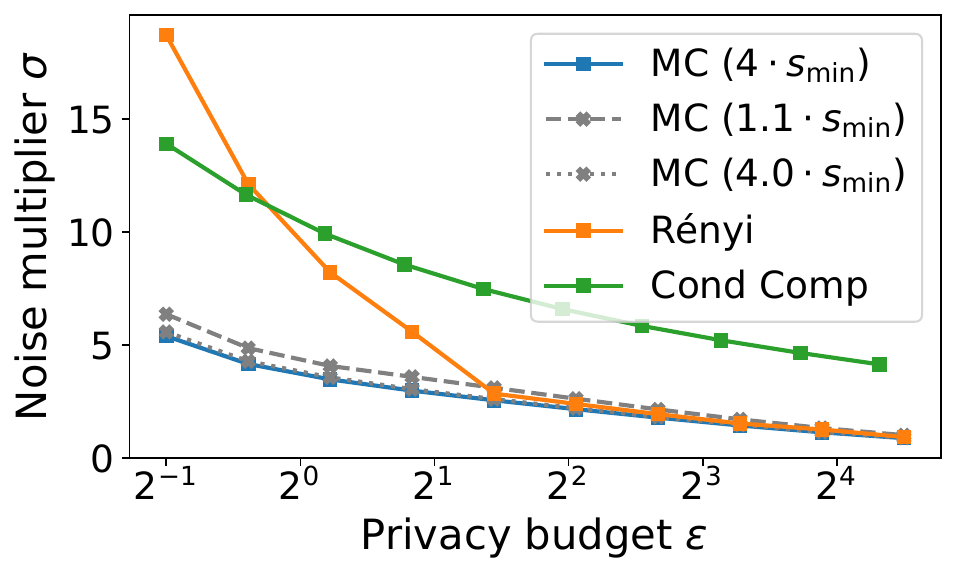}
        \subcaption{BSR, $N\mathbin{=}1000,\ k\mathbin{=}10,\ p\mathbin{=}4$}
    \end{subfigure}

    \caption{Main experimental~\cref{fig:rdp_mcmc_dpsgd}, re-drawn to evaluate both the raw Monte Carlo estimate (blue) in addition to high probability upper bounds (gray) for varying number of samples.
    }
    \vspace{-0.3cm}
    \label{fig:appendix_mc_whp_bounds}
\end{figure*}
In our main experiments, we directly used the Monte Carlo estimates of the privacy profile for calibration.
In practice, we instead have to compute probabilistic upper bounds, which we show in~\cref{fig:appendix_mc_whp_bounds}.
The number of samples is set as a multiple of the minimum number of samples $s_\mathrm{min}$ needed to obtain a bound that holds with high probability $1 - p$. Here,  we choose $p = \delta = 10^{-5}$, which is needed to construct an abstention-based mechanism (see Theorem 2.1 in~\cite{choquette2024near}).
As can be seen, the high probability bound is almost identical to the raw sample estimate when using enough samples.
In other words, the limitation is not the tightness of the tail bounds, but (1) that the resulting guarantees are qualitatively weaker than deterministic, sampling-free privacy parameters and (2) the cost of sampling (see runtime experiments in~\cref{fig:runtime_delta}).

\subsection{Model utility on CIFAR-10}\label{appendix:cifar_experiments}
\begin{wrapfigure}{r}{0.5\textwidth}
    \centering
    \includegraphics[width=0.4\textwidth]{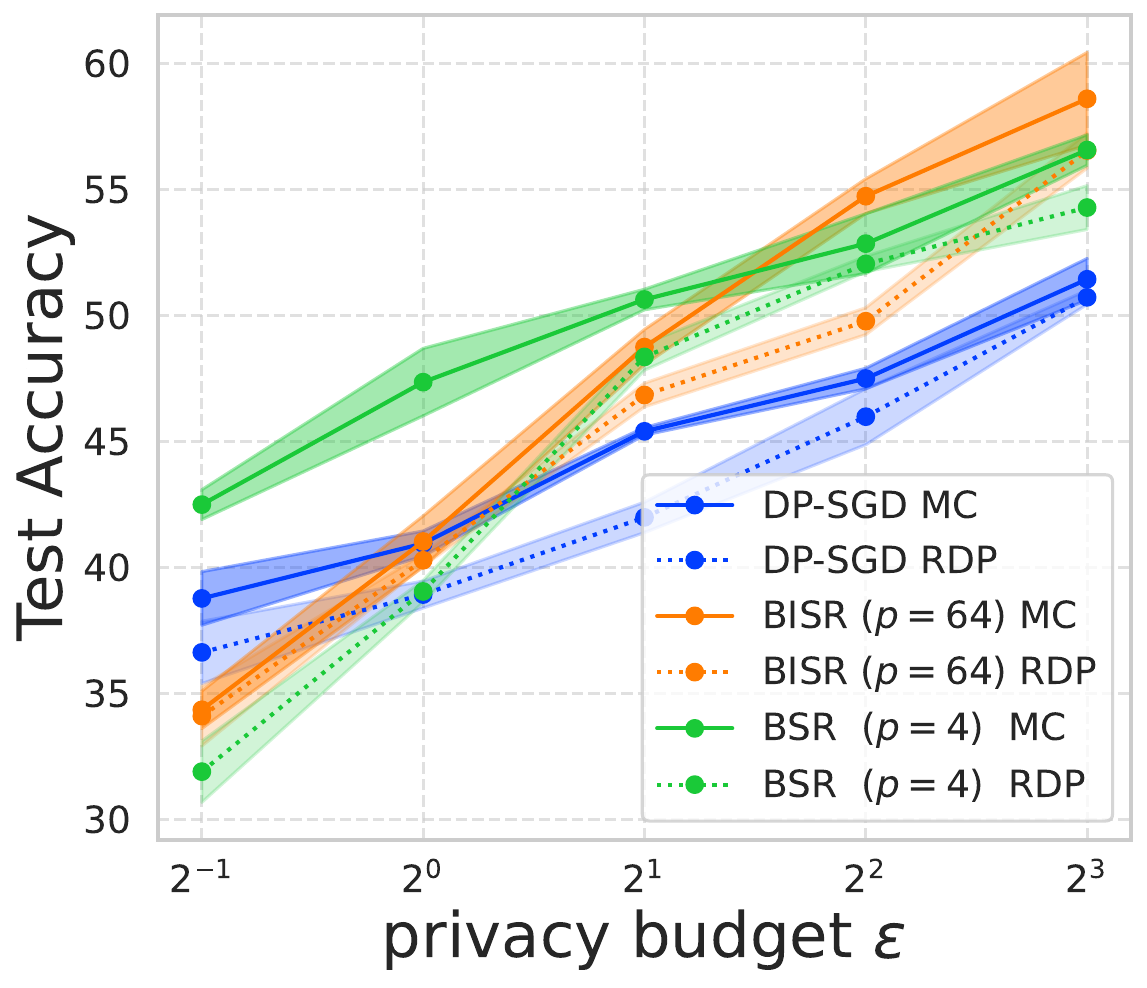}
    \caption{ConvNet model utility on CIFAR-10}
    \label{fig:cifar_experiment}
    \vskip1cm
\end{wrapfigure}
In~\cref{fig:cifar_experiment} we compare the model utility attained via our noise multipliers to those attained using the Monte Carlo accounting noise multipliers. We replicate the CIFAR-10 experimental setup from~\cite{kalinin2024banded}. We use $5$ random seeds and show mean and standard deviation of test accuracy.
Training is performed for $N=970$ steps divided into $k=10$ epochs, which corresponds to a batch size of $512$. Gradients are clipped to an $\ell_2$-norm of $8$ and we calibrate noise multipliers to $\delta=10^{-5}$ (for further details on the experimental setup, see~\cref{appendix:experimental_setup}.

As can be seen, there is a good match in utility for $\epsilon \geq 4$, which is consistent with our observations about the computed noise multipliers $\sigma$.
Furthermore, our sampling-free guarantees are strong enough to retain the benefit of using correlated (BSR, BISR) instead of uncorrelated noise (DP-SGD).

\subsection{Empirical runtime}\label{appendix:empirical_runtime}

\begin{figure}[h!]
    \centering
    \begin{subfigure}[t]{0.48\linewidth}
    \centering
    \includegraphics[width=1.0\linewidth]{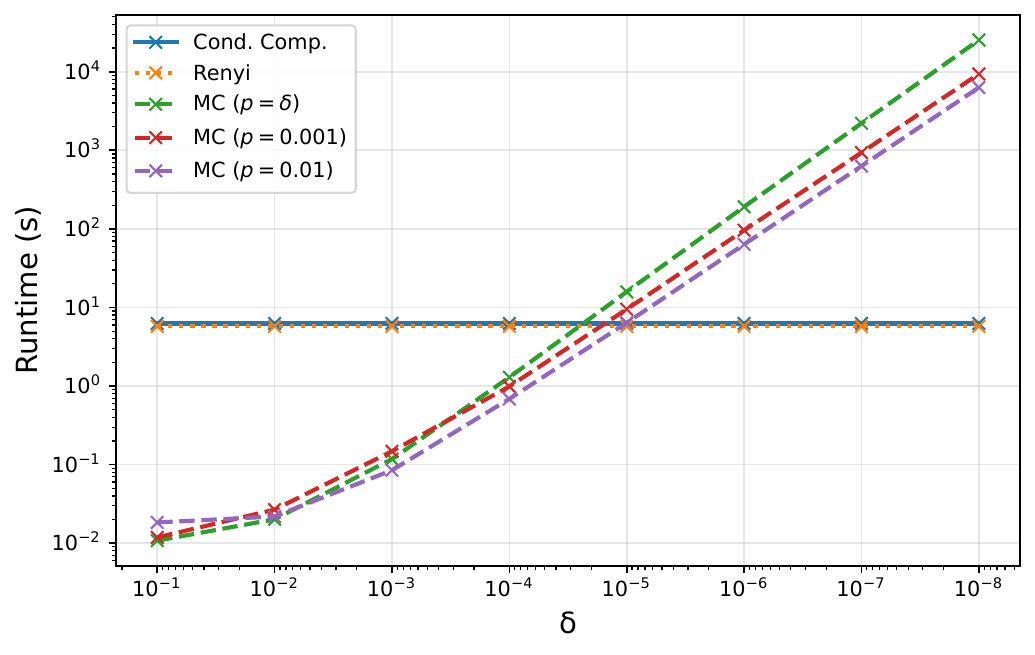}
    \caption{Varying privacy budget $\delta$}
    \label{fig:runtime_delta}
    %\caption{Wall-clock runtime under varying privacy budget $\delta$ for BSR factorization ($N=1000$ steps, $b=100$ batches per epoch, bandwidth $p=4$) using the minimum required number of samples for the Monte Carlo baseline's guarantees to hold with significance $p$.
    %All experiments were performed on the same Xeon Gold 6148 CPU (\SI{2.4}{\giga\hertz}, \SI{768}{\giga\hertz} RAM).
    %Except for weak guarantees $\delta \geq 10^3$ which require less than $20000$ samples, the runtime of the Monte Carlo method increases inversely proportional to $\delta$.}
    \end{subfigure}
    \hfill
    \begin{subfigure}[t]{0.48\linewidth}
    \centering
    \includegraphics[width=1.0\linewidth]{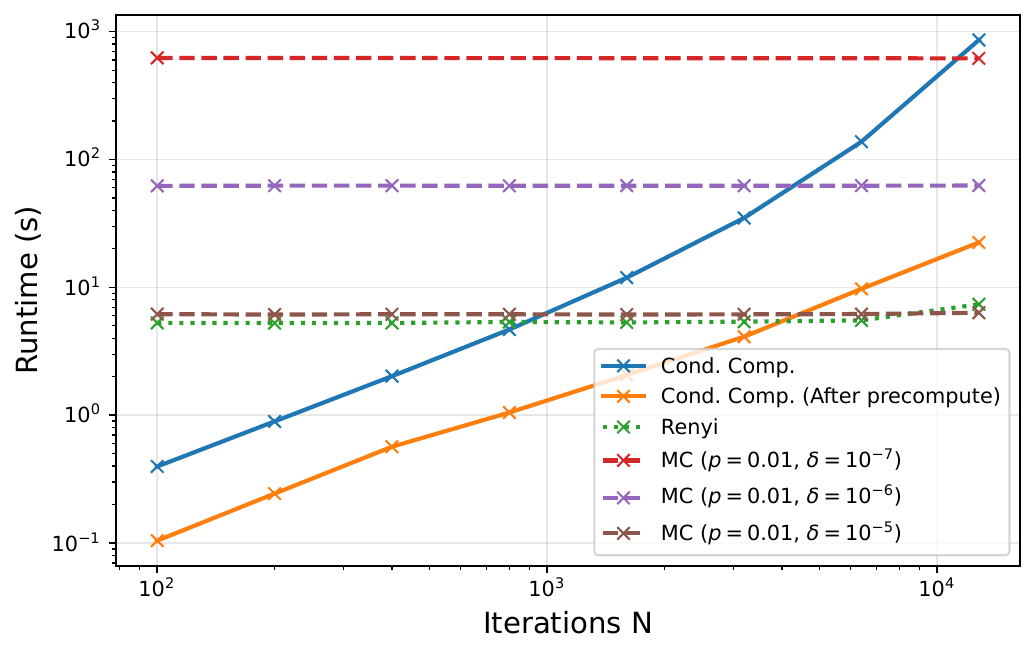}
    \caption{Varying number of iterations $N$}
    \label{fig:runtime_N}
    %\caption{Wall-clock runtime under varying privacy number of iterations $N$ for BSR factorization ($b=100$ batches per epoch, bandwidth $p=4$) using the minimum required number of samples for the Monte Carlo baseline's guarantees to hold. 
    %All experiments were performed on the same Xeon Gold 6148 CPU (\SI{2.4}{\giga\hertz}, \SI{768}{\giga\hertz} RAM).
    %For large $N > 10000$, the cost of evaluating the conditional composition accountant $\mathcal{O}(N^2 b^2)$ from scratch is similar to Monte Carlo accounting with $10^{-7} \leq \delta \leq 10^{-8}$.
    %However, subsequent evaluations (orange) for different noise multipliers $\sigma$ are significantly cheaper, which allows for amortizing the runtime when calibrating $\sigma$. 
    %}
    \end{subfigure}
    \\
    \begin{subfigure}[t]{0.48\linewidth}
    \centering
    \includegraphics[width=1.0\linewidth]{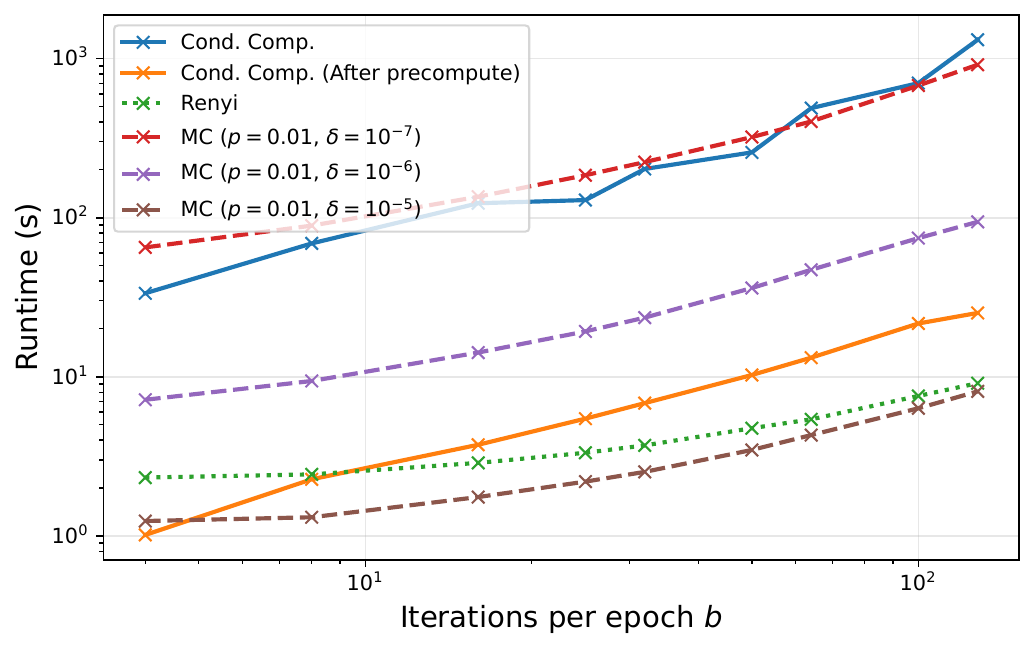}
    \caption{Varying batches per epoch $b$}
    \label{fig:runtime_b}
    %\caption{Wall-clock runtime under varying batches per epoch $b$ for BSR factorization ($N=12800$ iterations, bandwidth $p=4$) using the minimum required number of samples for the Monte Carlo baseline's guarantees to hold. 
    %All experiments were performed on the same Xeon Gold 6148 CPU (\SI{2.4}{\giga\hertz}, \SI{768}{\giga\hertz} RAM).
    %For this large $N = 12800$, the cost of evaluating the conditional composition accountant $\mathcal{O}(N^2 b^2)$ from scratch is similar to Monte Carlo accounting with $10^{-7} \leq \delta \leq 10^{-8}$.
    %However, subsequent evaluations (orange) for different noise multipliers $\sigma$ are significantly cheaper, which allows for amortizing the runtime when calibrating $\sigma$. 
    %}
    \end{subfigure}
    \hfill
    \begin{subfigure}[t]{0.48\linewidth}
    \centering
    \includegraphics[width=1.0\linewidth]{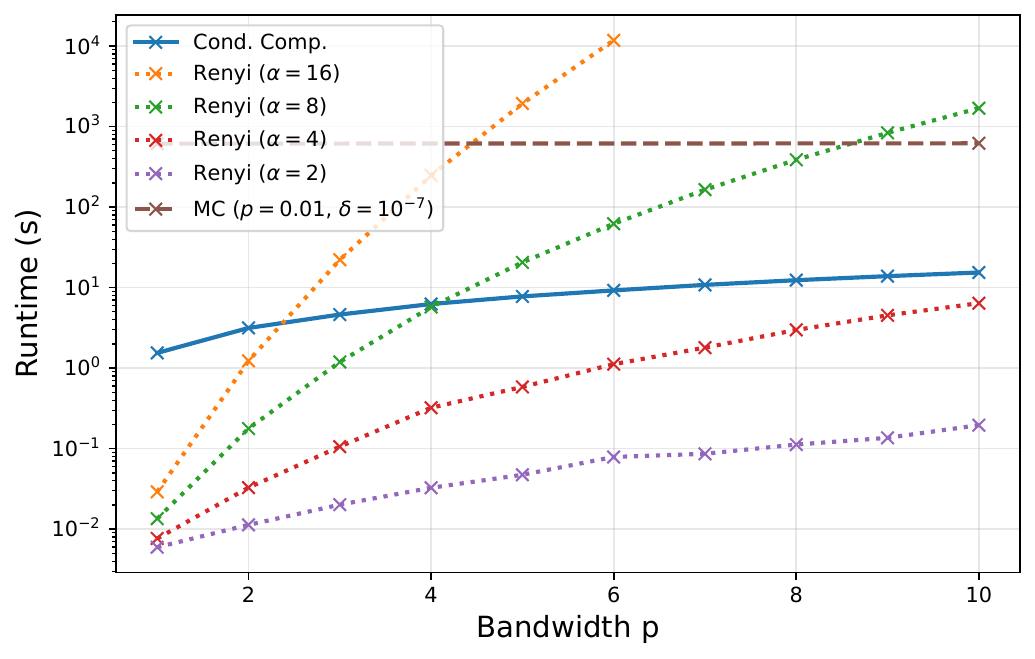}
    \caption{Varying bandwidth $p$}
    \label{fig:runtime_p}
    %\caption{Wall-clock runtime under varying bandwidth $p$ and R{\'e}nyi order $\alpha$ for BSR factorization ($N=1000$ iterations, 100 batches per epoch $p=4$) using the minimum required number of samples for the Monte Carlo baseline's guarantees to hold.
    %For R{\'e}nyi accounting, we truncate the  bandwidth of the Gram matrix to be identical to the bandwidth of the BSR noise correlation matrix.
    %All experiments were performed on the same Xeon Gold 6148 CPU (\SI{2.4}{\giga\hertz}, \SI{768}{\giga\hertz} RAM).
    %The cost of evaluating the R{\'e}nyi accountant ($\mathcal{O}(b p \alpha^{2p})$) explodes for large R\'enyi order $\alpha$ and bandwidth $p$, i.e., the Gram matrix should be more aggressively truncated.
    %}
    \end{subfigure}
    \caption{Wall-clock runtime under varying matrix mechanism and privacy accounting parameters.
    All observations are consistent with the claimed asymptotic runtimes.}
\end{figure}

In the main text, we already analyzed the asymptotic complexity of our proposed methods, as well the Monte Carlo baseline.
Here, we complement this analysis by measuring wall-clock runtime while varying all involved parameters,
namely privacy budget $\delta$, number of iterations $N$, batches per epoch $b$ and bandwidth $p$.
All experiments were performed on the same Intel Xeon Gold 6148 CPU (\SI{2.4}{\giga\hertz}, \SI{768}{\giga\hertz} RAM).
We measure the cost for a single run of each algorithm (or: run for all training steps in the case of conditional composition), i.e., we do not perform repeated runs to calibrate noise multiplier $\sigma$.

\textbf{Default parameters.}
While varying the previously mentioned parameters, we use the following default values for our accountants, which apply unless explicitly stated otherwise.
By default, we consider privacy parameters $\epsilon=8, \delta=10^{-5}$
and noise multiplier $\sigma=10$ (note that none of the accountants have an asymptotic runtime dependence on these parameters).
For the R\'enyi accountant, we use $\alpha=8$ by default and do not employ bandwidth truncation.
For the conditional composition accountant, we use the default AM-GM ternary privacy loss bound, i.e., we do not use additional variational distributions that would increase computational cost by a constant factor.
We further set ``bad event'' probability $\delta = \delta_E \mathbin{/} 2$.
With Monte Carlo accounting, we use the minimum number of samples $s_\mathrm{min}$ to attain a moderate significance / failure probability of $p=0.01$ while being able to verify $\delta < 1$.

\textbf{Varying privacy budget.}
In~\cref{fig:runtime_delta}, we vary the privacy budget $\delta$ for BSR factorization ($N=1000$ steps, $b=100$ batches per epoch, bandwidth $p=4$) using the minimum required number of samples for the Monte Carlo baseline's guarantees to hold with significance $p$.
Except for weak guarantees $\delta \geq 10^3$ which require less than $20000$ samples, the runtime of the Monte Carlo method increases inversely proportional to $\delta$. This confirms the importance of developing sampling-free privacy accounting methods for matrix mechanisms, especially when considering strict privacy budgets or large datasets, where one typically uses $\delta < \frac{1}{|D|}$.

\textbf{Varying number of iterations.}
In~\cref{fig:runtime_N}, we vary the number of iterations $N$ for BSR factorization ($b=100$ batches per epoch, bandwidth $p=4$).
For large $N > 10000$, the cost of evaluating the conditional composition accountant $\mathcal{O}(N^2 b^2)$ from scratch is similar to Monte Carlo accounting with $10^{-7} \leq \delta \leq 10^{-8}$.
However, subsequent evaluations (orange) for different noise multipliers $\sigma$ are significantly cheaper, which allows for amortizing the runtime when calibrating $\sigma$. 

\textbf{Varying batches per epoch.}
In~\cref{fig:runtime_b}, we vary the other parameter that conditional composition has a quadratic dependence on,
i.e., batches per epoch $b$ here shwon for BSR factorization ($N=12800$ iterations, bandwidth $p=4$).
As before, for this large $N = 12800$ the cost of evaluating the conditional composition accountant $\mathcal{O}(N^2 b^2)$ from scratch is similar to Monte Carlo accounting with $10^{-7} \leq \delta \leq 10^{-8}$.
However, after an initial evaluation, subsequent evaluations are less expensive than Monte Carlo accounting with $\delta=10^{-6}$.

\textbf{Varying bandwidth.}
Finally, in~\cref{fig:runtime_p} we vary bandwidth $p$ and R{\'e}nyi order $\alpha$ for BSR factorization ($N=1000$ iterations, 100 batches per epoch $p=4$). For R{\'e}nyi accounting, we truncate the  bandwidth of the Gram matrix to be identical to the bandwidth of the BSR noise correlation matrix.
As anticipated, at a fixed compute budget we need to either reduce the bandwidth to evaluate the R\'enyi accountant for large $\alpha$,
or limit $\alpha$, which will result in weaker guarantees in high privacy / small $\epsilon$ settings.

\subsubsection{Effect of bandwidth truncation on R\'enyi bounds.}\label{appendix:effect_of_truncation}

\begin{figure}[h!]
    \centering

    \begin{subfigure}[t]{0.47\linewidth}
        \centering
        \includegraphics[trim={0 0 0 0.6cm},clip,width=\linewidth]{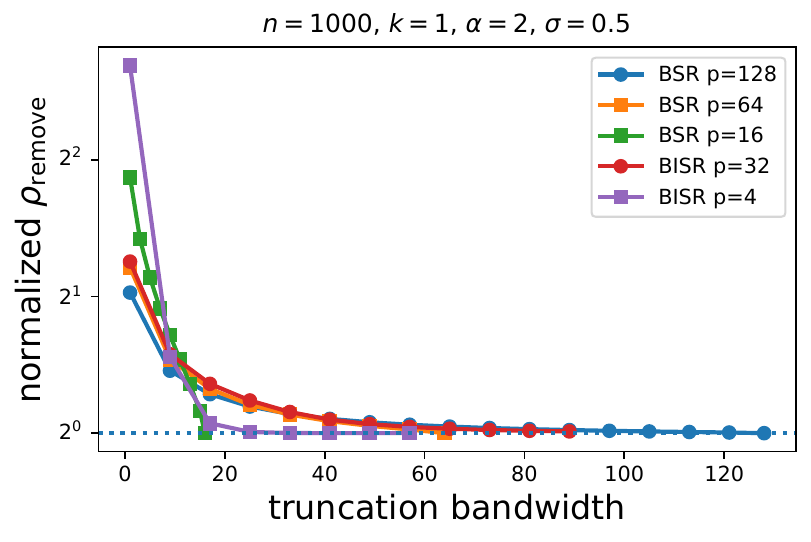}
        \subcaption{$N=1000, k=1, \alpha=2, \sigma=0.5$}
    \end{subfigure}\hfill
    \begin{subfigure}[t]{0.47\linewidth}
        \centering
        \includegraphics[trim={0 0 0 0.6cm},clip,width=\linewidth]{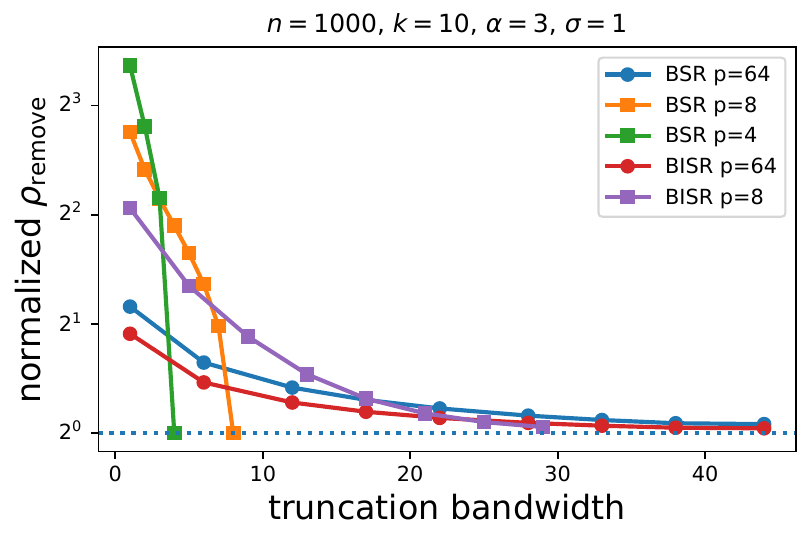}
        \subcaption{$N=1000, k=10, \alpha=3, \sigma=1$}
    \end{subfigure}\hfill
    \caption{
        Comparison of the ``remove'' direction R{\'e}nyi guarantee in the single-epoch ($k=1$) and multi-epoch ($k=10$) setting, 
        after truncation relative to the true guarantee without truncation.
        When the matrix is banded with a small bandwidth, it is better not to truncate it at all, whereas for a large bandwidth, one obtains an accurate approximation with a relatively small $p$.}
    \label{fig:bandwidth_truncation_tradeoff}
\end{figure}

In~\cref{fig:gram_subplots}, we visually demonstrated that matrix mechanisms, including those with high bandwidth, often have small effective bandwidth in the Gram matrix.
Our dynamic program exploits this by truncating the Gram matrix to speed up computation at the cost of an easy-to-compute additive term.
In~\cref{fig:bandwidth_truncation_tradeoff}, we compare the computed R\'enyi divergence across different factorizations, bandwidths, and Gram matrix bandwidth truncations. 
We specifically focus on small values of $\alpha$, which correspond to the low-privacy / large-$\epsilon$ regime in which R\'enyi accounting performs best.
In both the single- and multi-epoch setting, we observe the following:
For mechanisms with small bandwidth, truncating too aggressively may introduce a large error.
For mechanisms with large bandwidth (e.g., BSR with $p=64$ or BISR with arbitrary $p$), even truncating to a small fraction can still offer a good upper bound due to their limited effective bandwidth.

\subsubsection{Comparison of R\'enyi ``remove'' and ``add'' direction.}\label{appendix:renyi_add_remove_slit_experiments}
\begin{figure*}[h!]
    \centering
    \begin{subfigure}[t]{0.31\linewidth}
        \centering
        \includegraphics[width=\linewidth]{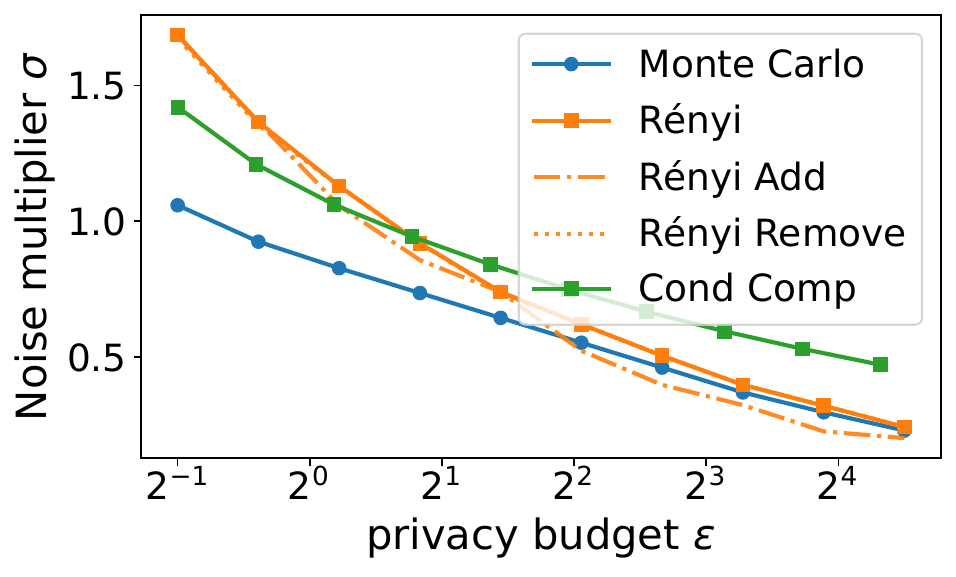}
        \subcaption{DP-SGD, $N\mathbin{=}100,\ k\mathbin{=}1$}
    \end{subfigure}\hfill
    \begin{subfigure}[t]{0.31\linewidth}
        \centering
        \includegraphics[width=\linewidth]{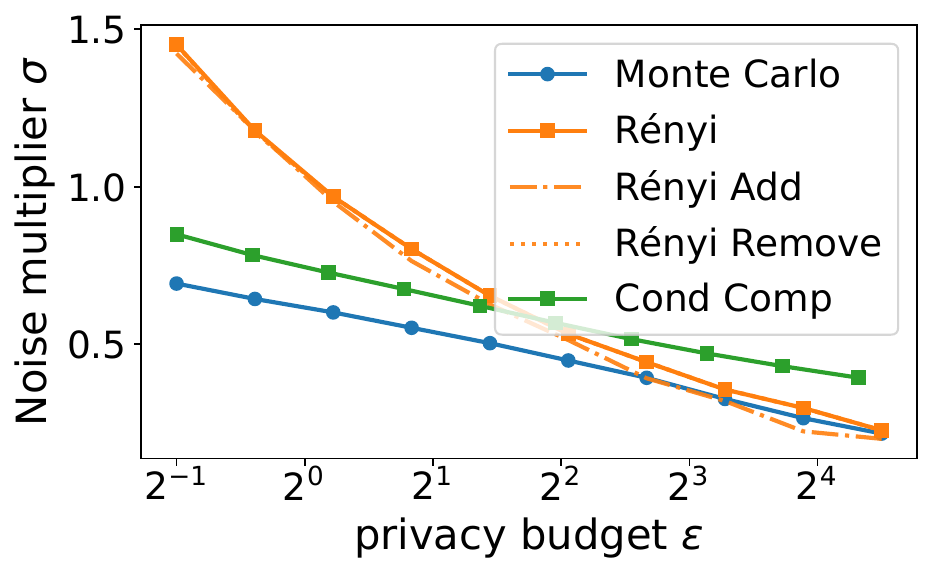}
        \subcaption{DP-SGD, $N\mathbin{=}1000,\ k\mathbin{=}1$}
    \end{subfigure}\hfill
    \begin{subfigure}[t]{0.31\linewidth}
        \centering
        \includegraphics[width=\linewidth]{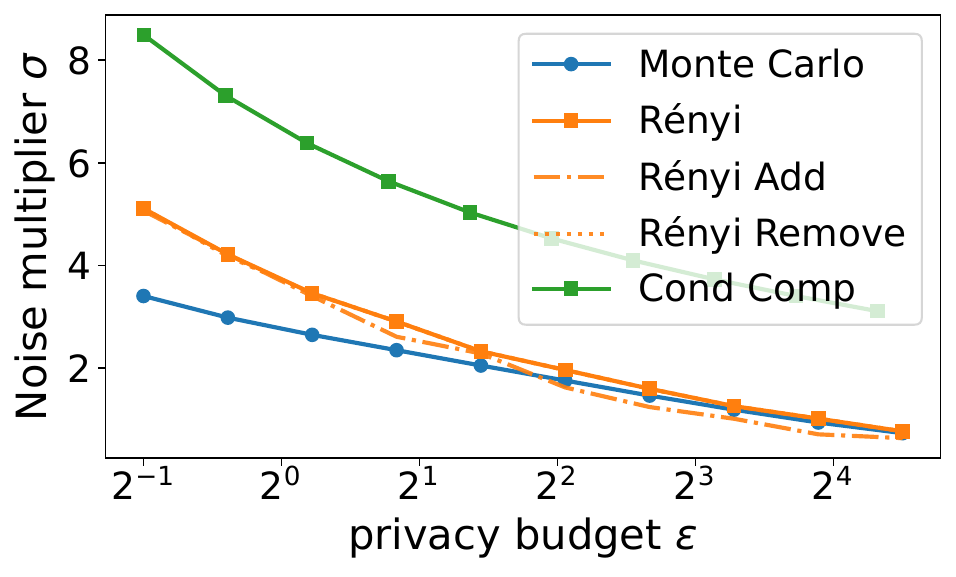}
        \subcaption{DP-SGD, $N\mathbin{=}1000,\ k\mathbin{=}10$}
    \end{subfigure}

    \begin{subfigure}[t]{0.31\linewidth}
        \centering
        \includegraphics[width=\linewidth]{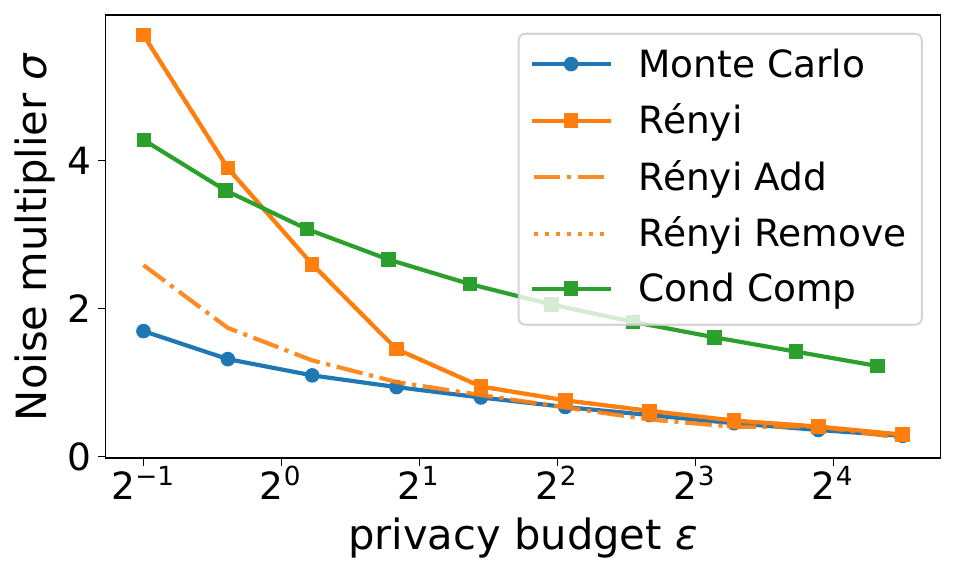}
        \subcaption{BSR, $N\mathbin{=}100,\ k\mathbin{=}1,\ p\mathbin{=}4$}
    \end{subfigure}\hfill
    \begin{subfigure}[t]{0.31\linewidth}
        \centering
        \includegraphics[width=\linewidth]{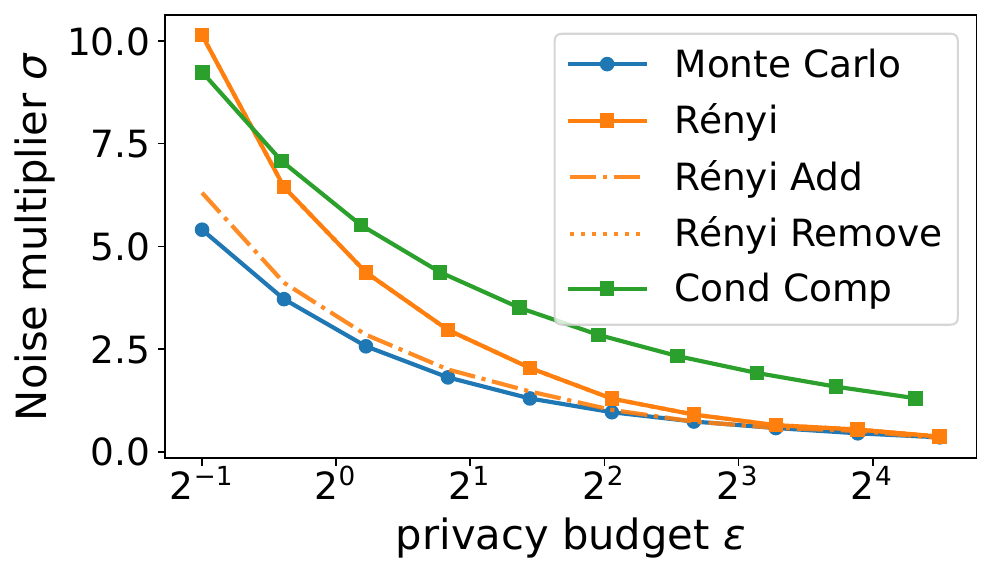}
        \subcaption{BSR, $N\mathbin{=}100,\ k\mathbin{=}1,\ p\mathbin{=}64$}
    \end{subfigure}\hfill
    \begin{subfigure}[t]{0.31\linewidth}
        \centering
        \includegraphics[width=\linewidth]{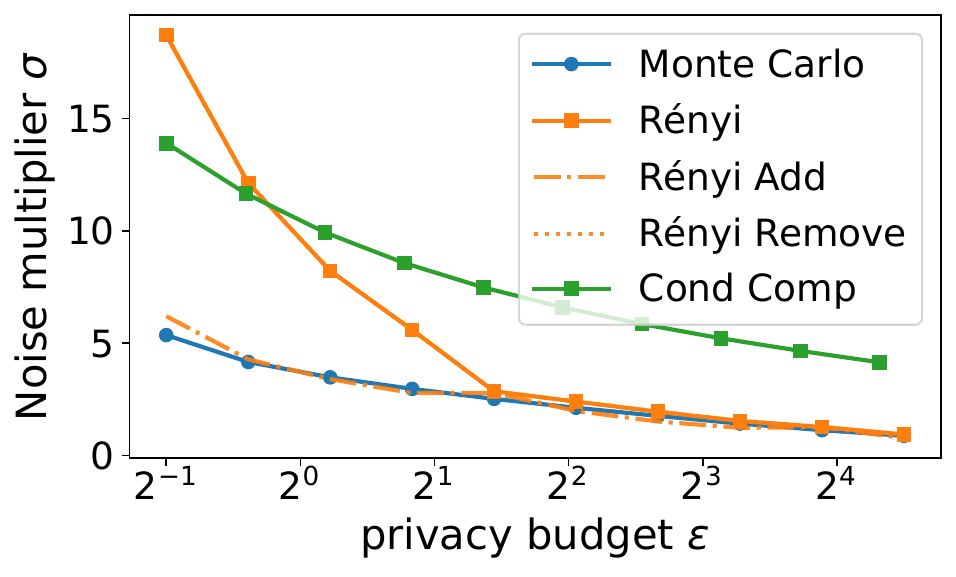}
        \subcaption{BSR, $N\mathbin{=}1000,\ k\mathbin{=}10,\ p\mathbin{=}4$}
    \end{subfigure}

    \caption{Main experimental~\cref{fig:rdp_mcmc_dpsgd}, re-drawn with separate ``add'' and ``remove'' directions for the R\'enyi accountant. The ``add'' bound is consistently smaller, i.e., the overall privacy guarantee is identical to the ``remove'' direction and the ``add'' slack has no effect on the overall privacy guarantee.}
    \vspace{-0.3cm}
    \label{fig:renyi_add_remove}
\end{figure*}

As describe~\cref{alg:renyi_dynamic_program_full}, our overall R\'enyi accountant uses a dynamic program for the ``remove'' direction
and an analytical bound for the ``add'' direction.
If the ``add'' bound were consistently larger, then the ``remove'' bound would never be active and there would be no benefit to running the dynamic program.
However, as shown in~\cref{fig:renyi_add_remove}, this is not the case. In fact, we consistently found the ``add'' direction bound to be smaller across parameters and matrix factorizations.
This matches up with prior work on R\'enyi accounting.
For example,~\citet{feldman2025privacy} claim for the special case of uncorrelated noise (DP-SGD) that ``numerical analysis seems to indicate the bound on the remove direction always dominates the one for the add direction''.
\citet{mironov2017renyi} formally prove for Poisson subsampling that the ``remove'' direction always dominates.

\subsection{Comparison to DP-SGD-specific random allocation bounds}\label{appendix:feldman_comparison}

The focus of our work is on privacy accounting for correlated noise,
but are accountants are also applicable to the special case of uncorrelated noise (DP-SGD).
For completeness, we thus also compare to~\citet{feldman2025privacy, feldman2026efficient}.

\subsection{Comparison to Feldman \& Shenfeld (2025) method for uncorrelated noise}
\begin{figure*}[h!]
    \centering
    \begin{subfigure}[t]{0.45\linewidth}
        \centering
        \includegraphics[width=\linewidth]{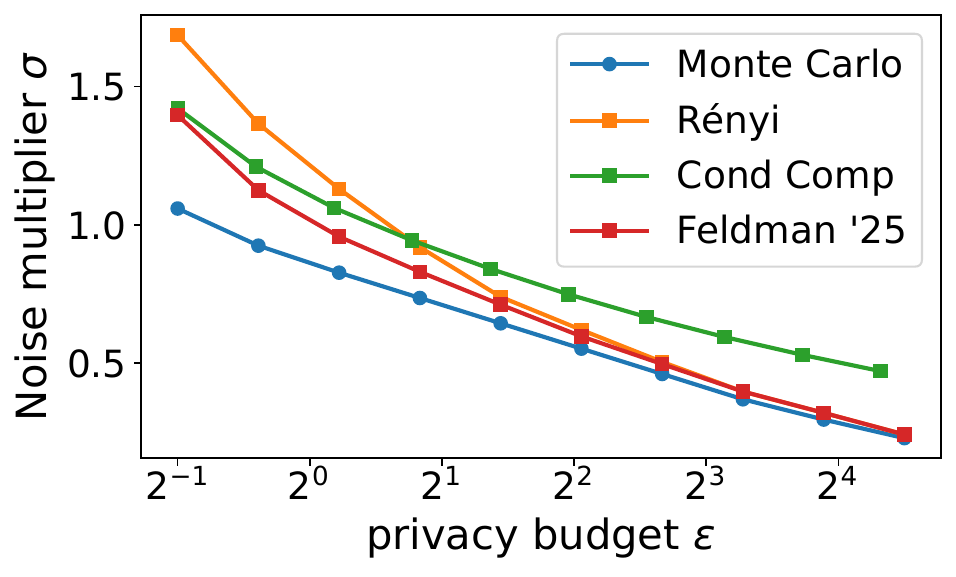}
        \subcaption{DP-SGD, $N=100,\ k=1$}
    \end{subfigure}
    \hfill
    \begin{subfigure}[t]{0.45\linewidth}
        \centering
        \includegraphics[width=\linewidth]{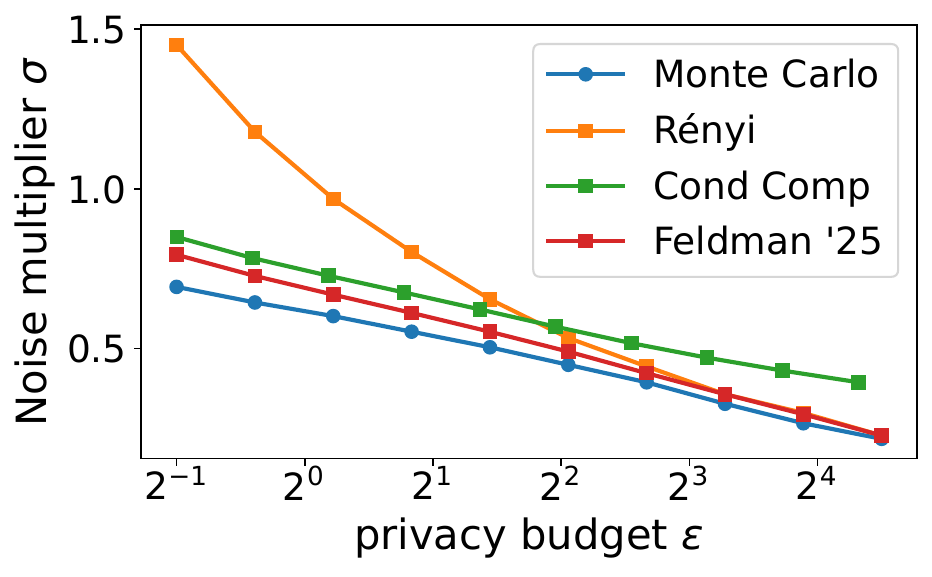}
        \subcaption{DP-SGD, $N=1000,\ k=1$}
    \end{subfigure}
    \caption{Main experimental~\cref{fig:rdp_mcmc_dpsgd_n_100_k_1,fig:rdp_mcmc_dpsgd_n_1000_k_1}, re-drawn to also include the DP-SGD specific guarantees of~\cite{feldman2025privacy} Their bounds are qualitatively similar (slack for small $\epsilon$, tight for large $\epsilon$), but tighter because they combine a wider range of methods specific to uncorrelated noise.}
    \label{fig:comparison_to_feldman_tightness}
    \vspace{-0.3cm}
\end{figure*}

Similar to us, they combine an exact R\'enyi accountant with other bounds that are tighter for small $\epsilon$.
By definition, our R\'enyi accountant yields exactly the same results, but in a much more efficient manner (see~\cref{appendix:renyi_runtime_vs_feldman}).
Our conditional composition accountant is related to Lemma 4.3 underlying their truncated Poisson bound, in that both are based on posterior sampling probabilities.

However, they can complement these approaches with a wider range of specialized methods because they only ever need to distinguish two cases per step: Either the private record participates or not. This lets them bound the mechanism's privacy in terms of Poisson-subsampled DP-SGD ("recursive truncated \emph{Poisson} bound", "\emph{Poisson} decomposition"), which is exactly what gives them additional tightness in the small $\epsilon$ / large $\sigma$ regime (see~\cref{fig:comparison_to_feldman_tightness} and their Figure. 1).
In contrast, our general method has to account for the fact that earlier participation may also have an effect through correlation, i.e., we cannot use the recursive bound / Poisson decomposition in general.

\subsection{Comparison to Feldman \& Shenfeld (2026) method for uncorrelated noise}

\begin{figure*}[h!]
    \centering
    \begin{subfigure}[t]{0.45\linewidth}
        \centering
        \includegraphics[width=\linewidth]{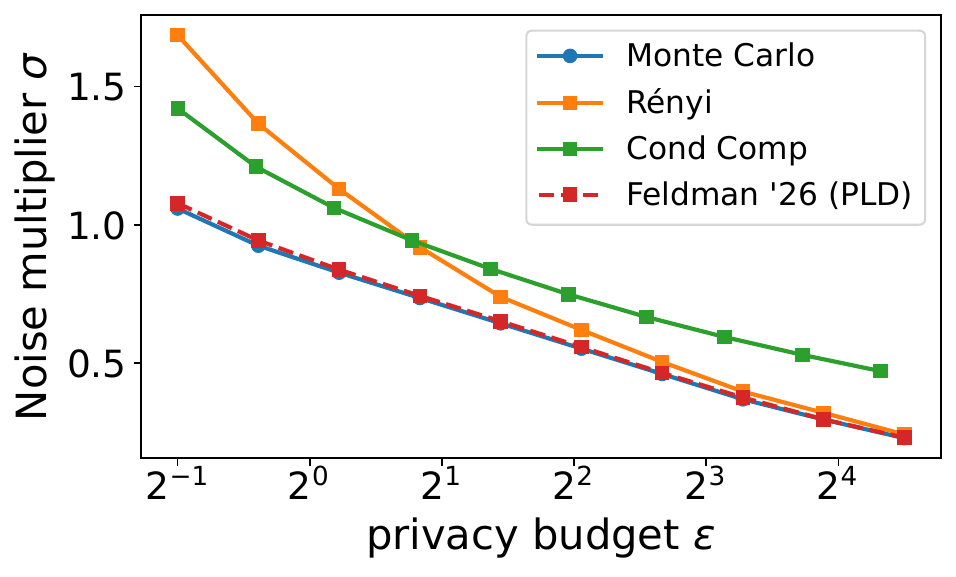}
        \subcaption{DP-SGD, $N=100,\ k=1$}
    \end{subfigure}
    \hfill
    \begin{subfigure}[t]{0.45\linewidth}
        \centering
        \includegraphics[width=\linewidth]{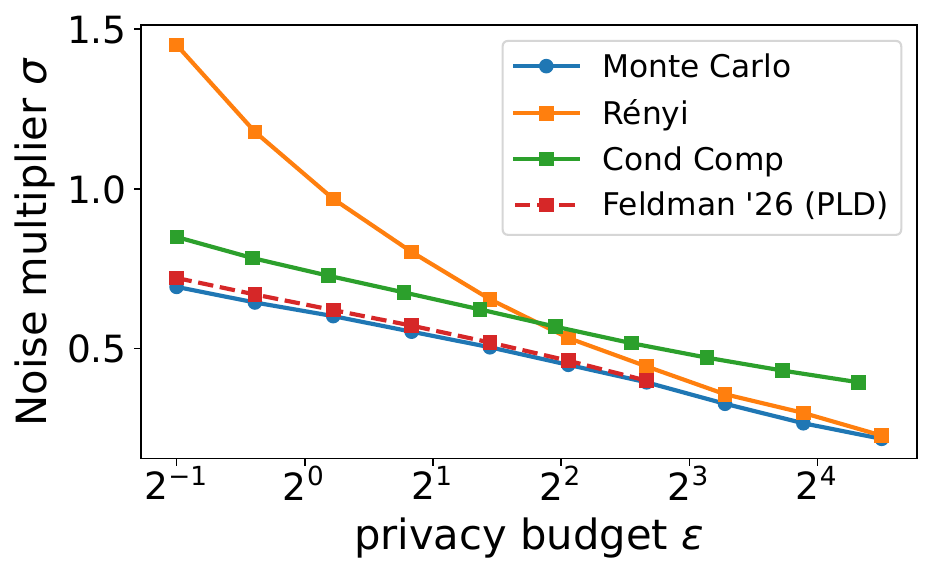}
        \subcaption{DP-SGD, $N=1000,\ k=1$}
    \end{subfigure}
    \caption{Main experimental~\cref{fig:rdp_mcmc_dpsgd_n_100_k_1,fig:rdp_mcmc_dpsgd_n_1000_k_1}, re-drawn to also include the DP-SGD specific guarantees of~\citet{feldman2026efficient}. Their bounds more almost exactly the Monte Carlo accountant.}
    \label{fig:comparison_to_feldman_tightness_pld}
    \vspace{-0.3cm}
\end{figure*}

In concurrent work,~\citet{feldman2026efficient} improve upon their prior work through a method that foregoes
bounding privacy profile $\delta(\epsilon)$ or R\'enyi profile $\rho(\alpha)$,
and instead operates in privacy loss distribution space.
Different from commonly used privacy loss distributions (PLDs) that rely on FFT-based composition theorems and factorizing dominating pairs,
they handle the shared randomness of random allocation through transformations of the base mechanisms' PLD (or ``realizations'' thereof).
As already reported in their work, this leads to improved bounds that almost exactly match those attained via Monte Carlo accounting (see~\cref{fig:comparison_to_feldman_tightness_pld}).
Thus, generalizing their approach to correlated noise / matrix mechanisms is a promising direction towards further improving upon our bounds.
However, at the moment, incorporating gradient-level noise correlation appears into PLD transformations is completely orthogonal to our proposed accountants and appears highly non-trivial.

\subsection{Comparison to Shenfeld \& Feldman (2025) R\'enyi accountant runtime}\label{appendix:renyi_runtime_vs_feldman}

\begin{figure}[H]
    \centering

    \begin{subfigure}[t]{0.48\linewidth}
        \centering
        \includegraphics[width=\linewidth]{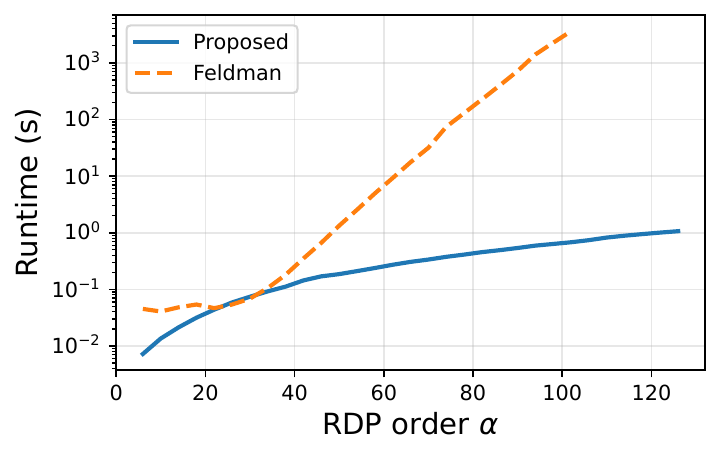}
        \subcaption{DP-SGD ($N=100, b=100$)}
    \end{subfigure}\hfill
    \begin{subfigure}[t]{0.48\linewidth}
        \centering
        \includegraphics[width=\linewidth]{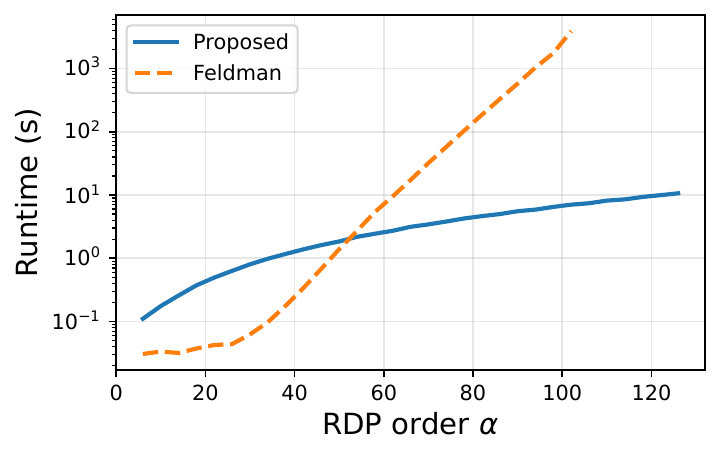}
        \subcaption{DP-SGD ($N=1000, b=1000$)}
    \end{subfigure}\hfill
    \caption{Wall-clock runtime comparison to Feldman et al. for the special case of uncorrelated noise / DP-SGD.
    Except for $\alpha \leq 30$, which benefits from vectorization, hasing, and just-in-time compilation,
    the baseline's runtime explodes explodes exponentially ($\mathcal{O}(2^\alpha)$) 
    while our polynomial-time dynamic program ($\mathcal{O}(b \alpha^2)$) scales to much larger $\alpha$}
    \label{fig:appendix_feldman_runtime}
\end{figure}

\iffalse

\begin{figure}[H]
    \centering

    \begin{subfigure}[t]{0.48\linewidth}
        \centering
        \includegraphics[width=\linewidth]{rebuttal_figures/runtime_complexity/rdp_vs_feldman/dp_sgd_100.pdf}
        \subcaption{DP-SGD ($N=100, b=100$)}
    \end{subfigure}\hfill
    \begin{subfigure}[t]{0.48\linewidth}
        \centering
        \includegraphics[width=\linewidth]{rebuttal_figures/runtime_complexity/rdp_vs_feldman/dp_sgd_1000.pdf}
        \subcaption{DP-SGD ($N=1000, b=1000$)}
    \end{subfigure}\hfill
    \caption{Wall-clock runtime comparison to Feldman et al. for the special case of uncorrelated noise / DP-SGD.
    Except for $\alpha \leq 30$, which benefits from vectorization, hasing, and just-in-time compilation,
    the baseline's runtime explodes explodes exponentially ($\mathcal{O}(2^\alpha)$) 
    while our polynomial-time dynamic program ($\mathcal{O}(b \alpha^2)$) scales to much larger $\alpha$}
    \label{fig:appendix_feldman_runtime}
\end{figure}

\fi
In~\cref{section:rdp_accountant} we claimed an improvement in runtime for exact R\'enyi accounting of uncorrelated-noise random allocation (DP-SGD) in the ``remove direction''.
Specifically, we claimed an improvement from exponential $\mathcal{O}(2^\alpha)$ to quadratic $\mathcal{O}(b \alpha^2)$
compared to 
the best known method from~\cite{feldman2025privacy}.
To confirm that this bound does not only hold in asymptotic limits,
we evaluate both accountants on the the same Intel Xeon Gold 6148 CPU (\SI{2.4}{\giga\hertz}, \SI{768}{\giga\hertz} RAM).
The empirical runtime measurements in~\cref{fig:appendix_feldman_runtime}
confirm that the baseline's cost explodes exponentially, whereas our method scales to large R\'enyi orders $\alpha$.

\clearpage

\section{Experimental details}\label{appendix:experimental_setup}

\subsection{Conditional composition accountant}
\textbf{Parameters for Algorithm 2.} In principle, given a target $\delta \in [0,1]$, we can choose an arbitrary ``bad'' event probability $\delta_E < \delta$
or even optimize over many different choices of $\delta_E < \delta$.
For simplicity, we follow~\cite{choquette2023privacy} and simply set $\delta_E = \delta \mathbin{/} 2$.
In addition to the default AM-GM bounds from~\cref{theorem:tail_amgm_add,theorem:tail_amgm_add} we evaluate the variational bound from~\cref{appendix:option_3_variational} which retains the same computational complexity.
We set its temperatures to $T = \{10^{-1}, 10^{-0.5}, 1, 10^{0.5}, 10^{1}\}$ based on our ablation from~\cref{fig:variational_bound_balation}.

\textbf{Evaluation of conditional composition bound.} After determining per-step dominating pairs $(P^{(1)}, Q^{(1)}), \dots, (P^{(N)}, Q^{(N)})$ via~\cref{algorithm:conditional_composition}, we compute their $N$-fold composition using the Google \texttt{dp\_accounting} library~\cite{dpaccountinglibrary}  specifically its PLD accountant applied to the $\texttt{MixtureGaussianPrivacyLoss}$.
We quantize the privacy loss distribution using the ``connect the dots'' algorithm from~\cite{doroshenko2022connect},
while adjusting the value discretization interval such that each quantized privacy loss pmf has a support of size $\leq 1000$
and at least one has a support of size exactly $1000$.
We otherwise use standard parameters, in particular a log-mass truncation bound of $-50$ for the privacy loss mechanism and a 
a tail mass truncation of $\SI{1e-15}{}$ for the quantized privacy loss pmfs.

\subsection{R\'enyi accountant}
In our experiments, we have numerical trade-offs arising in privacy accounting. The R\'enyi accountant takes the bandwidth as an input. If the matrix $\mC$ is $p$-banded for a small $p$, we can compute the divergence exactly. Otherwise, when the bandwidth is large or $\mC$ is not banded, we must use an \emph{effective} bandwidth: we pass a smaller $p$ and upper bound the contributions outside the $p$-cyclic band (see Algorithm~\ref{alg:renyi_dynamic_program}). This creates a trade-off between computational cost and the tightness of the bound.

This situation occurs, for example, for banded-inverse factorizations such as BISR and BandInvMF \citep{kalinin2025back}, where $\mC^{-1}$ is banded but $\mC$ is not, and for Buffered Linear Toeplitz (BLT) \citep{dvijotham2024efficient}, which is essentially dense in both $\mC$ and $\mC^{-1}$. Moreover, smaller privacy budgets $\varepsilon$ generally require larger R\'enyi orders $\alpha$, meaning we must use smaller effective bandwidth.
For our experiments, we thus report the optimal bound that can be attained for combinations of $(p,\alpha) \in \{2\} \times \{1,\dots,25\}
\cup \{4\} \times \{1,\dots,7\} \cup \{2\} \times \{1,\dots,4\}$.
%; in our experiments, the optimal $\alpha$ is typically between $30$ and $50$. In this regime, we must use a very small effective bandwidth ( $p \le 5$). In the low-privacy regime, $\alpha=2$ eventually becomes optimal, and we can afford much larger effective bandwidths (up to $p \le 25$).
The exception are ancillary experiments in~\cref{appendix:extra_experiments} in which we explicitly vary $\alpha$ across different ranges.

\subsection{Monte Carlo accountant.}
By default, we use $400000$ samples to compute raw Monte Carlo estimates for our experiments.
For our ancillary experiments in~\cref{appendix:monte_carlo_whp_bounds}, where we evaluate high probability upper bounds using the method from~\cite{choquette2024near}, we set the number of samples to different multiples of the minimum number of samples $s_\mathrm{min}$ required to obtain a bound better than $\delta=1$ (see details in~\cref{appendix:monte_carlo_whp_bounds}).

\subsection{DP-SGD random allocation accountants.}
While the focus of our work is on correlated-noise mechanisms, we still conduct a comparison to~\citet{feldman2025privacy} and~\citet{feldman2026efficient} for completeness (see~\cref{appendix:feldman_comparison}) using their respective reference implementations.

\citet{feldman2025privacy}: We use the official $\texttt{random-allocation}$ library (v1.0) available via pip.
We take the minimum over all available sub-accountants using the \texttt{allocation\_delta\_combined} method.
For comparability with our other R\'enyi accounting results, we set the maximum $\alpha$ to $25$.
We otherwise use default parameters. In particular, we use a discretization of $10^{-4}$ and a $\delta$-tolerance of $10^{-15}$.

\citet{feldman2026efficient}: We use the official \texttt{PLD\_subsampling} library (commit 952e91a36230e81bfa7c072e74dc89bb89750b0f).
We leave all parameters at their default values. In particular, we use a loss discretization of $0.1$, a tail truncation of $10^{-12}$, a maximum FFT grid of $10^6$, set $\texttt{max\_grid\_mult} = -1$ and employ the geometric convolution method.

\subsection{Optimized matrix factorizations}
The DP-SGD, BSR, and BISR factorization have closed-form expressions. In contrast, BandMF, BLT, and BandInvMF are minimized to best approximate the non-private workload matrix. 
For BandMF, BLT, and BandInvMF, we use the Google \texttt{jax-privacy} implementation \citep{jaxprivacy2022github} for this optimization.

\subsection{CIFAR-10 experiments}
For our CIFAR-10~\cite{krizhevsky2009learning} experiments, which are a standard task for evaluating the privacy--utility trade-off of matrix mechanisms (see, e.g.,~\cite{choquette2023multi,choquette2023privacy,choquette2024near}) we replicate the experimental setup from~\cite{kalinin2024banded}.

\textbf{Training mechanism.}
We use the standard train--test split, i.e., $50000$ training samples and $10000$ test samples.
Training is performed for $N=970$ steps divided into $k=10$ epochs, which corresponds to a batch size of $512$. Gradients are clipped to an $\ell_2$-norm of $8$ and we calibrate noise multipliers to $\delta=10^{-5}$.
We evaluate three different factorizations: DP-SGD, BSR ($p=4$), and BISR ($p=4$).

\textbf{Model architecture.} We use the following ConvNet architecture:
\begin{itemize}[noitemsep]
    \item $2 \times$ Conv2D(channels=32, kernel=(3, 3), strides=(1, 1), padding=’SAME’, activation=’relu’)
    \item MaxPool(kernel=(2, 2), strides=(2, 2))
    \item $2 \times$ Conv2D(channels=64, kernel=(3, 3), strides=(1, 1), padding=’SAME’), activation=’relu’)
    \item MaxPool(kernel=(2, 2), strides=(2, 2))
    \item $2 \times$ Conv2D(channels=128, kernel=(3, 3), strides=(1, 1), padding=’SAME’), activation=’relu’)
    \item $2 \times$ MaxPool(kernel=(2, 2), strides=(2, 2))
    \item Flatten()
    \item Dense(outputs=10)
\end{itemize}

\textbf{Other details.} We use $5$ random seeds and show mean and standard deviation of test accuracy.

\subsection{Compute resources.}
All experiments for R\'enyi accounting and CIFAR-10 training were performed on a basic Google collab instance.
Experiments involving conditional composition were performed using an Intel Xeon Gold 6148 CPU (\SI{2.4}{\giga\hertz}, \SI{768}{\giga\hertz} RAM).
All runtime measurements for~\cref{appendix:empirical_runtime} were performed using this Intel Xeon CPU.
The experiments did not involve multi-worker orchestration beyond python process tools for parallelizing
PLd accounting with the \texttt{dp\_accounting} library (see provided implementation of the conditional composition accountant.).

\subsection{Licenses of used assets}\label{appendix:licenses}
CIFAR-10 is available under MIT license. 
The \texttt{dp\_accounting} and \texttt{jax\_privacy} libraries are available under Apache-$2.0$ license.
The \texttt{random\_allocation} and \texttt{PLD\_accounting} libarries are available under MIT license.

\clearpage

\section{Full Details of the Rényi Accounting Algorithm}\label{appendix:full_renyi_accountant}

\begin{algorithm}[h!]
\caption{Rényi Accountant ``remove'' direction}
\label{alg:renyi_dynamic_program}
\begin{algorithmic}[1]

\REQUIRE Correlation matrix $\mathbf{C}$, bandwidth $p$, separation $b$, parameter $\alpha \in \mathbb{N}_{+}$.

\STATE $\mathbf{m}_i \leftarrow \sum_{j=0}^{k-1} |\mathbf{C}|_{:,\, i + jb}$

\STATE $\mathbf{G}_{i,j} \leftarrow \langle \mathbf{m}_i, \mathbf{m}_j \rangle \quad \forall i,j \in \{1,\dots,b\}$

\STATE $\tau \leftarrow \max_{\min(|i-j|, b- |i - j|) \ge p} \mathbf{G}_{i,j}$

\STATE $\mathbf{G}^{(p)}_{i,j} \leftarrow \max(\mathbf{G}_{i,j} - \tau, 0) \quad \forall \min(|i-j|, b- |i - j|) < p$

%\STATE precompute $\texttt{states}(m,r)$ for all values $m, r$ of the principle $p$ submatrix of $G^{(p)}$ s.t. $\sum_{i = 0}^{p - 1} r_i = m \le \alpha$. 
%\STATE $\texttt{states}(m,r) \leftarrow \sum\limits_{i < j}^{p - 1} G^{(p)}_{i,j}\frac{r_i r_j}{\sigma^2} +\sum\limits_{i = 0}^{p - 1}[ G^{(p)}_{i,i}\frac{r_i(r_i - 1)}{2\sigma^2}-\log (r_i!)]$
\STATE \# Loop over all possible counts for the first $p-1$ values
\STATE $\log S \leftarrow -\infty$
\FORALL{$l = (l_0, \dots, l_{p - 2})$ s.t. $\sum_{i = 0}^{p - 2} l_{i} = m_l \le \alpha$}
\STATE \# Precompute the prefix dynamics value
\STATE $\log w_l \leftarrow \sum\limits_{i < j}^{p - 2} \mathbf{G}^{(p)}_{i,j}\frac{l_i l_j}{\sigma^2} +\sum\limits_{i = 0}^{p - 2}[ \mathbf{G}^{(p)}_{i,i}\frac{l_i(l_i - 1)}{2\sigma^2}-\log (l_i!)]$
\STATE $\texttt{states} \leftarrow \{(m_l, l) \mapsto \log w_l\}$
\FOR{$k = p - 1$ to $b - 1$}
    
    \STATE $\texttt{new\_states} \leftarrow \emptyset$
    
    \FORALL{$(m,r) \in \texttt{states}$}
        %\STATE \# $r = (r_0, \dots r_{L - 1})$ a tuple of previous counts.
        \STATE $\log w \leftarrow \texttt{states}(m,r)$
        \STATE $L \leftarrow \text{length}(r)$
        \STATE $S_1 \leftarrow 2 \sum_{i=0}^{L-1} \mathbf{G}^{(p)}_{k,\,k-L+i } \cdot r_i$
        
        \FOR{$t = 0$ to $\alpha - m$}
            \STATE
            $\Delta \leftarrow
            \dfrac{\mathbf{G}^{(p)}_{k,k} \, t(t-1) + S_1 t}{2\sigma^2}
            - \log(t!)$
            
            \STATE $m' \leftarrow m + t$

            \IF{$p = 1$}
            \STATE $r' = ()$
            \ELSIF{$L < p - 1$}
                \STATE $r' \leftarrow (r_0,\dots,r_{L-1}, t)$
            \ELSE
                \STATE $r' \leftarrow (r_1,\dots,r_{L-1}, t)$
            \ENDIF
            
            \STATE $\log w_{\text{old}} \leftarrow \texttt{new\_states}(m',r')$
            %\STATE \# LSE - Log Sum Exp
            \STATE
            $\log w_{\text{new}}
            \leftarrow
            \operatorname{LSE}(\log w, \log w_{\text{old}} + \Delta)$
            \STATE
            $\texttt{new\_states}(m',r')
            \leftarrow
            \log w_{\text{new}}$
        \ENDFOR
    \ENDFOR

    \STATE $\texttt{states} \leftarrow \texttt{new\_states}$
\ENDFOR
\STATE $\log S_l \leftarrow -\infty$

\FORALL{$(m,r) \in \texttt{states}$}
    \IF{$m = \alpha$}
        \STATE $\log S_{l,r} \leftarrow \sum\limits_{i,j=0}^{p - 2} \frac{\mathbf{G}^{(p)}_{i, b-1 -j}l_i r_{L - 1 - j}}{\sigma^2}$
        \STATE $\log S_l \leftarrow \operatorname{LSE}\bigg(\log S_l,\; \texttt{states}(m,r), \log S_{l,r}\bigg)$
    \ENDIF
\ENDFOR
\STATE $\log S \leftarrow \operatorname{LSE}(\log S,\; \log S_l)$

\ENDFOR

\STATE
$\rho \leftarrow
\dfrac{\log S + \log(\alpha!) - \alpha \log b}{\alpha - 1}
+ \dfrac{\tau \alpha}{2\sigma^2}$
\STATE \textbf{return} $\rho$
\end{algorithmic}
\end{algorithm}

\begin{algorithm}[t!]
\caption{Rényi Accountant ``add'' direction}
\label{alg:renyi_dynamic_program_remove}
\begin{algorithmic}[1]

\REQUIRE Correlation matrix $C$, bandwidth $p$, separation $b$, parameter $\alpha \in \mathbb{N}_{+}$.

\STATE $\mathbf{m}_i \leftarrow \sum_{j=0}^{k-1} |\mathbf{C}|_{:,\, i + jb}$

\STATE $\mathbf{G}_{i,j} \leftarrow \langle \mathbf{m}_i, \mathbf{m}_j \rangle \quad \forall i,j \in \{1,\dots,b\}$

\STATE $\rho \leftarrow \frac{1}{2b\sigma^2} \sum_{j = 1}^b \mathbf{G}_{j,j}
    \;+\;
    \frac{\alpha - 1}{2b^2\sigma^2} \sum_{i = 1}^b \sum_{j = 1}^b \mathbf{G}_{i,j}$
\STATE \textbf{return} $\rho$
\end{algorithmic}
\end{algorithm}

\section{Proofs for Section 3}\label{appendix:renyi_proofs}
\RenyiDivergenceBound*
\begin{proof}

By Lemma~\ref{lem:dominating_pair} $(\hat{P},\hat{Q})$ is a dominating pair for the ``add'' adjacency, where the density of $\hat{P}$ is given by
\[
p(\mathbf{x}) = \frac{1}{b}\sum_{j=1}^{b} \phi(\mathbf{x}; \mathbf{m}_j, \sigma^2 I),
\]
and the density of $\hat{Q}$ is
\[
q(\mathbf{x}) = \phi(\mathbf{x}; \mathbf{0}, \sigma^2 I),
\]
where $\phi(\mathbf{x}; \mu, \Sigma)$ denotes the density of the multivariate Gaussian distribution.

Thus, for $\alpha>1$, the Rényi divergence in the add direction is
\begin{equation}
    R_\alpha(\hat{P} || \hat{Q}) = \frac{1}{\alpha - 1} \log
    \int_{\sR^N} \frac{p(\vx)^\alpha}{q(\vx)^{\alpha-1}} \ \mathrm{d} \vx = \frac{1}{\alpha - 1} \log\mathbb{E}_{\vx \sim Q}\frac{p(\vx)^\alpha}{q(\vx)^{\alpha}}
\end{equation}
Via multinomial rule and quadratic expansion, we have
\begin{align*}
     p(\vx)^\alpha &= \left(
        \frac{1}{b}\sum_{j=1}^{b} \phi(\mathbf{x}; \mathbf{m}_j, \sigma^2 I))
    \right)^\alpha
    =
    \frac{1}{b^\alpha}
    \sum_{r_1 + \cdots + r_b = \alpha}
    \binom{\alpha}{r_1,\dots,r_b} 
    \left(\prod_{j=1}^b \phi(\mathbf{x}; \mathbf{m}_j, \sigma^2 I)^{r_j}
    \right)
    \\
    &=  \frac{1}{b^\alpha}
    \sum_{r_1 + \cdots + r_b = \alpha}
    \binom{\alpha}{r_1,\dots,r_b} \frac{1}{(2\pi\sigma^2)^{N\alpha / 2}}\exp\left(-\frac{1}{2\sigma^2}\sum\limits_{j = 1}^{b}r_j (\mathbf{x} - \vm_j )^T(\mathbf{x} - \vm_j)\right)
    \\
    &=
    \frac{1}{b^\alpha}\frac{1}{(2\pi\sigma^2)^{N\alpha / 2}}\exp\left(
        -\frac{\alpha}{2 \sigma^2} \vx^T \vx
    \right)
    \sum_{r_1 + \cdots + r_b = \alpha}
    \binom{\alpha}{r_1,\dots,r_b}\exp\left(
         \frac{1}{\sigma^2} \vx^T \left[\sum\limits_{j = 1}^{b} r_j \vm_j\right]
    \right)\\
    &\hspace{8.7cm} \times \exp \left(-\frac{1}{2\sigma^2}\sum\limits_{j = 1}^{b}r_j\vm_j^T\vm_j\right)
\end{align*}

The $q(\vx)^\alpha$ is simply given by:

\begin{equation*}
    q(\vx)^\alpha = \frac{1}{(2\pi\sigma^2)^{N\alpha / 2}}\exp\left(
        -\frac{\alpha}{2 \sigma^2} \vx^T \vx
    \right).
\end{equation*}

Then the R\'enyi divergence is computed as:
\begin{align*}
     R_\alpha(\hat{P} || \hat{Q}) = \frac{1}{1 - \alpha} \log \Bigg(\frac{1}{b^\alpha} \sum_{r_1 + \cdots + r_b = \alpha}
    \binom{\alpha}{r_1,\dots,r_b}&\exp \left(-\frac{1}{2\sigma^2}\sum\limits_{j = 1}^{b}r_j\vm_j^T\vm_j\right)\\
    &\times \mathbb{E}_{\vx \sim Q}\exp\Bigg(
         \frac{1}{\sigma^2} \vx^T \bigg[\sum\limits_{j = 1}^{b} r_j \vm_j\bigg]
    \Bigg)\Biggr)
\end{align*}

Using the moment generating function of a multivariate Gaussian, for $\mathbf{x}\sim \mathcal{N}(\mathbf{0}, \sigma^2 I)$ and any $\mathbf{t}\in\mathbb{R}^N$,
\[
\mathbb{E}\big[\exp(\mathbf{t}^\top \mathbf{x})\big]
=
\exp\!\left(\frac{\sigma^2}{2}\|\mathbf{t}\|_2^2\right),
\]
and taking $\mathbf{t} = \frac{1}{\sigma^2}\sum_{j=1}^{b}r_j\mathbf{m}_j$, we obtain
\begin{equation}
\label{eq:R_alpha_P_Q}
\begin{aligned}
    R_\alpha(\hat{P} || \hat{Q})
    = \frac{1}{1 - \alpha} \log \Biggl(\frac{1}{b^\alpha} \sum_{r_1 + \cdots + r_b = \alpha}
    \binom{\alpha}{r_1,\dots,r_b}&\exp \left(-\frac{1}{2\sigma^2}\sum\limits_{j = 1}^{b}r_j\vm_j^T\vm_j \right)\\
    &\times \exp\Bigg(
         \frac{1}{2\sigma^2} \bigg\|\sum\limits_{j = 1}^{b} r_j \vm_j\bigg\|^2_2
    \Bigg)\Biggr).
\end{aligned}
\end{equation}

We change the summation from partitions of $\alpha$ to all possible $b^{\alpha}$ tuples of $\alpha$ indices in $[1, b]$:
\begin{align*}
    R_\alpha(\hat{P} || \hat{Q})
    &= \frac{1}{1 - \alpha} \log \left(\frac{1}{b^\alpha} \sum_{(r_1, \cdots, r_b) \in [1, b]^\alpha}
    \exp \left(-\frac{1}{2\sigma^2}\sum\limits_{j = 1}^{\alpha}\vm_j^T\vm_j \right)
    \exp\left(
         \frac{1}{2\sigma^2} \left\|\sum\limits_{j = 1}^{\alpha} \vm_j\right\|^2_2
    \right)\right)\\
    &= \frac{1}{1 - \alpha} \log \left(\frac{1}{b^\alpha} \sum_{(r_1, \cdots, r_b) \in [1, b]^\alpha}
    \exp \left(\frac{1}{2\sigma^2}\sum\limits_{j_1 \ne j_2}^{\alpha}\vm_{r_{j_1}}^T\vm_{r_{j_2}} \right)\right)\\
    &=\frac{1}{1 - \alpha} \log \left(\frac{1}{b^\alpha} \sum_{(r_1, \cdots, r_b) \in [1, b]^\alpha}
    \exp \left(\frac{1}{2\sigma^2}\sum\limits_{j_1 \ne j_2}^{\alpha}\mathbf{G}_{r_{j_1}, r_{j_2}} \right)\right),
\end{align*}
where $\mathbf{G}$ denotes the Gram matrix of vectors $\vm_j$, concluding the proof.
\end{proof}

\RenyiDynamicProgram*
\begin{proof}
    
To compute the divergence efficiently using dynamic programming, we use the expression for $R_{\alpha}(\hat{P}||\hat{Q})$ from equation~\eqref{eq:R_alpha_P_Q}, substituting the Gram matrix $\mathbf{G}$:

\begin{equation}
\begin{aligned}
\label{eq:r_alpha_p_q_appendix}
     R_\alpha(\hat{P} || \hat{Q}) &= \log   \sum_{r_1 + \dots  + r_b = \alpha}
    \binom{\alpha}{r_1, r_2 \dots r_{b}}
    \exp\left(
        \frac{1 }{2 \sigma^2} \sum_{j=1}^{b}r_{j}(r_{j} - 1) \mathbf{G}_{j, j} + \frac{1}{\sigma^2}\sum_{j_1 < j_2}^{b} \mathbf{G}_{j_1, j_2}r_{j_1}r_{j_2}
    \right)\\
    & = \log   \sum_{r_1 + \dots  + r_b = \alpha}
    \frac{1}{r_1! \dots r_{b}!}
    \exp\left(
        \frac{1 }{2 \sigma^2} \sum_{j=1}^{b}r_{j}(r_{j} - 1) \mathbf{G}_{j, j} + \frac{1}{\sigma^2}\sum_{j_1 < j_2}^{b} \mathbf{G}_{j_1, j_2}r_{j_1}r_{j_2}
    \right)\\
    &\qquad + \log (\alpha!) .
\end{aligned}
\end{equation}

Next, we factor out the dependence on $r_b$, using that the matrix $\mathbf{G}$ is $p$ cyclic-banded:

\begin{equation}
\begin{aligned}
     &\log \sum\limits_{r_b = 0}^{\alpha}\Bigg[\frac{1}{r_{b}!} \exp\left(\frac{1}{2\sigma^2}(r_{b} - 1)r_b \mathbf{G}_{b, b} + \frac{1}{\sigma^2}\sum\limits_{j = b - p + 1}^{b - 1}r_{b} r_{j}\mathbf{G}_{j, b} + \frac{1}{\sigma^2}\sum\limits_{j = 1}^{\min(p - 1, b - p)}r_{b} r_{j}\mathbf{G}_{j, b}\right) \\
     &\quad \times \sum_{r_1 + \dots  + r_{b-1} = \alpha - r_b}
    \frac{1}{r_1! \dots r_{b - 1}!}
    \exp\left(
        \frac{1}{2 \sigma^2} \sum_{j=1}^{b - 1}r_{j}(r_{j} - 1) \mathbf{G}_{j, j} + \frac{1}{\sigma^2}\sum_{j_1 < j_2}^{b -1} \mathbf{G}_{j_1, j_2}r_{j_1}r_{j_2}
    \right)\Bigg] \\
    &\quad+ \log (\alpha!)
\end{aligned}
\end{equation}

This suggests a dynamic programming formulation. Define

\begin{equation}
    F(k, m, \mathbf{l}, \mathbf{r}) := \log \sum\limits_{\substack{r_1 + \dots + r_{k} = m\\ r_1, \dots r_k \ge 0\\ (r_{k - p +1}, \dots, r_{k}) = \mathbf{r}\\
    (r_1, \dots, r_{p - 1}) = \mathbf{l}\\}}\frac{1}{r_1! \dots r_{k}!}
    \exp\bigg(
        \frac{1 }{2 \sigma^2} \sum_{j=1}^{k}r_{j}(r_{j} - 1) \mathbf{G}_{j, j} + \frac{1}{\sigma^2}\sum_{j_1 < j_2}^{k} \mathbf{G}_{j_1, j_2}r_{j_1}r_{j_2}
    \bigg),
\end{equation}
where $\mathbf{l}$ is the prefix of length $p-1$ (empty if we run DP-SGD with $p=1$), and $\mathbf{r}$ is the suffix of length $p-1$ starting at position $k-p+1$.

If we compute all $O(b\alpha^{2p - 1})$ values of $F$, then the divergence is: 
\begin{equation}
\begin{aligned}
    R_\alpha(\hat{P} || \hat{Q}) &= \frac{1}{\alpha - 1} \log \sum\limits_{\|\mathbf{l}\cup \mathbf{r} \|_1 \le \alpha} \exp(F(B, \alpha, \mathbf{l}, \mathbf{r}))\exp \left(\frac{1}{\sigma^2}\sum\limits_{j_1 = 1}^{p - 1}\sum\limits_{j_2 =b - p + 2}^{b} \mathbf{G}_{j_1, j_2} l_{j_1} r_{j_2} \right)\\
    &\qquad+ \frac{\log (\alpha!)}{\alpha - 1} - \frac{\alpha \log b}{\alpha - 1}.
\end{aligned}
\end{equation}

The interaction for the indices $1\le j_1 < j_2 \le b$ are accounted in the dynamic function $F$, and to compute the whole sum we would have to account for the cyclic interaction between the last $p - 1$ indices and the first $p - 1$ indices. Note that we must fix the prefix $\mathbf{l}$; otherwise, when computing the cyclic interaction, we would obtain conflicting values for the first $p-1$ entries. Although this increases the computational complexity, we view it as an unavoidable cost of cyclic interaction.

To compute the dynamic values $F$, we use dynamic programming in a forward direction. We go in a cycle over all possible prefix values $\mathbf{l} = (r_1, \dots, r_{p - 1})$. Given a state $ (k, m, \mathbf{l}, \mathbf{r})$,
we update all successor states $(k + 1, m + r_{k+1}, \mathbf{l}, (r_{k - p + 3}, \dots, r_{k+1}))$ via
\begin{equation}
\begin{aligned}
    &F(k + 1, m + r_{k + 1}, \mathbf{l}, (r_{k - p + 3}, \dots, r_{k +1}))\\
    &\qquad\mathrel{{+}{=}} F(k, m,  \mathbf{l}, (r_{k - p + 2}, \dots r_{k})) - \log (r_{k + 1}!) + \frac{1}{2\sigma^2}(r_{k + 1} - 1)r_{k + 1} \mathbf{G}_{k + 1, k + 1}\\
    &\qquad\qquad+ \frac{1}{\sigma^2}\sum\limits_{j_1 = k - p + 2}^{k}r_{k + 1} r_{j_1}\mathbf{G}_{j_1, k + 1}.
\end{aligned}
\end{equation}

Each of $O(b \cdot \alpha \cdot \alpha^{p - 1} \cdot \alpha^{p - 1})$ states can be computed in $O(p)$ time, with $O(\alpha)$ forward transitions, resulting in overall runtime $O(bp \alpha^{2p})$. Regarding the memory requirements, we loop over the prefix values $\mathbf{l}$ and therefore do not need to store values for previously processed prefixes. Likewise, since we increase $k$ iteratively, we do not need to retain values from earlier $k$. Instead, we store all current suffixes (there are $O(\alpha^{p-1})$ of them), each with at most $O(\alpha)$ distinct partial sums, for a total of $O(\alpha^{p})$ memory. In addition, we must store the $p$-cyclic banded matrix $\mathbf{G}$, which requires $O(bp)$ memory, resulting in an overall requirement of $O(\alpha^{p} + bp)$ memory.

Algorithm~\ref{alg:renyi_dynamic_program} computes the sum in~\eqref{eq:r_alpha_p_q_appendix} via dynamic programming under the assumption that the Gram matrix $\mathbf{G}$ is $p$ cyclically banded, and otherwise adds the nonzero term $\frac{\tau\alpha}{2\sigma^2}$ to obtain an upper bound. Concretely, given a matrix $\mathbf{C}$, it first computes the Gram matrix $\mathbf{G}$, then forms the truncated $p$ cyclically banded matrix $\mathbf{G}^{(p)}$ together with the error term $\tau$.

The indices in the sum form a cyclic interaction. To compute the sum dynamically, we first break the cycle by looping over all prefixes $\mathbf{l}=(l_0,\dots,l_{p-2})$ whose total sum is less than the target sum $\alpha$. For each fixed prefix, we precompute the log sum of exponentials corresponding to the interactions among the first $p-1$ indices. We then expand the range of indices up to $b-1$ iteratively by running a forward dynamic program over $k=p-1,\dots,b-1$. At step $k$, the set of dynamic states is stored in a dictionary, where each state is identified by (i) the current suffix $\mathbf{r}$ of the previous $p-1$ values and (ii) the current partial sum $m$ of indices up to position $k-1$. From a state $(\mathbf{r},m)$ we propose a next value $t\in\{0,\dots,\alpha-m\}$, compute the contribution corresponding to interactions involving the new index $k$, and update the suffix, the partial sum, and the accumulated log value. For numerical stability, we aggregate contributions using the log-sum-exp (LSE) operation. The resulting states are stored in a temporary dictionary $\mathrm{new\_states}$, and $\mathrm{states}$ is updated after all admissible forward values have been considered. This completes the computation of all interactions between indices $0\le i<j\le b-1$ except for the cyclic terms. Finally, we close the cycle by adding the interactions between the final suffix and the fixed prefix in a last loop. Summing the resulting log values over all valid choices of the prefix $\mathbf{l}$ completes the computation.

\end{proof}

\RemoveRenyiBound*
\begin{proof}
If $(\hat{P},\hat{Q})$ is a dominating pair for the ``remove'' adjacency, then Lemma~29 of \citet{zhu2022optimal} implies that $(\hat{Q},\hat{P})$ is a dominating pair for the ``add'' adjacency.
The density of $\hat{P}$ is
\[
p(\mathbf{x}) = \frac{1}{b}\sum_{j=1}^{b} \phi(\mathbf{x}; \mathbf{m}_j, \sigma^2 I),
\]
and the density of $\hat{Q}$ is
\[
q(\mathbf{x}) = \phi(\mathbf{x}; \mathbf{0}, \sigma^2 I),
\]
where $\phi(\mathbf{x}; \mu, \Sigma)$ denotes the density of the multivariate Gaussian distribution.

Thus, for $\alpha>1$, the Rényi divergence in the remove direction is
\begin{equation}
    \mathrm{R}_{\alpha}(Q\|P)
    =
    \frac{1}{\alpha - 1}\log \int_{\mathbb{R}^{b}} \frac{q(\mathbf{x})^{\alpha}}{p(\mathbf{x})^{\alpha - 1}}\,d\mathbf{x}.
\end{equation}
The mixture density $p(\mathbf{x})$ appears in the denominator, which makes the integral difficult to compute in closed form. We therefore upper bound it.

By the arithmetic mean--geometric mean inequality (equivalently, Jensen's inequality for the exponential function),
\begin{align}
    p(\mathbf{x})
    &=
    \frac{1}{b}\sum_{j=1}^{b} \phi(\mathbf{x}; \mathbf{m}_j, \sigma^2 I)
    \;\ge\;
    \left(\prod_{j=1}^{b}\phi(\mathbf{x}; \mathbf{m}_j, \sigma^2 I)\right)^{1/b}
    \\
    &=
    \frac{1}{(2\pi \sigma^2)^{N/2}}
    \exp\!\left(
    -\frac{1}{2b\sigma^2}\sum_{j = 1}^{b}(\mathbf{x}- \mathbf{m}_j)^\top(\mathbf{x}- \mathbf{m}_j)
    \right),
\end{align}
and
\begin{equation}
    q(\mathbf{x})
    =
    \phi(\mathbf{x}; \mathbf{0}, \sigma^2 I)
    =
    \frac{1}{(2\pi \sigma^2)^{N/2}}
    \exp\!\left(-\frac{1}{2\sigma^2} \mathbf{x}^\top\mathbf{x}\right).
\end{equation}

Substituting the lower bound on $p(\mathbf{x})$ into the expression for $\mathrm{R}_{\alpha}(\hat{Q}\|\hat{P})$ yields
\begin{align}
    \mathrm{R}_{\alpha}(\hat{Q}\|\hat{P})
    &\le
    \frac{1}{\alpha - 1}
    \log\int_{\mathbb{R}^N}
    q(\mathbf{x})
    \exp\!\left(
    \frac{\alpha - 1}{2b\sigma^2}
    \sum_{j = 1}^b \bigl(-2\mathbf{x}^\top \mathbf{m}_j + \mathbf{m}_j^\top \mathbf{m}_j\bigr)
    \right)\,d\mathbf{x}
    \\
    &=
    \frac{1}{2b\sigma^2}\sum_{j = 1}^{b} \mathbf{m}_j^\top \mathbf{m}_j
    +
    \frac{1}{\alpha - 1}
    \log
    \mathbb{E}_{\mathbf{x}\sim Q}
    \exp\!\left\{
    -\frac{\alpha - 1}{b\sigma^2}
    \left(\sum_{j = 1}^{b} \mathbf{m}_j \right)^\top \mathbf{x}
    \right\}.
\end{align}

Using the moment generating function of a multivariate Gaussian, for $\mathbf{x}\sim \mathcal{N}(\mathbf{0}, \sigma^2 I)$ and any $\mathbf{t}\in\mathbb{R}^N$,
\[
\mathbb{E}\big[\exp(\mathbf{t}^\top \mathbf{x})\big]
=
\exp\!\left(\frac{\sigma^2}{2}\|\mathbf{t}\|_2^2\right),
\]
and taking $\mathbf{t} = -\frac{\alpha - 1}{b\sigma^2}\sum_{j=1}^{b}\mathbf{m}_j$, we obtain
\begin{align}
    \mathrm{R}_{\alpha}(\hat{Q}\|\hat{P})
    &\le
    \frac{1}{2b\sigma^2}\sum_{j = 1}^{b} \mathbf{m}_j^\top \mathbf{m}_j
    +
    \frac{1}{\alpha - 1}
    \cdot
    \frac{\sigma^2}{2}
    \left\|
    \frac{\alpha - 1}{b\sigma^2}
    \sum_{j = 1}^{b}\mathbf{m}_j
    \right\|_2^2
    \\
    &=
    \frac{1}{2b\sigma^2}\sum_{j = 1}^{b} \mathbf{m}_j^\top \mathbf{m}_j
    +
    \frac{\alpha - 1}{2b^2\sigma^2}
    \left\|
    \sum_{j = 1}^{b}\mathbf{m}_j
    \right\|_2^2.
\end{align}

Finally, let $G$ be the Gram matrix with entries $\mathbf{G}_{i,j} = \langle \mathbf{m}_i, \mathbf{m}_j\rangle$. Then
\[
\sum_{j=1}^{b}\mathbf{m}_j^\top\mathbf{m}_j = \sum_{j=1}^{b} \mathbf{G}_{j,j},
\qquad
\left\|\sum_{j=1}^{b}\mathbf{m}_j\right\|_2^2
=
\sum_{i=1}^{b}\sum_{j=1}^{b} \mathbf{G}_{i,j},
\]
which proves the claim.
\end{proof}

\begin{lemma}
\label{lem:sum-sharp-p-complete}
The sum
\begin{equation}
\label{eq:hard-sum}
\sum_{(r_1,\dots,r_\alpha)\in [b]^\alpha}
\exp\left(
  \sum_{\substack{j_1,j_2=1 \\ j_1\neq j_2}}^\alpha
  \frac{\mathbf{G}_{r_{j_1},r_{j_2}}}{2\sigma^2}
\right)
\end{equation}
appearing in Lemma~\ref{lem:renyi-bound} is $\#P$-complete to compute for a
general matrix $\mathbf{G}\in \mathbb{R}^{b\times b}$.
\end{lemma}

\begin{proof}
Suppose, for the sake of contradiction, that the sum in~\eqref{eq:hard-sum}
can be computed in polynomial time for an arbitrary matrix $\mathbf{G}$.

Let $H$ be a graph on $b$ vertices. We construct a matrix $\mathbf{G}$ from $H$
as follows:
\[
\mathbf{G}_{i,j}
=
\begin{cases}
0, & \text{if } i\neq j \text{ and } (i,j)\in E(H),\\
-\infty, & \text{otherwise}.
\end{cases}
\]
Then, for any tuple $(r_1,\dots,r_\alpha)\in[b]^\alpha$, the corresponding term
in~\eqref{eq:hard-sum} is equal to $1$ if the vertices
$r_1,\dots,r_\alpha$ are distinct and form an $\alpha$-clique in $H$, and is
equal to $0$ otherwise. Hence the sum in~\eqref{eq:hard-sum} counts the number
of ordered $\alpha$-cliques in $H$.

Since each unordered $\alpha$-clique contributes exactly $\alpha!$ ordered
tuples, computing~\eqref{eq:hard-sum} would allow us to count the number of
$\alpha$-cliques in $H$ in polynomial time. This is a $\#P$-complete problem.
Therefore, computing the sum in~\eqref{eq:hard-sum} is $\#P$-complete.
\end{proof}
\clearpage

\section{Proofs for Section 4}\label{appendix:proofs_conditional_composition}

In the following, we first confirm that the conditional composition~\cref{lemma:conditional_composition} immediately follows as a special case of the general conditional composition framework from~\cite{choquette2023privacy}. In the next sections, we then provide the omitted proofs for our instantiation of the conditional composition framework.

The general conditional composition result is:
\begin{theorem}[Theorem 3.1 in~\cite{choquette2023privacy}]\label{thm:conditioningtrick}
Let $\mathcal{M}_1: \mathcal{D} \rightarrow \mathcal{X}_1, \mathcal{M}_2: \mathcal{X}_1 \times \mathcal{D} \rightarrow \mathcal{X}_2, \mathcal{M}_3: \mathcal{X}_1 \times \mathcal{X}_2 \times \mathcal{D} \rightarrow \mathcal{X}_3, \ldots \mathcal{M}_N$ be a sequence of adaptive mechanisms, where each $\mathcal{M}_n$ takes a dataset in $\mathcal{D}$ and the output of mechanisms $\mathcal{M}_1, \ldots, \mathcal{M}_{n-1}$ as input. Let $\mathcal{M}$ be the mechanism that outputs $(x_1 = \mathcal{M}_1(D), x_2 = \mathcal{M}_2(x_1, D), \ldots, x_N = \mathcal{M}_N(x_1, \ldots, x_{N-1}, D))$. Fix any two adjacent datasets $D, D'$. 

Suppose there exists ``bad events'' $E_1 \subseteq \mathcal{X}_1, E_2 \subseteq \mathcal{X}_1 \times \mathcal{X}_2, \ldots... E_{N-1} \subseteq \mathcal{X}_1 \times \mathcal{X}_2 \times \ldots \times \mathcal{X}_{N-1}$ such that \[\Pr_{x \sim \mathcal{M}(D)}\left[\exists n: (x_1, x_2, \ldots x_n) \in E_n\right] \leq \delta\] and pairs of distributions $(P_1, Q_1), (P_2, Q_2), \ldots (P_N, Q_N)$ such that the PLD of $\mathcal{M}_1(D)$ and $\mathcal{M}_1(D')$ is dominated by the PLD of $P^{(1)}, Q^{(1)}$ and for any $n \geq 1$ and ``good'' output $(x_1, x_2, \ldots x_n) \notin E_n$, the PLD of $\mathcal{M}_{n+1}(x_1, \ldots, x_n, D)$ and $\mathcal{M}_{n+1}(x_1, \ldots, x_n, D')$ is dominated by the PLD of $P^{(n+1)}, Q^{(n+1)}$. Then for all $\epsilon$:

\[H_\epsilon(\mathcal{M}(D), \mathcal{M}(D')) \leq H_\epsilon\left(P^{(1)} \times P^{(2)} \times \ldots \times P^{(N)}, Q^{(1)} \times Q^{(2)} \times \ldots \times Q^{(N)}\right) + \delta.\]
\end{theorem}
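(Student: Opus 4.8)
The plan is a \emph{redirect-on-the-bad-event} coupling argument: we pay the slack $\delta$ once to pass from the true composed mechanism to a ``sanitized'' adaptive composition whose per-step conditionals are controlled at \emph{every} prefix, and then invoke the adaptive composition theorem for dominating pairs on the sanitized pair. Throughout, write $P = \mathcal{M}(D)$ and $Q = \mathcal{M}(D')$ as joint distributions on $\mathcal{X}_1 \times \cdots \times \mathcal{X}_N$; by construction, the prefix-conditional of $P$ (resp.\ $Q$) at step $n$ is the law of $\mathcal{M}_n(x_{1:n-1}, D)$ (resp.\ $\mathcal{M}_n(x_{1:n-1}, D')$), and we may work with regular conditional distributions. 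Let $\tau(x) = \min\{\, n \le N-1 : x_{1:n} \in E_n \,\}$, with $\tau(x) = \infty$ if no such $n$ exists; this is a measurable function of the prefixes. Set $A = \{\, x : \tau(x) = \infty \,\}$, so that the hypothesis reads $P(A^c) \le \delta$.

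\textbf{The sanitized pair and the $\delta$-reduction.} Define $\hat P$ and $\hat Q$ by the following common generative rule: draw $x_1$ from $\mathcal{M}_1(D)$ (resp.\ $\mathcal{M}_1(D')$); for $n = 2, \dots, N$, if $\tau(x_{1:n-1}) \le n-1$ draw $x_n$ from the fixed distribution $P^{(n)}$ (resp.\ $Q^{(n)}$), and otherwise draw $x_n$ from $\mathcal{M}_n(x_{1:n-1}, D)$ (resp.\ $\mathcal{M}_n(x_{1:n-1}, D')$). The ``switch'' depends only on the realized prefix, hence is the \emph{same} function of $x$ in both constructions. On $A$ the switch never fires, so the generative rules for $\hat P$ and $P$ (resp.\ $\hat Q$ and $Q$) coincide there, yielding the restricted-measure identities $\hat P(\,\cdot \cap A) = P(\,\cdot \cap A)$ and $\hat Q(\,\cdot \cap A) = Q(\,\cdot \cap A)$. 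Using $H_\epsilon(P\,\|\,Q) = \int (\dd P - e^\epsilon \dd Q)_+$, splitting the integral over $A$ and $A^c$, bounding the $A$-integral by the full integral and the $A^c$-integral by $P(A^c)$:
\begin{align*}
H_\epsilon(P \,\|\, Q) &= \int_A (\dd P - e^\epsilon \dd Q)_+ + \int_{A^c} (\dd P - e^\epsilon \dd Q)_+ \\
&= \int_A (\dd \hat P - e^\epsilon \dd \hat Q)_+ + \int_{A^c} (\dd P - e^\epsilon \dd Q)_+ \\
&\le H_\epsilon(\hat P \,\|\, \hat Q) + P(A^c) \;\le\; H_\epsilon(\hat P \,\|\, \hat Q) + \delta .
\end{align*}

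\textbf{Uniform domination of the sanitized conditionals, and conclusion.} Fix a step $n$ and any prefix $x_{1:n-1}$. If $\tau(x_{1:n-1}) \le n-1$, the step-$n$ conditionals of $\hat P$ and $\hat Q$ are literally $P^{(n)}$ and $Q^{(n)}$, hence trivially dominated by $(P^{(n)}, Q^{(n)})$. Otherwise $x_{1:m} \notin E_m$ for all $m \le n-1$, so $x_{1:n-1}$ is a ``good output'' and the hypothesis says that the PLD of $\mathcal{M}_n(x_{1:n-1},D)$ versus $\mathcal{M}_n(x_{1:n-1},D')$ — which is exactly the conditional of $\hat P$ versus $\hat Q$ at this prefix — is dominated by $(P^{(n)}, Q^{(n)})$; for $n=1$ this is the stated assumption on $\mathcal{M}_1$. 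Thus every step-$n$ conditional of $(\hat P, \hat Q)$ is dominated by the \emph{fixed} pair $(P^{(n)}, Q^{(n)})$ regardless of the prefix. The adaptive composition theorem for dominating pairs (see, e.g., \citet{zhu2022optimal}) then gives $(\hat P, \hat Q) \prec (P^{(1)} \times \cdots \times P^{(N)},\, Q^{(1)} \times \cdots \times Q^{(N)})$, i.e.\ $H_\epsilon(\hat P \,\|\, \hat Q) \le H_\epsilon(P^{(1)} \times \cdots \times P^{(N)} \,\|\, Q^{(1)} \times \cdots \times Q^{(N)})$ for all $\epsilon$; combining with the previous display proves the theorem.

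\textbf{Main obstacle.} The one subtle design choice is to redirect \emph{both} $P$ and $Q$ with the \emph{same} prefix-measurable rule: this is what simultaneously (a) keeps $\hat P, \hat Q$ equal to $P, Q$ on $A$, so the $\le\delta$ bad mass absorbs all discrepancy outside $A$, and (b) makes the sanitized per-step conditionals dominated at every prefix, including the bad ones, where the hypothesis gives no control over $\mathcal{M}_n$ itself. Everything else is routine bookkeeping plus the off-the-shelf adaptive composition result.
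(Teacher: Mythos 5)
Your proposal is correct. Note that the paper itself does not prove this statement: it is imported verbatim as Theorem 3.1 of \cite{choquette2023privacy}, and the paper only uses it to derive its Lemma~\ref{lemma:conditional_composition} as a special case. Your argument — couple $P=\mathcal{M}(D)$ and $Q=\mathcal{M}(D')$ with ``sanitized'' versions $\hat P,\hat Q$ that switch to the fixed pairs $(P^{(n)},Q^{(n)})$ after the first bad prefix, pay $P(A^c)\le\delta$ via the hockey-stick split over $A$ and $A^c$, and then apply adaptive composition of dominating pairs (Theorem 10 of \cite{zhu2022optimal}) to $(\hat P,\hat Q)$ — is sound at every step (the restricted-measure identity on $A$ holds because the switch rule is prefix-measurable and never fires on $A$, and the switched conditionals are trivially dominated by themselves), and it is essentially the same hybrid-mechanism argument used to prove the theorem in the cited source.
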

In our case, we do not have a mechanism, but only its dominating pair $(P, Q)$ from~\cref{lem:dominating_pair}.
However, for any $D, D'$, we can define the dummy mechanism $\mathcal{M}(D) = P$ and $\mathcal{D}' = Q$ and $M_{n}(x_1,\dots,x_n, D) = P_{x_n \mid x_{1:n-1}}$ and $M_{n}(x_1,\dots,x_n, D') = Q_{x_n \mid x_{1:n-1}}$. The special case below then follows immediately.
\conditionalcomposition*

\subsection{Dominance criterion}\label{appendix:proofs_cond_comp_algorithm}

Our dominance criterion that relates stochastic dominance between mixture-of-Gaussians mechanisms to a pointwise comparison between reverse hazard functions is derived from the following result:
\begin{restatable}{lemma}{mogweightmonotonicity}[Corollary 4.4 in~\cite{choquette2023privacy}]
    \label{lemma:stochastic_dominance_to_dp_dominance}
    Define distributions $P = \sum_{i=1}^b \mathcal{N}(\mu_i, \sigma) p(z=i)$
    and $Q = \mathcal{N}(0, \sigma)$
    with  $0 \leq \mu_b \leq \dots \leq \mu_b$.
    Let $P' = \sum_{i=1}^b \mathcal{N}(\mu_i, \sigma) p'(z=i)$.
    Then $(P, Q) \prec (P', Q)$ and $(Q, P) \prec (Q, P')$ whenever
    $p(z)$ is stochastically dominated by $p'(z)$, i.e.,
    $P(z \geq i) \leq P'(z \geq i)$ for all $i \in [b]$.
\end{restatable}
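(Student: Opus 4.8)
The plan is to derive both dominance statements directly from the variational characterization of the hockey-stick divergence, exploiting that the optimal likelihood-ratio test region of a Gaussian mixture against $\mathcal{N}(0,\sigma)$ with nonnegative means is always a half-line. Recall from \cref{definition:hockey_stick_divergence} the standard restatement: for any distributions $A,B$ and any $\gamma>0$ one has $H_\gamma(A\|B)=\sup_{S}\bigl(A(S)-\gamma B(S)\bigr)$, the supremum ranging over measurable sets and attained at the likelihood-ratio test region $S^\star=\{x:\tfrac{\dd A}{\dd B}(x)>\gamma\}$. It therefore suffices, for every $\gamma>0$, to compare $P(S)-\gamma Q(S)$ with $P'(S)-\gamma Q(S)$ over a suitable family of candidate sets $S$, and symmetrically for the ``add'' direction.

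First I would pin down the shape of the optimal region. For $P=\sum_{i=1}^b p(z{=}i)\,\mathcal{N}(\mu_i,\sigma)$ and $Q=\mathcal{N}(0,\sigma)$ the likelihood ratio is $\tfrac{\dd P}{\dd Q}(x)=\sum_{i=1}^b p(z{=}i)\,\exp\!\bigl(\tfrac{\mu_i x}{\sigma^2}-\tfrac{\mu_i^2}{2\sigma^2}\bigr)$, a nonnegative combination of functions each nondecreasing in $x$ because $\mu_i\ge 0$; hence $x\mapsto\tfrac{\dd P}{\dd Q}(x)$ is nondecreasing. Consequently $S^\star=\{x:\tfrac{\dd P}{\dd Q}(x)>\gamma\}$ is an up-set (a right half-line), and likewise $\{x:\tfrac{\dd P}{\dd Q}(x)<1/\gamma\}$, the optimal region for $H_\gamma(Q\|P)$, is a down-set; the same holds with $P'$ in place of $P$ since $p'$ also has nonnegative weights. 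Hence $H_\gamma(P\|Q)=\sup_{S\text{ up-set}}\bigl(P(S)-\gamma Q(S)\bigr)$ and $H_\gamma(Q\|P)=\sup_{S\text{ down-set}}\bigl(Q(S)-\gamma P(S)\bigr)$, and the analogous identities hold for $P'$.

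Now fix $\gamma>0$ and an up-set $S$, and set $f(i)\coloneqq\mathcal{N}(\mu_i,\sigma)(S)=\Pr[\mathcal{N}(\mu_i,\sigma^2)\in S]$. Since a Gaussian with larger mean stochastically dominates one with smaller mean, $S$ is a right half-line, and the $\mu_i$ are sorted increasingly, $f$ is nondecreasing in $i$. As $p$ is stochastically dominated by $p'$, $\mathbb{E}_{z\sim p}[f(z)]\le\mathbb{E}_{z\sim p'}[f(z)]$, i.e.\ $P(S)\le P'(S)$, so $P(S)-\gamma Q(S)\le P'(S)-\gamma Q(S)$. Taking the supremum over up-sets $S$ gives $H_\gamma(P\|Q)\le H_\gamma(P'\|Q)$; since $\gamma>0$ was arbitrary, this is precisely $(P,Q)\prec(P',Q)$.

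The ``add'' direction is symmetric. For a down-set $S$ the map $g(i)\coloneqq\mathcal{N}(\mu_i,\sigma)(S)$ is nonincreasing in $i$, so $-g$ is nondecreasing and stochastic dominance yields $\mathbb{E}_{z\sim p}[-g(z)]\le\mathbb{E}_{z\sim p'}[-g(z)]$, i.e.\ $-P(S)\le -P'(S)$; multiplying by $\gamma>0$ and adding $Q(S)$ gives $Q(S)-\gamma P(S)\le Q(S)-\gamma P'(S)$, and taking the supremum over down-sets shows $H_\gamma(Q\|P)\le H_\gamma(Q\|P')$ for all $\gamma$, i.e.\ $(Q,P)\prec(Q,P')$. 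The only substantive step — and the one I would treat most carefully — is the monotone-likelihood-ratio observation that forces the optimal test region to be a half-line; this is exactly where the hypothesis $\mu_i\ge 0$ is used, and it is what collapses the comparison to the one-dimensional stochastic-dominance inequality. Ties among the $\mu_i$ or vanishing mixture weights cause no difficulty, since only (non-strict) monotonicity is needed.
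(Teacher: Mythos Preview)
Your proof is correct. The paper itself does not prove this lemma: it is simply imported verbatim as Corollary~4.4 of \cite{choquette2023privacy} and restated in the appendix without argument, so there is no ``paper's own proof'' to compare against. Your self-contained derivation via the monotone likelihood ratio property (which forces the Neyman--Pearson region to be a half-line) together with the characterization of first-order stochastic dominance through nondecreasing test functions is exactly the natural route, and all steps are sound; the only cosmetic issue is that the stated ordering $0\le\mu_b\le\dots\le\mu_b$ is a typo for $0\le\mu_1\le\dots\le\mu_b$, which you correctly read.
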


Next, we show that if the reverse hazard function associated with mixture weights $p(z)$ is pointwise smaller than
that associated with $p(z')$, then we have stochastic dominance, i.e., the criterion in~\cref{lemma:stochastic_dominance_to_dp_dominance} is fulfilled:

\begin{restatable}{lemma}{reversehazarddominance}\label{lemma:reverse_hazard_to_stochastic_dominance}
    Given $p, p' : [b] \rightarrow [0,1] $, 
    define the corresponding reverse hazard functions at $i$ as  $\lambda_i = P(z = i \mid z \leq i)$ and $\lambda'_i = P'(z = i \mid z \leq i)$.
    If $\lambda_i \leq \lambda_i'$ for all $i \in [b]$, then  $p(z)$ is stochastically dominated by $p'$.
\end{restatable}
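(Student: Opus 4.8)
The plan is to reduce the claim to a pointwise comparison of cumulative distribution functions. Write $F(i) = P(z \le i) = \sum_{j=1}^{i} p(j)$ and $F'(i) = P'(z \le i)$, with $F(0) = F'(0) = 0$ and $F(b) = F'(b) = 1$. By the definition used in \cref{lemma:stochastic_dominance_to_dp_dominance}, the statement ``$p$ is stochastically dominated by $p'$'' means $P(z \ge i) \le P'(z \ge i)$ for all $i \in [b]$; taking complements, this is exactly $F(j) \ge F'(j)$ for all $j \in \{0, 1, \dots, b\}$. So it suffices to derive this inequality between the two CDFs from the hypothesis $\lambda_i \le \lambda_i'$.

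The key step is to note that the reverse hazard function determines the CDF through a backward recursion. Since $p(i) = F(i) - F(i-1)$, the definition $\lambda_i = p(i)/F(i) = P(z = i \mid z \le i)$ rearranges to $F(i-1) = (1 - \lambda_i)\,F(i)$ whenever $F(i) > 0$; on the zero set of $F$ we adopt the convention $\lambda_i := 0$, and then $F(i-1) = 0$ as well, so the recursion still holds. Unrolling from $F(b) = 1$ gives the product formula $F(j) = \prod_{i = j+1}^{b} (1 - \lambda_i)$, and identically $F'(j) = \prod_{i = j+1}^{b} (1 - \lambda_i')$.

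The conclusion is then immediate: each $\lambda_i, \lambda_i'$ is a conditional probability, hence lies in $[0,1]$, so $\lambda_i \le \lambda_i'$ yields $0 \le 1 - \lambda_i' \le 1 - \lambda_i$ for every $i$. Multiplying these nonnegative factors over $i = j+1, \dots, b$ preserves the inequality, giving $F(j) \ge F'(j)$ for all $j$, which is the desired stochastic dominance. I would close by remarking that this chains directly with \cref{lemma:stochastic_dominance_to_dp_dominance} to certify DP-dominance of the per-step pair $(P^{(n)}, Q^{(n)})$ over the conditional distributions.

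The one delicate point I expect to spend care on is the degenerate case where $p$ or $p'$ places zero mass on a prefix $\{1,\dots,i_0\}$, so that $\lambda_i$ is a $0/0$ expression there. The convention $\lambda_i := 0$ on the zero set is consistent with the recursion, but one must check it does not break the final comparison: if $F(i_0) = 0$ then $p(i_0 + 1) = F(i_0 + 1)$ forces $\lambda_{i_0+1} = 1$, and then $\lambda_{i_0+1} \le \lambda_{i_0+1}' \le 1$ forces $\lambda_{i_0+1}' = 1$, hence $F'(i_0) = 0$ too; both CDFs vanish on the same prefix and $F(j) \ge F'(j)$ continues to hold there. Alternatively, one can simply reduce to the full-support case by a limiting argument. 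Everything else is a single monotonicity observation on products.
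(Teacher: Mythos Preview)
Your proof is correct and follows essentially the same approach as the paper: derive the backward recursion $F(i-1) = (1-\lambda_i)F(i)$, unroll it to the product formula $F(j) = \prod_{i=j+1}^{b}(1-\lambda_i)$, and compare factors. Your added treatment of the degenerate zero-prefix case is more careful than the paper's version, which silently assumes full support.
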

\begin{proof}
    Let $F_i = P(z \leq i)$ denote the CDF of $z$. By definition, $\lambda_i = P(z=i \mid z \leq i) = P(z=i)/F_i$.
    Furthermore, $F_{i} = F_{i+1} - P(z=i+1)$.
    It thus follows that $F_{i} = F_{i+1} - P(z=i+1) = F_{i+1}(1 - \lambda_{i+1})$. Since $F_i=1$ for any $i \geq b$, recursive application of this equality yields $F_i = \prod_{j=i+1}^b (1 - \lambda_j)$.
    The condition $\lambda_j \leq \lambda'_j$ implies $(1 - \lambda_j) \geq (1 - \lambda'_j)$ for all $j$. Consequently, $F_i \geq F'_i$ for all $i$, i.e., $p(z)$ is stochastically dominated by $p'(z)$.
\end{proof}

Our result then immediately follows from combining the previous two lemmata:

\reversehazardtodominance*
\begin{proof}
    Instantiate~\cref{lemma:reverse_hazard_to_stochastic_dominance}
    with 
    $\lambda_i = P(z=i \mid z \leq i, \vy_{1:n-1})$
    and
    $\lambda'_i = P^{(n)}(z=i \mid z \leq i )$.
    Having $\forall i[b]: \lambda_i \leq \lambda'_i$ then implies
    stochastic dominance via~\cref{lemma:reverse_hazard_to_stochastic_dominance},
    i.e.
    $\forall i \in [b]: P(z \geq i \mid P(z = i \mid \vy_{1:n-1}) \leq P(z \geq i)$.
    Thus, the conditions from~\cref{lemma:stochastic_dominance_to_dp_dominance} are met,
    i.e., this stochastic dominance translated to dominance in the differential privacy sense.
\end{proof}

\iffalse
\begin{proof}
    Let $p(z)$ denote the conditional mixture weights $p(z \mid \vy_{1:n-1})$ and use $\lambda_i = P(z = i \mid i \leq i, \vy_{1:n-1})$ to denote the corresponding reverse hazard function.
    By assumption, the unconditional mixture weights we construct are $p^{(n)}(z = i) = \overline{\lambda_i} \prod_{j=i+1}^b (1 - \overline{\lambda_i})$.
    From the definition of conditional probability and the recurrence relation $F_i = \prod_{j=i+1}^b (1 - \lambda_j)$ used in our previous proof, it follows that the corresponding reverse hazard function is:
    \begin{equation*}
        P^{(n)}(z = i \mid z \leq i) = \frac{p^{(n)}(z = i)}{F_i} =
            \frac{
                \overline{\lambda_i} \prod_{j=i+1}^b (1 - \overline{\lambda_i})
            }{
                \prod_{j=i+1}^b (1 - \lambda_j)
            }
            = \overline{\lambda_i}.
    \end{equation*}
    That is, $\overline{\lambda_i}$ is exactly the reverse hazard function of $p^{(n)}$.
    By assumption, $\lambda_i \leq \overline{\lambda_i}$.
    It thus follows from~\cref{lemma:reverse_hazard_to_stochastic_dominance}
    that $p(z)$ is stochastically dominated by $p^{(n)}$.
    The result is then an immediate consequence of~\cref{lemma:stochastic_dominance_to_dp_dominance}.
\end{proof}
\fi

\subsection{Analytical bounds}
First, we prove that the reverse hazard function underlying our dominance criterion
is equivalent to a ternary form of privacy loss between Gaussian mixtures.
Then, we prove our analytical tail bounds on this privacy loss.

\reversehazardloss*
\begin{proof}
    Let $p(\vy_{1:n-1} \mid z=j) = \mathcal{N}(\vy_{1:n-1} \mid \bm{m}_{j,:n-1}, \sigma^2 \eye)$ denote the likelihood of the earlier outcome $\vy_{1:n-1}$ conditioned on mixture component $j$.
    Let $p(z=j) = \frac{1}{b}$ denote the uniform prior that arises from random allocation with $b$ batches per epoch.
    Using the definition of conditional probability and Bayes' rule, the conditional reverse hazard function is:
    \begin{align*}
        &\Pr(z=i \mid z \leq i, \vy_{1:n-1})\\
        = &\frac{p(\vy_{1:n-1} \mid z=i)  p(z=i)}{\sum_{j=1}^i p(\vy_{1:n-1} \mid z=j)  p(z=j)}\\
        =& \frac{p(\vy_{1:n-1} \mid z=i)}{\sum_{j=1}^i p(\vy_{1:n-1} \mid z=j)}\\
        = & \left(1 + \frac{\sum_{j=1}^{i-1} p(\vy_{1:n-1} \mid z=j)}{p(\vy_{1:n-1} \mid z=i)}\right)^{-1}.
    \end{align*}
    Recall that $L_i(\vy_{1:n-1}) = \log \left( \frac{\frac{1}{i-1}\sum_{j=1}^{i-1} p(\vy_{1:n-1} \mid z=j)}{p(\vy_{1:n-1} \mid z=i)} \right)$.
    Multiplying and dividing the second summand by $i-1$ yields:
    \begin{equation}
        \frac{\sum_{j=1}^{i-1} p(\vy_{1:n-1} \mid z=j)}{p(\vy_{1:n-1} \mid z=i)}
        = (i-1) \exp(L_i(\vy_{1:n-1}))
        = \exp(\log(i-1) + L_i(\vy_{1:n-1})).
    \end{equation}
    Applying the definition of the sigmoid function $s(x) = (1+e^{-x})^{-1}$, we obtain:
    \begin{equation}
        \Pr(z=i \mid z \leq i, \vy_{1:n-1}) = s\left(-\big(\log(i-1) + L_i(\vy_{1:n-1})\big)\right),
    \end{equation}
    which completes the proof.
\end{proof}

\tailamgmadd*
\begin{proof}
    This result directly follows as a special case of the general variational
    tail bound~\cref{thm:add_tail_bound}
    by defining $\vmu_i = \vm_{i,1:n-1}$ for all $i \in [b]$
    and using variational distribution $\psi \sim \mathrm{Uniform}(\{1,\dots,i-1\})$.
    In particular, this causes the term $\mathrm{KL}(\psi || \pi)$ to vanish.

    Alternatively, this result could be proven by (1) applying the arithmetic-mean geometric-mean inequality
    to the logarithm in~\cref{eq:gaussian_mixture_privacy_loss}
    to obtain a 
    pointwise lower bound,
    (2) noticing that the privacy loss between any two Gaussian mixture components is an affine function of $\vy_{:n-1}$,
    and finally
    (3) use the standard formula for the distribution of affine transformations of a multivariate Gaussians.
\end{proof}

\begin{theorem}\label{theorem:tail_amgm_remove}
    Let $\vy \sim R$ with $R = Q$ (``add')
    as defined in~\cref{lem:dominating_pair}.
    Define standard normal CDF $\Phi : \sR \rightarrow [0,1]$
    and $\psi = \mathrm{Uniform}([i-1])$
    Then $\Pr[L_i(\vy_{1:n-1}) \leq \tau_i] \leq \frac{1}{b} \sum_{k=1}^b \Phi\left(\frac{\tau_i - \nu_k}{\xi_k}\right)$ with 
    \begin{align*}
        \nu_k &= \frac{1}{\sigma^2} \vm_{k,:n-1}^T (\mathbb{E}_{j \sim \psi}[\vm_{j:n-1}] - \vm_{i,:n-1}) + \frac{1}{2\sigma^2} \left( \|\vm_{i,:n-1}\|_2^2 - \mathbb{E}_{j \sim \psi}[\|\vm_{j, :n-1}\|_2^2] \right), \\
        \xi_k &= \frac{1}{\sigma} \| \vm_{i,:n-1} - \mathbb{E}_{j \sim \psi}[\vm_{j,:n-1}] \|_2.
    \end{align*}
\end{theorem}
\begin{proof}
    This result directly follows as a special case of the general variational
    tail bound~\cref{thm:remove_tail_bound}
    by defining $\vmu_i = \vm_{i,1:n-1}$ for all $i \in [b]$
    and using variational distribution $\psi \sim \mathrm{Uniform}(\{1,\dots,i-1\})$.
    In particular, this causes the term $\mathrm{KL}(\psi || \pi)$ to vanish.
\end{proof}

\begin{remark}
    Note that since the ``remove'' bound is a weighted sum of Gaussian CDFs, we can no longer find $\tau_i$ for a desired significance analytically
via the inverse CDF.
Due to monotonicity of standard normal CDF $\Psi$, we can nevertheless find $\tau_i$ with arbitrary precision via bisection. 

\end{remark}

\subsection{Correctness of Algorithm 2}

While~\cref{section:bad_outcome_from_dominating_pairs} in its entirety already served as a proof sketch,
we now provide a formal proof for the correctness of~\cref{algorithm:conditional_composition}:
\condcompbothcorrect*
\begin{proof}
    \cref{algorithm:conditional_composition} allocates a failure probability $\beta_i$ to each component $i$ such that $\sum_{i} \beta_i = \beta$.
    For each $i$, the subroutine \texttt{whp\_lower} returns a threshold $\tau_i$ such that $\Pr[L_i(\vy_{1:n-1}) < \tau_i] \leq \beta_i$, where the probability is over the randomness of the privacy loss random variable $L_{\tilde{P},\tilde{Q},\tilde{R}}$ generated by the algorithm's reference distribution $\tilde{R}$.
    In the ``remove'' case, $\tilde{R}$ is the joint distribution $P_{\vy_{1:n-1}}$ of the first $n-1$ outcomes of $\vy \sim P$.
    In the ``add'' case, $\tilde{R}$ is the joint distribution $Q_{\vy_{1:n-1}}$ of the first $n-1$ outcomes of $\vy \sim Q$.

    Define the failure event $\overline{A} = \bigcup_i \{ L_i(\vy_{1:n-1}) < \tau_i \}$
    with $L_i$ defined in~\cref{lemma:reverse_hazard_loss}. By a union bound, $\Pr(\overline{A}) \leq \sum \beta_i = \beta$.
    
    Conditioned on the success event $A = \overline{\overline{A}}$, we have $L_i(\vy_{1:n-1}) \geq \tau_i$ for all $i$.
    Consequently, by the monotonicity of the sigmoid function and the derivation in \cref{lemma:reverse_hazard_loss}, the true conditional reverse hazard satisfies:
    \begin{equation}
        \Pr(z=i \mid z \leq i, \vy_{1:n-1}) = s(-\log(i-1) - L_i) \leq s(-\log(i-1) - \tau_i) = \overline{\lambda_i}.
    \end{equation}
    Since the means are sorted non-decreasingly (Line 3 of~\cref{algorithm:conditional_composition}), the conditions of \cref{lemma:reverse_hazard_to_stochastic_dominance} (larger reverse hazard implies dominance in the differential privacy sense) are satisfied. Thus, in the ``remove'' case the returned pair $(P^{(n)}, Q^{(n)})$ dominates the conditional distributions $P_{y_n | \tilde{y}_{1:n-1}}$
    and $Q_{y_n | \tilde{y}_{1:n-1}}$ with probability at least $1-\beta$ under $P$.
    In the ``add'' case, the returned pair $(Q^{(n)}, P^{(n)})$ dominates the conditional distributions  $Q_{y_n | \tilde{y}_{1:n-1}}$
    and $P_{y_n | \tilde{y}_{1:n-1}}$ with probability at least $1-\beta$ under $Q$.
\end{proof}

\subsection{Runtime complexity of Algorithm 2}\label{appendix:condcomp_runtime_proof}
In the following, we explain in more detail why executing~\cref{algorithm:conditional_composition} once for every iteration $n \in \{1,\dots,N\}$ leads to an overall complexity of $\mathcal{O}(N^2 b^2)$. Then, we explain why subsequent evaluations for different noise multipliers $\sigma \in sR_+$ can be done in  $\mathcal{O}(N b^2)$ (``remove'' direction) or $\mathcal{O}(N b)$ (``add'' direction).

\textbf{Initial call complexity.}
When executing~\cref{appendix:proofs_cond_comp_algorithm} for a specific iteration $n$,
we first sort the entries of $\vm_{:,n} \in \sR^b$, i.e., we have an initial cost of $\mathcal{O}(b \log b)$.
Then, we calculate one privacy loss tail bound per mixture component $i \in \{1,\dots,b\}$ using bisection with $\kappa$ steps on the analytical bound from~\cref{theorem:tail_amgm_add,theorem:tail_amgm_remove}.
The main computational cost of evaluating these bounds are the two expectations
$\mathbb{E}_{j \sim \psi^{(i)}}[\vm_{j:n-1}]$ and $\mathbb{E}_{j \sim \psi^{(i)}}[\|\vm_{j, :n-1}\|_2^2]$ 
with $\psi^{(i)} = \mathrm{Uniform}([i-1])$. Since they are weighted prefix sums of each other, all of them can be calculated jointly for all $i \in [b]$ at an overall cost of $\mathcal{O}(N b)$.
In the ``add'' case, we can then evaluate $\nu$ and $\xi$ once at a cost of $\mathcal{O}(N)$ and
then do bisection on 
$\Phi\left(\frac{\tau_i -\nu}{\xi}\right)$ at a constant cost $\mathcal{O}(\kappa)$.
In the ``remove'' case, we can evaluate all $\left(\nu_k\right)_{k=1}^b$ and $\left(\xi_k\right)_{k=1}^b$ once at a cost of $\mathcal{O}(N \cdot b)$
and then di bisection  $\frac{1}{b}\sum_{k=1}^b \Phi\left(\frac{\tau_i -\nu_k}{\xi_k}\right)$ at a cost of $\mathcal{O}(b \kappa)$.
Thus, the cost for a specific iteration $n$ is:
\begin{itemize}
    \item Add: $\mathcal{O}(b \log b) + N b + N b + b \kappa) = \mathcal{O}(N b)$,
    \item Remove:  $\mathcal{O}(b \log b) + N b + N b^2 + b^2 \kappa) = \mathcal{O}(N b^2)$.
\end{itemize}
since the number of participation patterns $b$ is never larger than the number of training steps $N$.
Multipliying by the overall number of iterations than yields $\mathcal{O}(N^2 b)$ and $\mathcal{O}(N^2 b^2)$ for the ``add'' and ``remove'' direction.

\textbf{Subsequent call complexity}
As can be seen from our previous discussion, the computational cost for each iteration $n \in [N]$ and component $i \in [b]$
is dominated by the cost of calculating 
$\nu$ and $\xi$ (``add'' case) or $\left(\nu_k\right)_{k=1}^b$ and $\left(\xi_k\right)_{k=1}^b$ (``remove direction'')
because it involves vector operations along the $n$-dimension.
However, these terms only depend on noise multiplier $\sigma$ via multiplication.
Thus, after we have computed them once, we can simply scale the existing values for any new $\sigma'$.
This reduces the runtime for a specific iteration $n$ to:
\begin{itemize}
    \item Add: $\mathcal{O}(b \log b + b \kappa) = \mathcal{O}(b \log b)$
    \item Remove: $\mathcal{O}(b \log b + b^2 \kappa) = \mathcal{O}(b^2)$
\end{itemize}
Thus, the cost for evaluation across all iterations is $\mathcal{O}(N b \log b)$ and $\mathcal{O}(N b^2)$, respectively.

\clearpage

\section{Variational tail bounds on ternary privacy loss distribution}\label{appendix:proofs_variational}

In the following, we show that we can attain a wider (and, as shown in~\cref{appendix:choice_of_variational_family} potentially tighter) range of privacy loss distribution tail bounds than the AM-GM that can be used to derive our previous reuslts.
The proof strategy is as follows: We apply Jensen's inequality / a pointwise variational bound to the privacy loss,
observe that the resultant function is an affine transformation of $\vy \sim P$ or $\vy \sim Q$, and then determine the distribution of this Gaussian or Gaussian mixture after transformation.

\begin{restatable}{lemma}{variationalgeneric}\label{lemma:variational_generic}
    Let $\tilde{P} = \sum_{j=1}^{i-1} \mathcal{N}(\vmu_j, \sigma^2 \eye) \cdot \pi_i$
    with weights $\pi_i = \frac{1}{i-1}$ and $\tilde{Q} = \mathcal{N}(\vmu_i, \sigma^2 \eye)$.
    Let $L_i(\vx) = \log( \frac{\dd \tilde{P}}{\dd \tilde{Q}})(\vx)$ be their privacy loss at $\vx$.
    Define per-component privacy losses $
    L_{i,j}(\vx) = \log\left(\frac{ \mathcal{N}(\vx \mid \vmu_j, \sigma^2 \eye)}{\mathcal{N}(\vx \mid \vmu_i, \sigma^2 \eye)}\right)$. 
    Let $\psi$ be any categorical distribution  on $[i-1]$. Then, $L_i(\vx) \geq \underline{L_i}(\vx)$ with
    \begin{equation}\label{eq:variational_bound}
        \underline{L_i}(\vx) = \mathbb{E}_{j \sim \psi}[L_{i,j}] - \mathrm{KL}(\psi || \pi).
    \end{equation}
\end{restatable}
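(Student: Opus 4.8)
The plan is to recognize \eqref{eq:variational_bound} as an evidence-lower-bound (ELBO) inequality and obtain it from a single application of Jensen's inequality, carried out pointwise in $\vx$. First I would rewrite the privacy loss as the logarithm of an expectation over the mixture index. Writing $a_j(\vx) = \mathcal{N}(\vx \mid \vmu_j, \sigma^2 \eye)$ for $j \in [i-1]$ and $a_i(\vx) = \mathcal{N}(\vx \mid \vmu_i, \sigma^2 \eye)$, and dividing numerator and denominator of $\dd\tilde P/\dd\tilde Q$ by $a_i(\vx)$, we get
\begin{equation*}
  L_i(\vx) = \log \sum_{j=1}^{i-1} \pi_j\, \frac{a_j(\vx)}{a_i(\vx)} = \log \sum_{j=1}^{i-1} \pi_j\, e^{L_{i,j}(\vx)},
\end{equation*}
where $L_{i,j}(\vx) = \log\big(a_j(\vx)/a_i(\vx)\big)$ is the per-component privacy loss. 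Since Gaussian densities are strictly positive and $\pi_j = 1/(i-1) > 0$, this quantity is finite for every $\vx$.

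Second, I would put the claimed lower bound into a matching single-sum form and apply Jensen. Using the convention $0 \log 0 = 0$ to drop the indices with $\psi_j = 0$, the definition of $\underline{L_i}$ rearranges to
\begin{equation*}
  \underline{L_i}(\vx) = \mathbb{E}_{j\sim\psi}[L_{i,j}(\vx)] - \mathrm{KL}(\psi\|\pi) = \sum_{j\,:\,\psi_j>0} \psi_j\, \log\frac{\pi_j\, e^{L_{i,j}(\vx)}}{\psi_j}.
\end{equation*}
Because $\log$ is concave and $\{\psi_j\}_{j:\psi_j>0}$ is a probability vector, Jensen's inequality yields
\begin{equation*}
  \underline{L_i}(\vx) \;\le\; \log \sum_{j\,:\,\psi_j>0} \psi_j\cdot\frac{\pi_j\, e^{L_{i,j}(\vx)}}{\psi_j} \;=\; \log \sum_{j\,:\,\psi_j>0} \pi_j\, e^{L_{i,j}(\vx)} \;\le\; \log \sum_{j=1}^{i-1} \pi_j\, e^{L_{i,j}(\vx)} \;=\; L_i(\vx),
\end{equation*}
where the last inequality adds back the dropped nonnegative terms and uses monotonicity of $\log$. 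Since this holds for every realization $\vx$, it holds for the random variable, which proves the lemma.

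The main obstacle is, frankly, minor: there is no analytic difficulty, only a bit of bookkeeping when $\psi$ has support strictly smaller than $[i-1]$, which is handled by the $0\log0$ convention and the monotonicity step above; and one should note that because $\pi$ is uniform with full support, $\mathrm{KL}(\psi\|\pi)$ is always finite so the bound is never vacuous. For the ``remove'' case treated in \cref{appendix:proofs_variational}, the only change is that $\tilde Q$ is itself a mixture, so I would first fix an envelope / reference density in the denominator and then run the identical ELBO argument; the structure of the proof is unchanged. It is also worth recording, for intuition and for later tightness discussions, that Jensen is an equality exactly when $\psi_j \propto \pi_j\, e^{L_{i,j}(\vx)}$, i.e.\ when $\psi$ is the posterior over the mixture component given $\vx$, which is what makes the variational family rich enough to recover $L_i$ and links the construction to posterior sampling.
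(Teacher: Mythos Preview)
Your proof is correct and follows essentially the same approach as the paper: both recognize the bound as the ELBO inequality, the paper citing a reference while you carry out the Jensen step explicitly and handle the zero-mass indices more carefully. One tangential slip: in your aside about the ``remove'' case, it is $\tilde R$ (the sampling distribution of $\vx$) that becomes a mixture, not $\tilde Q$; the present lemma does not involve $\tilde R$ at all, so the statement and its proof are identical in both directions.
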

\begin{proof}
    Let $p$ be the density of $P$ and $q$ be the density of $Q$.
    By definition of the privacy loss $L_i$ and the product rule for logarithms, we have
    \begin{equation*}
        L_i(\vx) = \log \left( \frac{p(\vx)}{q(\vx)} \right) =
        \log \left( \sum_{j=1}^{i-1} \mathcal{N}(\vx \mid \vmu_j, \sigma^2 \eye) \pi_i \right)
        - \log(\mathcal{N}(\vx \mid \vmu_i, \sigma^2 \eye).
    \end{equation*}
    The first term is simply the log-likelihood of a Gaussian mixture model.
    It is, quasi by definition, lower-bounded by the evidence lower bound (for the formal proof, see, e.g., Section 2.2 of~\cite{blei2017variational}).
    We thus have
    \begin{align*}
        L_i(\vx)
        & \geq \mathbb{E}_{j \sim \psi}\left[\log \left( \mathcal{N}(\vx \mid \vmu_j, \sigma^2 \eye) \right)\right] - \mathrm{KL}( \psi || \pi) - \log(\mathcal{N}(\vx \mid \vmu_i, \sigma^2 \eye)
        \\
        &
        =
         \mathbb{E}_{j \sim \psi}\left[\log \left( \mathcal{N}(\vx \mid \vmu_j, \sigma^2 \eye) \right)  - \log(\mathcal{N}(\vx \mid \vmu_i, \sigma^2 \eye) \right] - \mathrm{KL}( \psi || \pi)
        \\
        &
        =
         \mathbb{E}_{j \sim \psi}\left[ L_{i,j}(\vx)\right] - \mathrm{KL}( \psi || \pi)
         \\
         &
         = \underline{L_i}(\vx),
    \end{align*}
    where the first equality is due to linearity of expectation.
\end{proof}

Next, we use this bound on the privacy loss $L_i$ to derive a lower tail bound on $L_i(\vx)$ with $\vx \sim \tilde{R}$ where $\tilde{R}$ is an arbitrary Gaussian mixture.
The concrete instantiations for the ``remove'' and ``add'' $\tilde{R}$ from~\cref{algorithm:conditional_composition} will then follow as special cases.
\begin{lemma}\label{lemma:variational_distribution_general}
    Let $\vx \sim \tilde{R}$ with $\tilde{R} = \sum_{k=1}^K \mathcal{N}(\vrho_k, \sigma^2 \eye) \frac{1}{K}$.
    Let $\Psi = \{\psi^{(1)}, \dots, \psi^{(K)}\}$ be a collection of variational distributions on $[i-1]$.
    Consider the random variable $Y$ generated by sampling a component index $k \sim \mathrm{Uniform}([K])$, sampling $\vx \sim \mathcal{N}(\vrho_k, \sigma^2 \eye)$, and computing the component-specific lower bound $\underline{L}_i(\vx; \psi^{(k)})$ as defined in~\cref{eq:variational_bound}.
    Then $Y \sim \sum_{k=1}^K \mathcal{N}(\nu_k, \xi_k) \frac{1}{K}$ with component-specific means $\nu_k \in \sR$
    and standard deviations $\xi_k \in \sR_+$ defined by:
    \begin{align*}
        \nu_k &= \frac{1}{\sigma^2} \vrho_k^T (\mathbb{E}_{j \sim \psi^{(k)}}[\vmu_j] - \vmu_i) + \frac{1}{2\sigma^2} \left( \|\vmu_i\|_2^2 - \mathbb{E}_{j \sim \psi^{(k)}}[\|\vmu_j\|_2^2] \right) - \mathrm{KL}(\psi^{(k)} \mid\mid \pi), \\
        \xi_k &= \frac{1}{\sigma} \| \vmu_i - \mathbb{E}_{j \sim \psi^{(k)}}[\vmu_j] \|_2.
    \end{align*}
\end{lemma}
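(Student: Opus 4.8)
The plan is a direct pushforward computation. The per-component privacy loss $L_{i,j}$ between two Gaussians with a common covariance $\sigma^2\eye$ is an \emph{affine} function of $\vx$, hence so is $\underline{L}_i(\cdot\,;\psi)$ for any fixed $\psi$; since an affine functional of a Gaussian vector is a univariate Gaussian, conditioning on the mixture index of $\tilde R$ and then marginalizing yields exactly the stated Gaussian mixture.

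First I would expand the log-densities. Using $\log \mathcal{N}(\vx \mid \vmu, \sigma^2\eye) = -\tfrac{1}{2\sigma^2}\|\vx-\vmu\|_2^2 + c$, where $c$ does not depend on $\vmu$ or $\vx$, subtracting two such expressions cancels both the $\|\vx\|_2^2$ and the $c$ terms, leaving
\[
L_{i,j}(\vx) \;=\; \frac{1}{\sigma^2}\,\vx^\top(\vmu_j - \vmu_i) \;+\; \frac{1}{2\sigma^2}\bigl(\|\vmu_i\|_2^2 - \|\vmu_j\|_2^2\bigr),
\]
which is affine in $\vx$. Taking $\mathbb{E}_{j\sim\psi^{(k)}}$ (linearity of expectation, exactly as in the proof of \cref{lemma:variational_generic}) and subtracting the scalar $\mathrm{KL}(\psi^{(k)}\,\|\,\pi)$ gives $\underline{L}_i(\vx;\psi^{(k)}) = \va_k^\top\vx + d_k$, where $\va_k = \sigma^{-2}\bigl(\mathbb{E}_{j\sim\psi^{(k)}}[\vmu_j] - \vmu_i\bigr)$ and $d_k = \tfrac{1}{2\sigma^2}\bigl(\|\vmu_i\|_2^2 - \mathbb{E}_{j\sim\psi^{(k)}}[\|\vmu_j\|_2^2]\bigr) - \mathrm{KL}(\psi^{(k)}\,\|\,\pi)$.

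Next I would condition on the component index $k$ drawn in the generative process defining $Y$. Given $k$, we have $\vx\sim\mathcal{N}(\vrho_k,\sigma^2\eye)$, so $\va_k^\top\vx + d_k$ is univariate Gaussian with mean $\va_k^\top\vrho_k + d_k$ and variance $\va_k^\top(\sigma^2\eye)\va_k = \sigma^2\|\va_k\|_2^2$. Substituting $\va_k$ and $d_k$ and collecting terms, the mean equals $\nu_k$ and the variance equals $\sigma^2\sigma^{-4}\|\mathbb{E}_{j\sim\psi^{(k)}}[\vmu_j] - \vmu_i\|_2^2 = \xi_k^2$, matching the claimed standard deviation $\xi_k$. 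Hence the conditional law of $Y$ given $k$ is $\mathcal{N}(\nu_k,\xi_k)$, and since $k\sim\mathrm{Uniform}([K])$ by construction, the law of total probability gives $Y \sim \sum_{k=1}^K \tfrac1K\,\mathcal{N}(\nu_k,\xi_k)$.

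I do not expect a genuine obstacle here; the only care needed is notational bookkeeping: keeping $\psi^{(k)}$ paired with $\vrho_k$ throughout, and respecting the convention (as in \cref{theorem:variational_bound_add}) that the second argument of $\mathcal{N}(\cdot,\cdot)$ denotes the standard deviation, not the variance. The ``add'' case of \cref{algorithm:conditional_composition} is then the special case $K=1,\ \vrho_1=\vzero$ (recovering \cref{theorem:variational_bound_add} as stated), while the ``remove'' case corresponds to taking $\{\vrho_k\}$ to range over the tail means $\vmu_i,\dots,\vmu_b$, so that $\underline{L}_i$ follows a genuine mixture of univariate Gaussians; in either case a lower tail bound with significance $\beta_i$ is obtained from the (mixture-of-)Gaussian CDF, analytically when $K=1$ and by bisection otherwise.
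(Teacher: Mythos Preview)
Your proposal is correct and matches the paper's own proof essentially step for step: both compute $L_{i,j}(\vx)$ explicitly as an affine function of $\vx$, take the $\psi^{(k)}$-expectation to write $\underline{L}_i(\vx;\psi^{(k)})=\va_k^\top\vx+d_k$ (the paper calls these $\mathbf{a}_k$ and $c_k$), condition on the mixture index $k$ to read off the Gaussian mean and variance, and then marginalize over $k$. Your closing remarks on the ``add'' and ``remove'' special cases also mirror exactly how the paper subsequently uses this lemma.
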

\begin{proof}
    We first derive a bound for each component $k$. Recall that the per-component privacy loss is $L_{i,j}(\vx) = \frac{1}{\sigma^2} \vx^T (\vmu_j - \vmu_i) + \frac{1}{2\sigma^2} \left( \|\vmu_i\|_2^2 - \|\vmu_j\|_2^2 \right)$.
    For a specific mixture component $k$, we use the variational distribution $\psi^{(k)}$. By linearity of expectation, the lower bound $\underline{L}_i(\vx; \psi^{(k)}) = \mathbb{E}_{j \sim \psi^{(k)}} [L_{i,j}(\vx)] - \mathrm{KL}(\psi^{(k)} \mid\mid \pi)$ from~\cref{lemma:variational_generic} can be expressed as an affine transformation:
    \begin{equation*}
        \underline{L}_i(\vx; \psi^{(k)}) = \vx^T \underbrace{\left[ \frac{1}{\sigma^2} (\mathbb{E}_{j \sim \psi^{(k)}}[\vmu_j] - \vmu_i) \right]}_{\mathbf{a}_k} + \underbrace{\frac{1}{2\sigma^2} \left( \|\vmu_i\|_2^2 - \mathbb{E}_{j \sim \psi^{(k)}}[\|\vmu_j\|_2^2] \right) - \mathrm{KL}(\psi^{(k)} \mid\mid \pi)}_{c_k}.
    \end{equation*}
    Conditioned on the latent component $z=k$, the vector $\vx$ follows $\mathcal{N}(\vrho_k, \sigma^2 \eye)$. Thus, the conditional distribution of the lower bound is univariate Gaussian with mean $\nu_k$ and variance $\xi_k^2$.
    The mean can be calculated via:
    \begin{align*}
        \nu_k &= \mathbb{E}_{\vx \sim \mathcal{N}(\vrho_k, \sigma^2 \eye)} [\vx^T \mathbf{a}_k + c_k] = \vrho_k^T \mathbf{a}_k + c_k \\
        &= \frac{1}{\sigma^2} \vrho_k^T (\mathbb{E}_{j \sim \psi^{(k)}}[\vmu_j] - \vmu_i) + \frac{1}{2\sigma^2} \left( \|\vmu_i\|_2^2 - \mathbb{E}_{j \sim \psi^{(k)}}[\|\vmu_j\|_2^2] \right) - \mathrm{KL}(\psi^{(k)} \mid\mid \pi).
    \end{align*}
    The variance $\xi_k^2$ can be calculated via:
    \begin{align*}
        \xi_k^2 &= \mathrm{Var}(\vx^T \mathbf{a}_k + c_k \mid \vx \sim \mathcal{N}(\vrho_k, \sigma^2 \eye)) = \mathbf{a}_k^T (\sigma^2 \eye) \mathbf{a}_k \\
        &= \sigma^2 \left\| \frac{1}{\sigma^2} (\mathbb{E}_{j \sim \psi^{(k)}}[\vmu_j] - \vmu_i) \right\|_2^2 \\
        &= \frac{1}{\sigma^2} \| \mathbb{E}_{j \sim \psi^{(k)}}[\vmu_j] - \vmu_i \|_2^2.
    \end{align*}
    Taking the square root yields $\xi_k = \frac{1}{\sigma} \| \vmu_i - \mathbb{E}_{j \sim \psi^{(k)}}[\vmu_j] \|_2$. Since the mixture component $k$ is chosen uniformly at random with probability $1/K$, the marginal distribution is the mixture $\frac{1}{K} \sum_{k=1}^K \mathcal{N}(\nu_k, \xi_k^2)$.
\end{proof}
Applying this result to the ``remove'' case, we have:
\begin{theorem}\label{thm:remove_tail_bound}
    Let $\vx \sim \tilde{R}$ with $\tilde{R} = \frac{1}{b} \sum_{k=1}^b \mathcal{N}(\vmu_k, \sigma^2 \eye)$.
    Let $\Psi = \{\psi^{(1)}, \dots, \psi^{(b)}\}$ be a collection of variational distributions on $[i-1]$.
    For any threshold $\tau \in \sR$, the left-tail probability of the privacy loss random variable 
    $L_i(\vx)$ with $L_i$ defined in~\cref{lemma:reverse_hazard_loss} is bounded by:
    \begin{equation}
        \Pr_{\vx \sim \tilde{R}}[L_i(\vx) \leq \tau] \leq \frac{1}{b} \sum_{k=1}^b \Phi\left( \frac{\tau - \nu_k}{\xi_k} \right),
    \end{equation}
    where $\Phi$ is the cumulative distribution function of the standard normal distribution, and per-component parameters are:
    \begin{align*}
        \nu_k &= \frac{1}{\sigma^2} \vmu_k^T (\mathbb{E}_{j \sim \psi^{(k)}}[\vmu_j] - \vmu_i) + \frac{1}{2\sigma^2} \left( \|\vmu_i\|_2^2 - \mathbb{E}_{j \sim \psi^{(k)}}[\|\vmu_j\|_2^2] \right) - \mathrm{KL}(\psi^{(k)} \mid\mid \pi), \\
        \xi_k &= \frac{1}{\sigma} \| \vmu_i - \mathbb{E}_{j \sim \psi^{(k)}}[\vmu_j] \|_2.
    \end{align*}
\end{theorem}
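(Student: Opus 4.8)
\textbf{Proof of \cref{thm:remove_tail_bound}.}

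The plan is to decompose $\tilde{R}$ into its latent mixture structure, bound $L_i$ from below by a per-component variational surrogate via \cref{lemma:variational_generic}, and then read off the surrogate's conditional law from \cref{lemma:variational_distribution_general}. First I would write $\vx \sim \tilde{R}$ as a two-stage sampling procedure: draw a component index $z \sim \mathrm{Uniform}([b])$ and then $\vx \mid (z = k) \sim \mathcal{N}(\vmu_k, \sigma^2 \eye)$. By the law of total probability,
\[
\Pr_{\vx \sim \tilde{R}}[L_i(\vx) \le \tau] = \frac{1}{b}\sum_{k=1}^b \Pr\!\left[L_i(\vx) \le \tau \;\middle|\; z = k\right].
\]

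Next, fix a component $k$ and apply \cref{lemma:variational_generic} with the variational distribution $\psi^{(k)}$. That lemma yields the pointwise inequality $L_i(\vx) \ge \underline{L}_i(\vx; \psi^{(k)})$, valid for \emph{every} $\vx$, so in particular $\{L_i(\vx) \le \tau\} \subseteq \{\underline{L}_i(\vx; \psi^{(k)}) \le \tau\}$ as events, and hence
\[
\Pr\!\left[L_i(\vx) \le \tau \mid z = k\right] \le \Pr\!\left[\underline{L}_i(\vx; \psi^{(k)}) \le \tau \mid z = k\right].
\]
Using a distinct $\psi^{(k)}$ for each component is legitimate precisely because the bound in \cref{lemma:variational_generic} holds for an arbitrary categorical $\psi$. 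Now, conditioned on $z = k$ we have $\vx \sim \mathcal{N}(\vmu_k, \sigma^2 \eye)$, which is exactly the setting of \cref{lemma:variational_distribution_general} with $\vrho_k = \vmu_k$: the computation there shows that $\underline{L}_i(\vx; \psi^{(k)})$, being an affine image of the Gaussian $\vx$, is itself Gaussian with mean $\nu_k$ and variance $\xi_k^2$ as stated in the theorem. Consequently $\Pr[\underline{L}_i(\vx; \psi^{(k)}) \le \tau \mid z = k] = \Phi\!\left((\tau - \nu_k)/\xi_k\right)$, and substituting this into the total-probability expansion above gives the claimed bound.

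\textbf{Main obstacle.} There is no genuinely hard step here: the proof is a chaining of the two variational lemmas with a conditioning argument. The only points needing care are (i) justifying that the per-component choice of $\psi^{(k)}$ is permissible, which follows since \cref{lemma:variational_generic} is a pointwise statement over all $\vx$ for any fixed $\psi$; and (ii) the degenerate case $\xi_k = 0$, which occurs when $\vmu_i = \mathbb{E}_{j \sim \psi^{(k)}}[\vmu_j]$ and $\underline{L}_i(\cdot; \psi^{(k)})$ is deterministic; there $\Phi\!\left((\tau - \nu_k)/\xi_k\right)$ is read as the indicator $\mathbbm{1}[\tau \ge \nu_k]$ and the inequality still holds. \qed
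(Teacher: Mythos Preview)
Your proof is correct and follows essentially the same approach as the paper: decompose via the law of total probability over the mixture components of $\tilde{R}$, apply the pointwise variational lower bound from \cref{lemma:variational_generic} with a component-specific $\psi^{(k)}$, and invoke \cref{lemma:variational_distribution_general} with $\vrho_k = \vmu_k$ to identify the conditional law of $\underline{L}_i(\vx;\psi^{(k)})$ as $\mathcal{N}(\nu_k,\xi_k^2)$. Your added remarks on the legitimacy of the per-component choice and the degenerate case $\xi_k = 0$ are sound and go slightly beyond what the paper spells out.
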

\begin{proof}
    By the Law of Total Probability, $\Pr_{\vx \sim \tilde{R}}[L_i(\vx) \leq \tau] = \frac{1}{b} \sum_{k=1}^b \Pr_{\vx \sim \mathcal{N}(\vmu_k, \sigma^2 \eye)} [L_i(\vx) \leq \tau]$.
    Since $L_i(\vx) \geq \underline{L}_i(\vx; \psi^{(k)})$, we have the bound
    $\Pr[L_i(\vx) \leq \tau] \leq \Pr[\underline{L}_i(\vx; \psi^{(k)}) \leq \tau]$ for each component $k$.
    Applying \cref{lemma:variational_distribution_general} with $\vrho_k = \vmu_k$, the variable $\underline{L}_i(\vx; \psi^{(k)})$ conditioned on component $k$ follows $\mathcal{N}(\nu_k, \xi_k)$.
    Consequently, each term in the sum is exactly $\Phi\left( \frac{\tau - \nu_k}{\xi_k} \right)$.
\end{proof}
For the ``add'' case, we observe that the lower bound on our privacy loss is a single, univariate Gaussian:
\begin{restatable}{lemma}{variationalboundadd}\label{theorem:variational_bound_add}
        Let $\vx \sim \tilde{R}$ with $\tilde{R} = \mathcal{N}(\vzero, \sigma^2 \eye)$.
        Then $\underline{L_i}(\vx) \sim \mathcal{N}(\nu, \xi)$ with mean and standard deviation
        \begin{align*}
            \nu &= \left(||\vmu_i||_2^2 - \mathbb{E}_{j \sim \psi}[||\vmu_j||_2^2]\right) \mathbin{/} (2 \sigma^2) - \mathrm{KL}(\psi || \pi), \\
            \xi &=   \left\lvert\left\lvert \vmu_i - \mathbb{E}_{j \sim \psi}[\vmu_j] \right\rvert\right\rvert_2 \mathbin{/} \sigma .
        \end{align*}
\end{restatable}
\begin{proof}
    This immediately follows as a special case of~\cref{lemma:variational_distribution_general} with $\vrho_k = \vzero$ and $K=1$.
\end{proof}
Naturally, this result directly implies the following analytical tail bound for the ``add'' case:
\begin{theorem}\label{thm:add_tail_bound}
    Let $\vx \sim \tilde{R}$ with $\tilde{R} = \mathcal{N}(\vzero, \sigma^2 \eye)$.
    Let $\psi$ be an  arbitrary variational distribution on $[i-1]$.
    For any threshold $\tau \in \sR$, the left-tail probability of the privacy loss random variable 
    $L_i(\vx)$ with $L_i$ defined in~\cref{lemma:reverse_hazard_loss} is bounded by:
    \begin{equation}
        \Pr_{\vx \sim \tilde{R}}[L_i(\vx) \leq \tau] \leq \Phi\left( \frac{\tau - \nu}{\xi} \right),
    \end{equation}
    where $\Phi$ is the cumulative distribution function of the standard normal distribution, with parameters
    \begin{align*}
        \nu &= \frac{1}{2\sigma^2} \left( \|\vmu_i\|_2^2 - \mathbb{E}_{j \sim \psi}[\|\vmu_j\|_2^2] \right) - \mathrm{KL}(\psi \mid\mid \pi), \\
        \xi &= \frac{1}{\sigma} \| \vmu_i - \mathbb{E}_{j \sim \psi}[\vmu_j] \|_2.
    \end{align*}
\end{theorem}
\begin{remark}[Handling Identical Components]
    In structured matrix mechanisms, multiple batches often share the same mixture mean, i.e., $\vmu_j = \vmu_i$. For these components, the privacy loss $L_{i,j}(\vx)$ is identically zero for all $\vx$. Including them in the variational mean $\nu$ is suboptimal as it pulls the lower bound toward zero. We handle this by setting the variational weights $\psi_j = 0$ for all such indices. This effectively pulls the zero-loss components out of the expectation and into the KL divergence term, where they contribute a constant offset to the mean $\nu$.
\end{remark}

\clearpage

\section{Evaluation of variational bounds}\label{appendix:choice_of_variational_family}
In the following, we conduct an initial exploration of whether and how the variational privacy loss tail bounds from~\cref{appendix:proofs_variational} may improve upon the basic AM-GM tail bounds from~\cref{appendix:proofs_conditional_composition}.
In particular, we consider different natural variational families and empirically compare them in terms of tightness of their reverse hazard bounds and bounds on the final privacy profile after conditional composition.

Recall that we need to find a lower tail bound on the ternary privacy loss  random variable $L_{\tilde{P}, \tilde{Q}, \tilde{R}}$,
which is defined as $L_i(\vx)$ with $L_i(\vx) = \log\left(\frac{\dd \tilde{P}}{\dd \tilde{Q}}\right)(\vx)$ and $\vx \sim \tilde{R}$,
where $\tilde{P}$ is a uniform Gaussian mixture with means $\vmu_1,\dots,\vmu_{i-1}$,
$\tilde{Q}$ is a single Gaussian with mean $\vmu_i$,
and $\tilde{R}$ is either:
\begin{itemize}
    \item $\sum_{k=i}^{b} \mathcal{N}(\mmu_k, \sigma^2 \eye_{n-1}) \cdot \frac{1}{b}$ (``remove'' case),
    \item or $\mathcal{N}(\vzero, \sigma^2 \eye_{n-1})$ (``add'' case), which is equivalent to the trivial mixture
    $\sum_{k=i}^{b} \mathcal{N}(\vzero, \sigma^2 \eye_{n-1}) \cdot \frac{1}{b}$.
\end{itemize}
Let $\vrho_k \in \sR^{n-1}$ denote an arbitrary component mean of $\tilde{R}$ (either $\vmu_k$ or the all-zeros vector $\vzero$).
From~\cref{lemma:variational_distribution_general,thm:remove_tail_bound,thm:add_tail_bound}, we know that the tail probability conditioned on component $k$ is upper-bounded via
\begin{equation}\label{eq:single_component_tail_bound}
    \Pr_{\vx \sim \mathcal{N}(\vrho_k, \sigma^2 \eye)}[L_i(\vx) \leq \tau] \leq \Phi\left( \frac{\tau - \nu_k}{\xi_k} \right)
\end{equation}
with scalar mean $\nu_k \in \sR$ and scalar standard deviation $\xi_k \in \sR_+$
with
\begin{align}
    \nu_k &= \frac{1}{\sigma^2} \vmu_k^T (\mathbb{E}_{j \sim \psi^{(k)}}[\vmu_j] - \vmu_i) + \frac{1}{2\sigma^2} \left( \|\vmu_i\|_2^2 - \mathbb{E}_{j \sim \psi^{(k)}}[\|\vmu_j\|_2^2] \right) - \mathrm{KL}(\psi^{(k)} \mid\mid \pi),
    \label{eq:variational_per_component_mean}
    \\
    \xi_k &= \frac{1}{\sigma} \| \vmu_i - \mathbb{E}_{j \sim \psi^{(k)}}[\vmu_j] \|_2,
    \label{eq:variational_per_component_std}
    \\
    \pi &= \mathrm{Uniform}(\{1,\dots,i-1\}),
\end{align}
for any variational distribution $\psi^{(k)}$ over mixture indices $\{1,\dots, i-1\}$.
Therefore, we can define an entire variational family $\Psi^{(k)}$ and choose a $\psi^{(k)} \in \Psi^{(k)}$ that leads to the tightest tail bound.

In principle, we could let $\Psi^{(k)}$ be the set of all distributions supported on $\{1,\dots, i-1\}$ and explicitly optimize $\psi^{(k)}$ (e.g., via gradient descent, which would be an interesting direction for future work).
However, for $k \in \sN$ epochs and $b \in \sN$ batches per epoch and thus $N = k \cdot b$, we would need to solve
$N \cdot b^2$ such optimization problems (one for each step $n$, each privacy loss $L_i$ with $i \in [b]$, and  each component $k \in [b]$ of $\tilde{R})$. 
We therefore consider the following heuristically chosen variational families instead:

\subsection{Option 1: Maximizing the expectation based on KL divergence}
A heuristic for tightening the bound in~\cref{eq:single_component_tail_bound}
is by maximizing the mean $\nu_k$.
In turn, a heuristic for doing so is to eliminate the  KL-divergence term in~\cref{eq:variational_per_component_mean}
by letting
\begin{equation*}
    \psi^{(k)} = \pi = \mathrm{Uniform}(\{1,\dots, i-1\}.
\end{equation*}
This corresponds to taking the geometric mean of privacy losses, which we already used in deriving our R\'enyi divergence bound from~\cref{lem:remove-renyi-bound}.

\subsection{Option 2: Maximizing the expectation based on component distance}
Another heuristic for maximizing the mean is to apply quadratic expansion, i.e., 
\begin{equation*}
    \nu_k = - \frac{1}{2 \sigma^2} \mathbb{E}_{j \sim \psi^{(k)}}\left[||\vrho_k - \vmu_j||_2^2\right]
    + \frac{||\vrho_k - \vmu_i||_2^2}{2\sigma^2} - \mathrm{KL}(\psi^{(k)} || \pi),
\end{equation*}
and notice that the second term does not depend on variational distribution $\psi^{(k)}$.
Instead of minimizing the KL-divergence, we could instead define $\psi^{(k)}$ to minimize the distance in the first term, i.e., $\psi^{(k)}(z=j) = 1 \iff j = \mathrm{argmin} ||\rho_k - \vmu_j||_2^2$.
To smoothly interpolate between this solution and minimization of the KL divergence, we can define
the variational family as
$\Psi^{(k)} = \{\psi(t, \vrho_k, (\vmu_1,\dots,\vmu_{i-1}\}) \mid t \in T\}$ with
\begin{equation*}
    \psi(t, \vrho_k, (\vmu_1,\dots,\vmu_{i-1})\}
    = \mathrm{softmax}\left( \left\{- \frac{||\vrho_k - \vmu_j||_2^2}{t } \middle| 1 \leq j < i\right\} \right)
\end{equation*}
for some fixed set of non-negative temperatures $T \subset \sR_+$.

\subsection{Option 3: Minimizing the standard deviation based on component distance}\label{appendix:option_3_variational}
Another heuristic for tightening the tail bound in~\cref{eq:single_component_tail_bound}
is by minimizing the standard deviation $\nu_k$, which makes the argument of Gaussian CDF $\Phi$ smaller for any $\tau \leq \nu_k$, i.e., any tail probability smaller than $0.5$.
Analogous to the previous approach, we can define
$\Psi^{(k)} = \{\psi(t, \vmu_i, (\vmu_1,\dots,\vmu_{i-1})\} \mid t \in T\}$ with
\begin{equation*}
    \psi(t, \vmu_i, (\vmu_1,\dots,\vmu_{i-1})\}
    = \mathrm{softmax}\left( \left\{- \frac{||\vmu_i - \vmu_j||_2^2}{t } \middle| 1 \leq j < i\right\} \right)
\end{equation*}
for some fixed set of non-negative temperatures $T \subset \sR_+$.
Notice that, unlike the previous approach, this variational family is independent of the component mean $\vrho_k$, i.e.,
we use the same variational family across all components of $\tilde{R}$.

\subsection{Our choice.}
In our experiments, we opt for Option 3, since it only requires evaluation of a linear number of distances $||\vmu_i - \vmu_j||$,
and this computational cost is shared across all $K$ components of our tail bound.
We additionally include $\tau = \infty$, which corresponds to the KL-minimizing uniform distribution from Option 1.
For our experiments, we specifically use temperatures $T = \{\infty\} \cup \{10^{-1}, 10^{-0.5}, 1, 10^{0.5}, 10^{1}\}$, which we found to improve tightness of our accountant compared to Option 1 ($T = \{\infty\}$) alone, i.e., the geometric mean.
Furthermore, we empirically did not find Option 2 to offer any increased tightness.
\cref{fig:experiment_variational_family,fig:variational_bound_balation} demonstrates this through an empirical comparison across different matrix factorizations and noise levels in the multi-epoch setting,
visualizing both the reverse hazard function and the resultant privacy profiles.

\begin{figure}[h]
    \centering

    \begin{subfigure}[t]{0.45\linewidth}
        \centering
        \includegraphics[width=\linewidth]{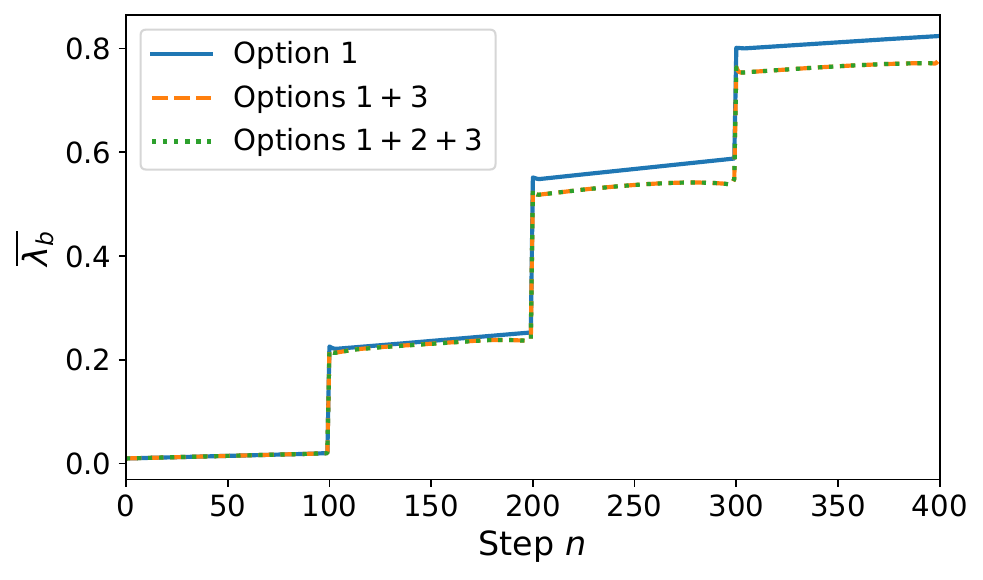}
        \subcaption{BSR, $N=400,\ k=4,\ p=4,\ \sigma=2$}
    \end{subfigure}\hfill
    \begin{subfigure}[t]{0.45\linewidth}
        \centering
        \includegraphics[width=\linewidth]{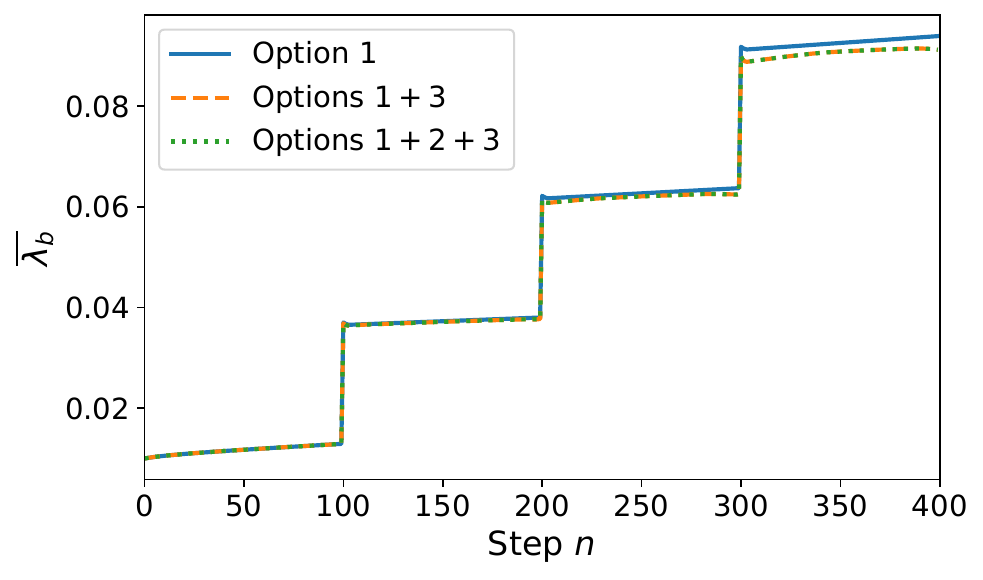}
        \subcaption{BSR, $N=400,\ k=4,\ p=4,\ \sigma=5$}
    \end{subfigure}\hfill
    
    \begin{subfigure}[t]{0.45\linewidth}
        \centering
        \includegraphics[width=\linewidth]{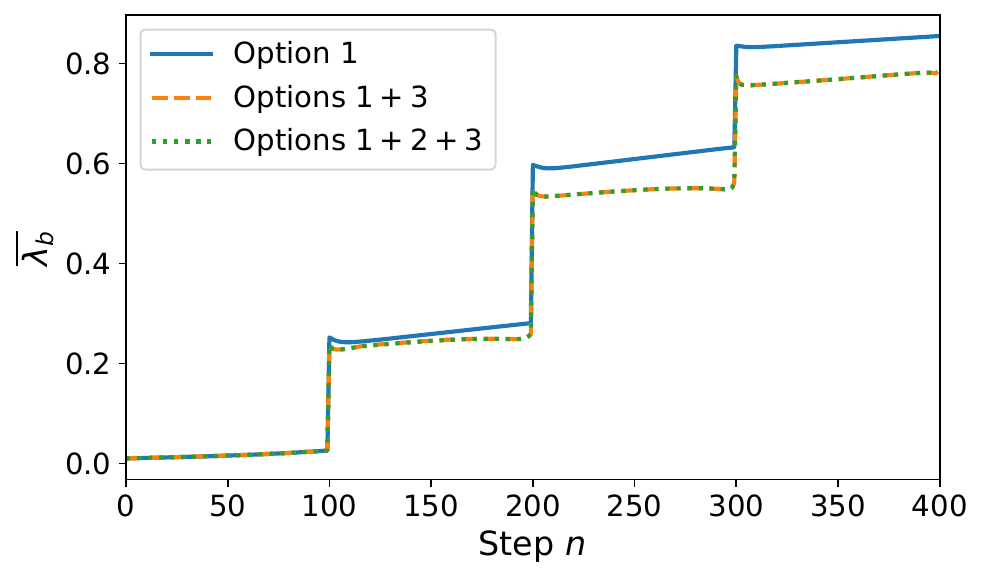}
        \subcaption{BISR, $N=400,\ k=4,\ p=4,\ \sigma=2$}
    \end{subfigure}\hfill
    \begin{subfigure}[t]{0.45\linewidth}
        \centering
        \includegraphics[width=\linewidth]{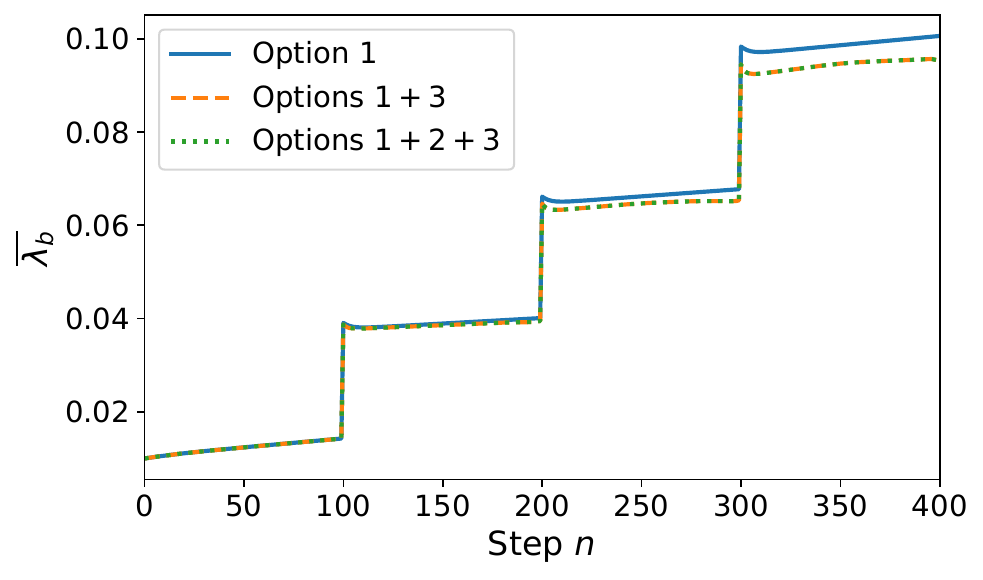}
        \subcaption{BISR, $N=400,\ k=4,\ p=4,\ \sigma=5$}
    \end{subfigure}\hfill
    \caption{Comparison of the largest mixture component's posterior probability $\overline{\lambda_b}$  (smaller is better)
    between variational families 
    for different factorizations and noise multipliers $\sigma$ under $\delta_E = 10^{-5}$ and the ``remove'' relation.
    In later epochs, minimizing variance of the tail bound (``Options 1+3'') improves upon only taking the geometric mean 
    via a uniform distribution (``Option 1'').
    The effect is more pronounced in settings where  the noise multiplier is small and sensitivity is large ($\sigma=2$, BISR),
    compared to settings where the noise multiplier is large and the overall sensitivity is small ($\sigma=5$, BSR).
    Maximizing the mean (``Option 2'') does not further improve the posterior bound.
    }
    \label{fig:experiment_variational_family}
\end{figure}

\begin{figure*}[h!]
    \centering
    \begin{subfigure}[t]{0.45\linewidth}
        \centering
        \includegraphics[width=\linewidth]{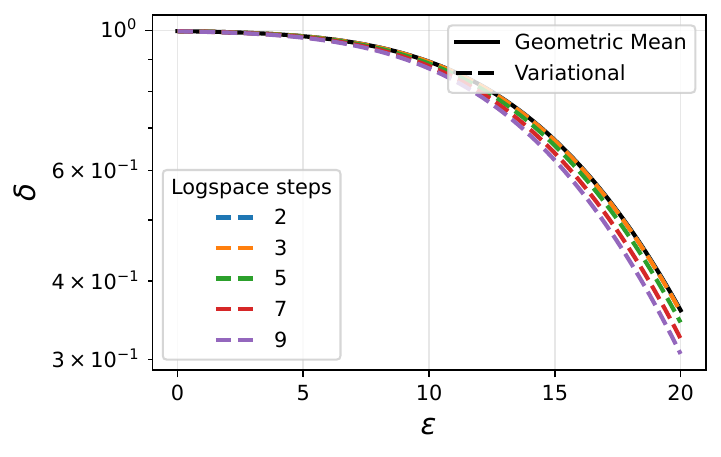}
        \subcaption{BSR ($N=100, k=4, p=4$), $\sigma=2$}
    \end{subfigure}\hfill
    \begin{subfigure}[t]{0.45\linewidth}
        \centering
        \includegraphics[width=\linewidth]{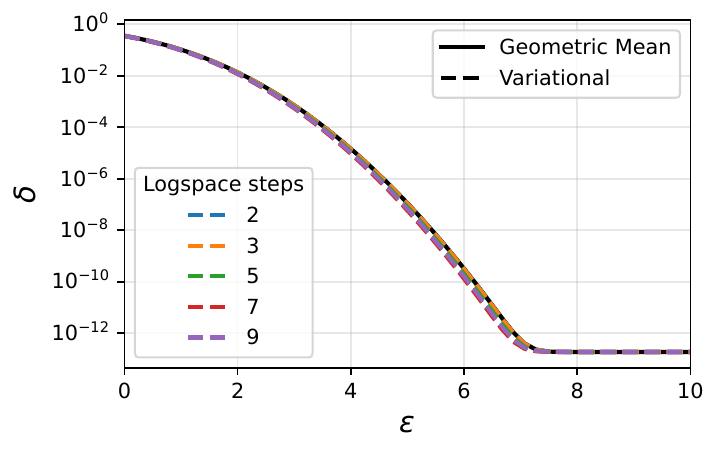}
        \subcaption{BSR ($N=100, k=4, p=4$), $\sigma=4$}
    \end{subfigure}
    \\
    \begin{subfigure}[t]{0.45\linewidth}
        \centering
        \includegraphics[width=\linewidth]{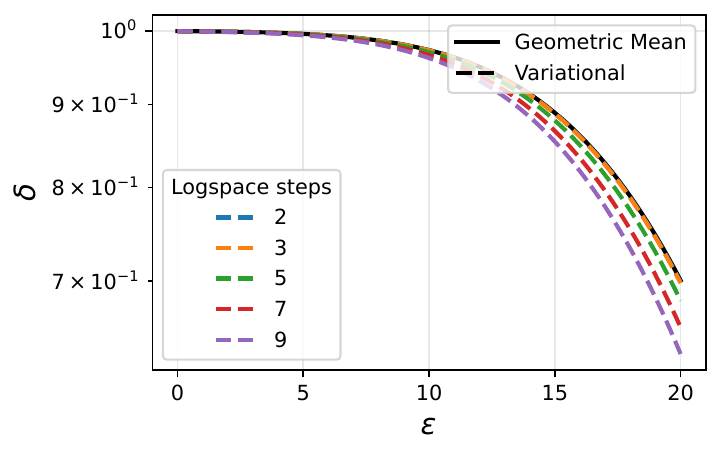}
        \subcaption{BISR ($N=100, k=4, p=4$), $\sigma=2$}
    \end{subfigure}\hfill
    \begin{subfigure}[t]{0.45\linewidth}
        \centering
        \includegraphics[width=\linewidth]{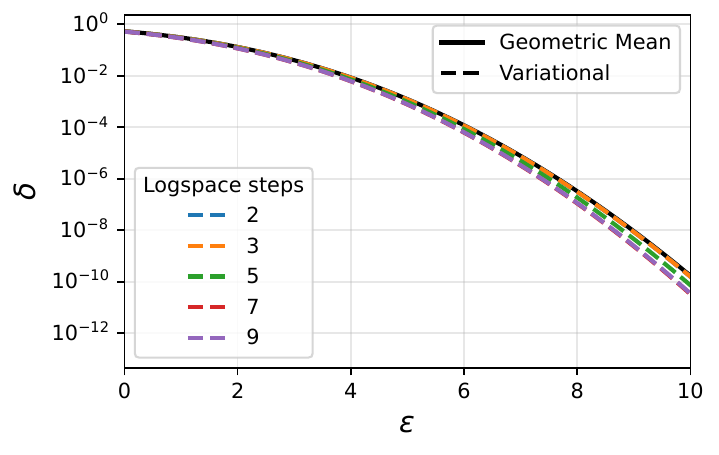}
        \subcaption{BISR ($N=100, k=4, p=4$), $\sigma=2$}
    \end{subfigure}
    \caption{Ablation of using the proposed variational family with temperature range $T \in \{\infty\} \cup \mathrm{np.logspace}(-2,2,\mathrm{steps})$ instead of only using the geometric mean ($T = \infty$).
    Using multiple temperatures leads to an improvement in privacy parameter $\delta$, especially for small noise multiplier $\sigma$ / large privacy parameter $\epsilon$.
    }
    \label{fig:variational_bound_balation}
\end{figure*}

\subsection{Remark on Jumps in Reverse Hazard in Multi-Epoch Setting}\label{appendix:reverse_hazard_jumps}
At first, the jumps in our reverse hazard upper bound, which can be observed in~\cref{fig:experiment_variational_family} may seem abnormal or like an implementation error.
However, this is not the case. Instead, these jumps explain why we see a degradation in bound tightness in the multi-epoch setting, especially for DP-SGD. 

Consider DP-SGD with $k$ epochs, $b$ batches per epoch, and $N = k \cdot b$ iterations overall. From the definition of our dominating pair in~\cref{lem:dominating_pair}, we know that the mixture means $\mm \in \sR^{k \cdot N}$ are simply
\begin{equation*}
    \mm = \begin{bmatrix}
        \eye_{b}\\
        \eye_{b}\\
        \cdots\\
        \eye_{b}
    \end{bmatrix},
\end{equation*}
i.e., a vertical stack of identity matrices, each with shape $b \times b$.
Consider an arbitrary step $n \in [N]$.
Recall from~\cref{algorithm:conditional_composition} that, to find our bound $\overline{\lambda_b}$ on the reverse hazard of the largest mixture component, we derive a lower tail bound on the ternary privacy loss $L_{\tilde{P}, \tilde{Q},\tilde{R}} = \log\left(\frac{\dd \tilde{P}}{\tilde{Q}}\right)(\vx)$ with $\vx \sim \tilde{R}$. Here, $\tilde{P}$ is a uniform mixture with means $\vmu_1, \dots, \vmu_{b-1} \in \{0,1\}^{n-1}$ taken from $\mm_{:, 1:n-1}$.
The distribution $\tilde{Q}$ is a single Gaussian with mean $\vmu_b \in \{0,1\}^{n-1}$.
In epoch $e = (n \mod b) + 1$, the single mean $\vmu_b$ of $\tilde{Q}$ has $\max\{e-1, 0\}$ non-zero entries.
Similarly, each mean $\vmu_j$ of $\tilde{P}$ has either $e$ or $\max\{e-1,0\}$ non-zero entries in distinct dimensions.
Thus, the pairwise distance between $\vmu_i$ and any $\vmu_j$ fulfills
\begin{equation*}
    ||\vmu_j - \vmu_i||_2 \in \{\sqrt{2e-1}, \sqrt{2e-2}\}.
\end{equation*}
Thus, the privacy loss increases from epoch to epoch.
In~\cref{fig:posterior_jumps}, we empirically verify this effect by taking $\SI{100000}{}$ samples from $L_{\tilde{P}, \tilde{Q}, \tilde{R}}$
for DP-SGD at the end and start of each epoch for $k=4$, $b=100$, $\sigma=5$.
We observe that the sample histogram of the privacy loss distribution significantly widens when transitioning from one epoch to the next.

\begin{figure}[H]
    \centering

    \begin{subfigure}[t]{0.3\linewidth}
        \centering
        \includegraphics[width=\linewidth]{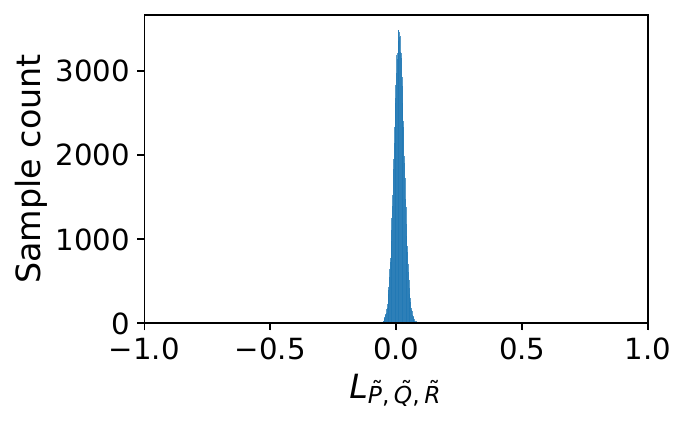}
        \subcaption{Step $100$}
    \end{subfigure}\hfill
    \begin{subfigure}[t]{0.3\linewidth}
        \centering
        \includegraphics[width=\linewidth]{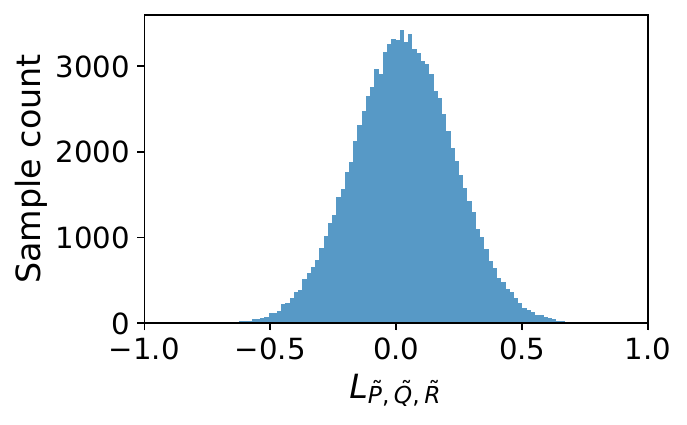}
        \subcaption{Step $200$}
    \end{subfigure}\hfill
    \begin{subfigure}[t]{0.3\linewidth}
        \centering
        \includegraphics[width=\linewidth]{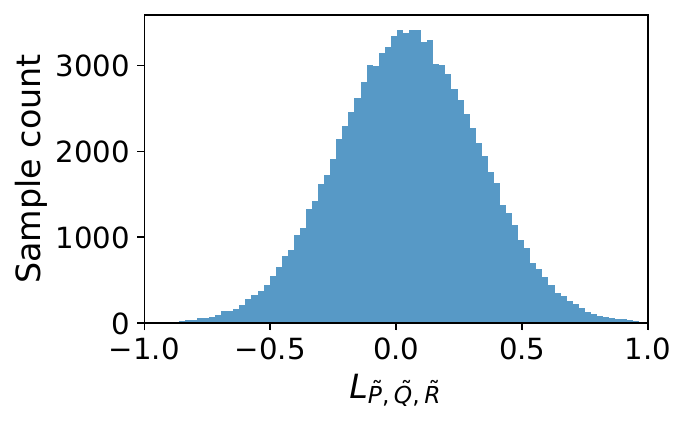}
        \subcaption{Step $300$}
    \end{subfigure}
    
    \begin{subfigure}[t]{0.3\linewidth}
        \centering
        \includegraphics[width=\linewidth]{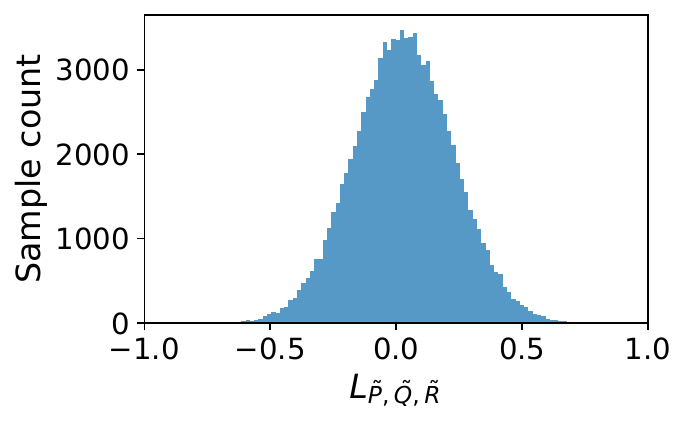}
        \subcaption{Step $101$}
    \end{subfigure}\hfill
    \begin{subfigure}[t]{0.3\linewidth}
        \centering
        \includegraphics[width=\linewidth]{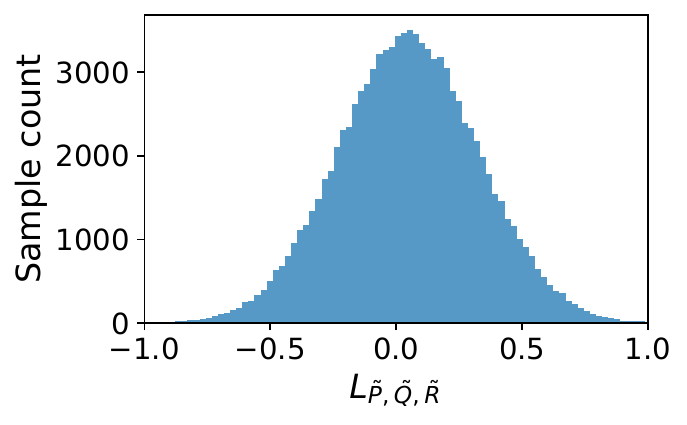}
        \subcaption{Step $201$}
    \end{subfigure}\hfill
    \begin{subfigure}[t]{0.3\linewidth}
        \centering
        \includegraphics[width=\linewidth]{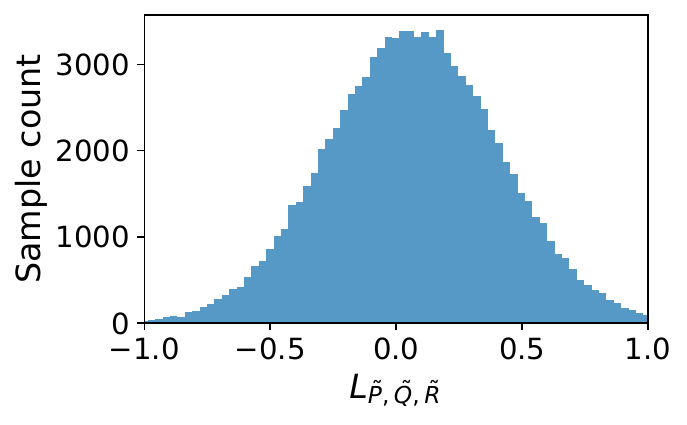}
        \subcaption{Step $301$}
    \end{subfigure}

    \caption{Sample histogram of the ternary privacy loss $L_{\tilde{P},\tilde{Q},\tilde{R}}$, whose lower tail is used to find reverse hazard upper bound $\overline{\lambda_b}$, shown at transitions between epochs.
    Samples are taken for DP-SGD with $k=4$ epochs, $b=100$ batches per epoch, and noise multiplier $\sigma=5$ for the ``remove'' relation.
    The histogram widens between epochs, which explains observed jumps in the reverse hazard bound.}
    \label{fig:posterior_jumps}
\end{figure}

\clearpage

\section{Add and Remove Direction Dominating Pairs}\label{appendix:add_remove_flip}
Recall that we assume the following dominating pair:
\fulldominatingpair*
That is:
\begin{itemize}[nosep]
    \item In the ``remove'' case, $P$ is a Gaussian mixture and $Q$ is a single multivariate Gaussian.
    \item In the ``add'' case $P$ is a single Gaussian and $Q$ is a Gaussian mixture.
\end{itemize}
In the original Lemma 3.2 of~\cite{choquette2024near}, the directions are reversed. However, this appears to be a typing error.
Their proof for the ``add'' case actually argues correctly that the first argument of the hockey-stick divergence should be a zero-mean  Gaussian and the second argument should be a Gaussian mixture (note that in their proof $\vz$ is a zero-mean Gaussian):
``We give the proof for the \underline{add adjacency} [...]
Because each user is assigned to their batch independently, we can assume without loss of generality that contributions from all users other than the differing user are always $0$. In more detail, by post-processing, we can assume that we release the contributions to the input matrix of all examples except the differing user's. Let $\vx$ be these contributions, and $\vx'$ be $\vx$ plus the contribution of the differing user. Then distinguishing $\mC \vx + \vz$ and $\mC \vx' + \vz$ is equivalent to \underline{distinguishing $\vz$ and $\mC (\vx' - \vx) + \vz$} [ ...]``.

Further note that the same pattern of $P$ being a mixture under ``remove'' and $Q$ being a mixture under ``add'' arises in various other subsampling schemes, such as Poisson subsampling or subsampling without replacement (see, e.g., Theorem 11 in~\cite{zhu2022optimal}).

\clearpage

\section{Broader impact}\label{appendix:broader_impact}
Our work is primarily targeted at mitigating negative societal impact of machine learning by providing formal privacy protection.
However, it shares various potential risks, including risks for misuse, with other works.
Firstly, differential privacy is an inherently probabilistic notion of privacy, but may be misrepresented as providing qualitatively identical results to deterministic cryptographic methods (e.g., encryption).
Furthermore, differentially private machine learning in practice is usually conducted in settings with $\delta < |S|^{-1}$ but $\epsilon > 1$~\cite{ponomareva2023dp}.
Here, differential privacy allows for relatively comparison of privacy between models, but does not provide strong privacy protection in absolute sense~\cite{ponomareva2023dp}. Nevertheless, the claim that a model was trained with some form of differential privacy may be used to incorrectly privacy-wash publicly released methods.
This is compounded by the fact that numerical privacy parameters $(\epsilon,\delta)$ are not easy to interpret by non-experts.
Another issue is that, even if an algorithm is formally proven to fulfill certain privacy guarantees, 
it's practical implementations may violate them due to implementation errors or numerical issues.
This may affect matrix mechanisms in particular, since they add additional complication compared to uncorrelated noise (DP-SGD) model training.
A potential way to mitigate this risk is through privacy auditing methods (see~\cite{namatevs2025privacy} for an overview). But these may themselves be affected by implementation issues.

%%%% DO NOT REMOVE CHECLIST

%\newpage
%\input{checklist.tex}

\end{document}